\documentclass[11pt]{article}

\usepackage{amsthm}
\usepackage{amsmath}
\usepackage{amssymb}
\usepackage{graphicx}
\usepackage{mathtools}
\usepackage{enumerate}
\usepackage{enumitem}
\usepackage{footnote}
\usepackage{float}
\usepackage{xspace}
\usepackage{multirow}
\usepackage{nicefrac}
\usepackage{wrapfig}
\usepackage{framed}
\usepackage{url}
\usepackage[colorlinks=true, linkcolor=blue, citecolor=blue]{hyperref}
\usepackage{blkarray}
\PassOptionsToPackage{round}{natbib}

\usepackage{tikz}
\usetikzlibrary{arrows,chains,matrix,positioning,scopes}

\let\hat\widehat
\let\tilde\widetilde


\newtheorem{lemma}{Lemma}
\newtheorem{theorem}{Theorem}
\newtheorem{cor}{Corollary}

\newtheorem{assumption}{Assumption}

\DeclareMathOperator*{\minimize}{minimize}
\DeclareMathOperator*{\maximize}{maximize}
\DeclareMathOperator*{\subject}{subject~to}

\newcommand{\calA}{{\mathcal{A}}}

\newcommand{\calS}{{\mathcal{S}}}
\newcommand{\calI}{{\mathcal{I}}}

\newcommand{\calF}{{\mathcal{F}}}
\newcommand{\calV}{{\mathcal{V}}}

\newcommand{\one}{\boldsymbol{1}}

\DeclareMathOperator*{\argmin}{argmin}
\DeclareMathOperator*{\argmax}{argmax}

\newcommand{\norm}[1]{\left\|{#1}\right\|}


\newcommand{\inner}[2]{\left\langle #1,#2 \right\rangle}

\newcommand{\rbr}[1]{\left(#1\right)}
\newcommand{\sbr}[1]{\left[#1\right]}
\newcommand{\cbr}[1]{\left\{#1\right\}}

\newcommand{\abr}[1]{\left|#1\right|}

\DeclareFontFamily{OMX}{MnSymbolE}{}
\DeclareFontShape{OMX}{MnSymbolE}{m}{n}{
    <-6>  MnSymbolE5
   <6-7>  MnSymbolE6
   <7-8>  MnSymbolE7
   <8-9>  MnSymbolE8
   <9-10> MnSymbolE9
  <10-12> MnSymbolE10
  <12->   MnSymbolE12}{}
\DeclareSymbolFont{mnlargesymbols}{OMX}{MnSymbolE}{m}{n}
\SetSymbolFont{mnlargesymbols}{bold}{OMX}{MnSymbolE}{b}{n}
\DeclareMathDelimiter{\llangle}{\mathopen}{mnlargesymbols}{'164}{mnlargesymbols}{'164}
\DeclareMathDelimiter{\rrangle}{\mathclose}{mnlargesymbols}{'171}{mnlargesymbols}{'171}

\usepackage{prettyref}

\newcommand{\savehyperref}[2]{\texorpdfstring{\hyperref[#1]{#2}}{#2}}
\newrefformat{eq}{\savehyperref{#1}{\textup{(\ref*{#1})}}}
\newrefformat{eqn}{\savehyperref{#1}{Equation~\eqref{#1}}}
\newrefformat{lem}{\savehyperref{#1}{Lemma~\ref*{#1}}}
\newrefformat{def}{\savehyperref{#1}{Definition~\ref*{#1}}}
\newrefformat{line}{\savehyperref{#1}{line~\ref*{#1}}}
\newrefformat{thm}{\savehyperref{#1}{Theorem~\ref*{#1}}}
\newrefformat{corr}{\savehyperref{#1}{Corollary~\ref*{#1}}}
\newrefformat{cor}{\savehyperref{#1}{Corollary~\ref*{#1}}}
\newrefformat{sec}{\savehyperref{#1}{Section~\ref*{#1}}}
\newrefformat{app}{\savehyperref{#1}{Appendix~\ref*{#1}}}
\newrefformat{assum}{\savehyperref{#1}{Assumption~\ref*{#1}}}
\newrefformat{ex}{\savehyperref{#1}{Example~\ref*{#1}}}
\newrefformat{fig}{\savehyperref{#1}{Figure~\ref*{#1}}}
\newrefformat{alg}{\savehyperref{#1}{Algorithm~\ref*{#1}}}
\newrefformat{rem}{\savehyperref{#1}{Remark~\ref*{#1}}}
\newrefformat{conj}{\savehyperref{#1}{Conjecture~\ref*{#1}}}
\newrefformat{prop}{\savehyperref{#1}{Proposition~\ref*{#1}}}
\newrefformat{proto}{\savehyperref{#1}{Protocol~\ref*{#1}}}
\newrefformat{prob}{\savehyperref{#1}{Problem~\ref*{#1}}}
\newrefformat{claim}{\savehyperref{#1}{Claim~\ref*{#1}}}
\newrefformat{que}{\savehyperref{#1}{Question~\ref*{#1}}}
\usepackage{microtype}
\usepackage{graphicx}
\usepackage{subfigure}
\usepackage{booktabs} 
\usepackage[colorinlistoftodos]{todonotes}
\usepackage{multicol}
\usepackage{float}
\newcommand{\DefinedAs}[0]{\mathrel{\mathop:}=}
\usepackage{algorithm}
\usepackage[noend]{algpseudocode}


\newenvironment{manualtheorem}[1]{%
	\manualtheoreminner
}{\endmanualtheoreminner}

\newenvironment{manuallemma}[1]{%
	\manuallemmainner
}{\endmanuallemmainner}

\textheight 8.8truein
\parskip 0.1in
\topmargin -0.5truein
\textwidth 6.5truein
\oddsidemargin -0.05in
\evensidemargin -0.05in
\setcounter{footnote}{0}
\sloppy

\title{\huge Provably Efficient Safe Exploration \\via Primal-Dual Policy Optimization}
\author{Dongsheng~Ding,\quad Xiaohan~Wei,\quad Zhuoran~Yang,\quad Zhaoran~Wang,\quad Mihailo~R.~Jovanovi\'{c}
	\thanks{D.\ Ding, and M.\ R.\ Jovanovi\'{c} are with the Ming Hsieh Department of Electrical and Computer Engineering, University of Southern California, Los Angeles, CA 90089.
		X.\ Wei is with Facebook Inc., Menlo Park, CA 94025. 
		Z.\ Yang is with the Department of Operations Research and Financial Engineering, Princeton University, Princeton, NJ 08544.
		Z.\ Wang is with the Department of Industrial Engineering and Management Sciences, Northwestern University, Evanston, IL 60208.
		{E-mails: dongshed@usc.edu, xiaohanw@usc.edu, zy6@princeton.edu, zhaoran.wang@northwestern.edu, mihailo@usc.edu.}
}
}

\begin{document}
\maketitle
\begin{abstract}
  	We study the Safe Reinforcement Learning (SRL) problem using the Constrained Markov Decision Process (CMDP) formulation in which an agent aims to maximize the expected total reward subject to a safety constraint on the expected total value of a utility function. We focus on an episodic setting with the function approximation where the Markov transition kernels have a linear structure but do not impose any additional assumptions on the sampling model. Designing SRL algorithms with provable computational and statistical efficiency is particularly challenging under this setting because of the need to incorporate both the safety constraint and the function approximation into the fundamental exploitation/exploration tradeoff. To this end, we present an \underline{O}ptimistic \underline{P}rimal-\underline{D}ual Proximal Policy \underline{OP}timization (OPDOP) algorithm where the value function is estimated by combining the least-squares policy evaluation and an additional bonus term for safe exploration. We prove that the proposed algorithm achieves an $\tilde{O}(d H^{2.5}\sqrt{T})$ regret and an $\tilde{O}(d H^{2.5}\sqrt{T})$ constraint violation, where $d$ is the dimension of the feature mapping, $H$ is the horizon of each episode, and $T$ is the total number of steps. These bounds hold when the reward/utility functions are fixed but the feedback after each episode is bandit. Our bounds depend on the capacity of the state-action space only through the dimension of the feature mapping and thus our results hold even when the number of states goes to infinity. To the best of our knowledge, we provide the first provably efficient online policy optimization algorithm for CMDP with safe exploration in the function approximation setting.
\end{abstract}


\section{Introduction}
\label{intro}

Reinforcement Learning (RL) studies how an agent learns to maximize its expected total reward by interacting with an unknown environment over time~\cite{sutton2018reinforcement}. Safe RL (SRL) involves extra restrictions arising from real-world problems~\cite{garcia2015comprehensive,amodei2016concrete,dulac2019challenges}. Examples include collision-avoidance in self-driving cars~\cite{fisac2018general}, switching cost limitations in medical applications~\cite{bai2019provably}, and legal and business restrictions in financial management~\cite{abe2010optimizing}. A standard environment model for SRL is the Constrained Markov Decision Process (CMDP)~\cite{altman1999constrained} that extends the classical MDP by adding an extra safety-related utility, and translates the safety demand into a constraint on the expected total utility~\cite{achiam2017constrained}. The presence of safety constraints makes the celebrated exploration-exploitation trade-off more challenging. 

Many existing SRL algorithms for CMDPs are policy-based, and primal-dual type constrained optimization methods are commonly used to deal with policy constraints, e.g., Constrained Policy Gradient (CPG)~\cite{uchibe2007constrained}, Lagrangian-based Actor-Critic (AC)~\cite{borkar2005actor,bhatnagar2012online,chow2017risk,tessler2018reward,liang2018accelerated}, Primal-Dual Policy Optimization (PDPO)~\cite{paternain2019constrained,paternain2019safe,stooke2020responsive}, Constrained Policy Optimization (CPO)~\cite{achiam2017constrained,yang2019projection}, and Reward Constrained Policy Optimization (RCPO)~\cite{tessler2018reward}. These SRL algorithms either do not have a theoretical guarantee or can only be shown to converge asymptotically in the batch offline setting. It is imperative to study theoretical guarantees for SRL algorithms regarding computational/statistical efficiency.

In this work, we look at a more challenging problem of finding a sequence of policies in response to streaming samples of reward functions, constraint functions, and transition. We seek to provide theoretical guarantees on the convergence rate of approaching the best-fixed policy in hindsight as well as feasibility region formed by constraints. Unlike scenarios where the safety constraint is known {\em a priori}~\cite{turchetta2016safe,berkenkamp2017safe,dalal2018safe,wachi2018safe,chow2018lyapunov,chow2019lyapunov,wachi2020safe}, we have to explore the unknown environment and adapt the policy to the safety constraint with the minimal violation~\cite{amodei2016concrete,garcia2015comprehensive}. We refer this task as {\em safe exploration}. Recent policy-based SRL algorithms for CMDPs, e.g., CPO~\cite{achiam2017constrained,yang2019projection} and PDPO~\cite{paternain2019constrained,paternain2019safe}, seek a single safe policy via the constrained policy optimization whose sample efficiency guarantees over streaming or time-varying data are largely unknown. In this paper, we aim to answer a theoretical question:

\noindent\textbf{Can we design a provably sample efficient online policy optimization algorithm for CMDP?}

\noindent\textbf{Contribution.} We propose a provably efficient SRL algorithm for CMDP with an unknown transition model in the linear episodic setting -- the \underline{O}ptimistic \underline{P}rimal-\underline{D}ual Proximal Policy \underline{OP}timization (OPDOP) algorithm -- where the value function is estimated by combining the least-squares policy evaluation and an additional bonus term for safe exploration. Theoretically, we prove that the proposed algorithm achieves an $\tilde{O}(d H^{2.5}\sqrt{T})$ regret and the same $\tilde{O}(d H^{2.5}\sqrt{T})$ constraint violation, where $d$ is the dimension of the feature mapping, $H$ is the horizon of each episode, and $T$ is the total number of steps. We establish these bounds in the setting where the reward/utility functions are fixed but the feedback after each episode is bandit. Our bounds depend on the capacity of the state space only through the dimension of the feature mapping and thus hold even when the number of states goes to infinity. To the best of our knowledge, our result is the first provably efficient online policy optimization for CMDP with safe exploration in the function approximation setting.



\noindent\textbf{Related Work.} Our work is related to a line of provably efficient RL algorithms based on linear function approximation where the exploration is achieved by adding an Upper Confidence Bound (UCB) bonus~\cite{yang2019sample,yang2019reinforcement,jin2019provably}. 
A closely-related recent work is Proximal Policy Optimization (PPO)~\cite{schulman2017proximal}. As is shown in~\cite{liu2019neural,cai2019provably}, PPO converges to the optimal policy sublinearly and an optimistic variant of PPO is sample efficient with UCB exploration in the linear setting. However, such results only hold for unconstrained RL problems. Our work seeks to extend an optimistic variant of PPO for CMDP with UCB exploration and establish theoretical efficiency guarantees. For the large CMDP with unknown transition models, there is a line of works that relates to the policy optimization under constraints, e.g., CPG~\cite{uchibe2007constrained}, CPO~\cite{achiam2017constrained,yang2019projection}, RCPO~\cite{tessler2018reward}, and IPPO~\cite{liu2019ipo}. However, their theoretical guarantees are still lacking. By contrast, our work is supported by theoretical efficiency guarantees.

Recent results on learning CMDPs with unknown transitions and rewards are closely related. As we know, most of them are model-based and only apply for finite state-action spaces. The works~\cite{singh2020learning,efroni2020exploration} independently leverage upper confidence bound (UCB) on fixed reward, utility, and transition probability to propose sample-efficient algorithms for tabular CMDPs;~\cite{singh2020learning} obtains an ${O}(\sqrt{|\mathcal A|T^{1.5}\log T})$ regret and constraint violation via linear program in the average cost setup in time $T$;~\cite{efroni2020exploration} obtains an ${O}(|\mathcal S|\sqrt{|\mathcal S||\mathcal A|H^4T})$ regret and constraint violation in the episodic setting via linear program and primal-dual policy optimization, where $\mathcal S$ is a state space and $\mathcal A$ is an action space. In~\cite{qiu2020upper}, the authors study an adversarial stochastic shortest path problem under constraints and unknown transitions with ${O}(|\mathcal S|\sqrt{|\mathcal A| H^2T})$ regret and constraint violation. The work \cite{bai2020model} extends Q-learning with optimism for finite state-action CMDPs with peak constraints. The work~\cite{brantley2020constrained} proposes UCB-based convex planning for episodic tabular CMDPs in dealing with convex or hard constraints. In contrast, our proposed algorithm can potentially be applied to infinite state space scenarios with sublinear regret and constraint violation bounds only depending on the implicit dimension as opposed to the true dimension of the state space. 

\section{Problem Setup}
\label{prelim}

We consider an episodic Markov decision process (MDP) -- MDP$(\calS,\calA,H,\mathbb{P},r)$ -- where $\calS$ is a state space, $\calA$ is an action space, $H$ is a fixed length of each episode, $\mathbb{P} = \{\mathbb{P}_h\}_{1}^H$ is a collection of transition probability measures, and $r=\{r_h\}_{h\,=\,1}^{H}$ is a collection of reward functions. We assume that $\calS$ is a measurable space with possibly infinite number of elements. Moreover, for each step $h\in[H]$, $\mathbb{P}_h(\, \cdot \, \vert\, x,a)$ is a transition kernel over next state if action $a$ is taken for state $x$ and $r_h$: $\calS\times\calA\to[0,1]$ is a reward function. The constrained MDP -- $\text{CMDP}(\calS,\calA,H,\mathbb{P},r,g)$ -- also contains utility functions $g=\{g_h\}_{h\,=\,1}^H$ where $g_h$: $\calS\times\calA\to[0,1]$ is the utility function. We assume that reward/utility are deterministic. Our analysis readily generalizes to the setting where reward and utility are random. 

Let the policy space $\Delta(\calA\,\vert\, \calS, H)$ be $\{\{\pi_h(\,\cdot\,\vert\, \cdot\,)\}_{h\,=\,1}^H$: $\pi_h(\,\cdot\,\vert\, x) \in\Delta(\calA),~ \forall x\in\calS\text{ and }h\in[H]\}$, where $\Delta(\calA)$ denotes a probability simplex over the action space.
Let $\pi^k\in\Delta (\calA \, \vert \, \calS,H)$ be a policy taken by the agent at episode $k$, where $\pi_h^k (\, \cdot \, \vert \,x_h^k)$: $\calS \to \calA$ is the action that the agent takes at state $x_h^k$. For simplicity, we assume the initial state $x_1^k$ to be fixed as $x_1$ in different episodes; it readily generalizes to be random according to some distribution. The agent interacts with the environment in the $k$th episode as follows. At the beginning, the agent determines a policy $\pi^k$. Then, at each step $h\in[H]$, the agent observes the state $x_h^k\in\calS$, determines an action $a_h^k$ following the policy $\pi_h^k( \, \cdot \, \vert\, x_h^k)$, and receives a reward $r_h (x_h^k,a_h^k)$ together with an utility $g_h(x_h^k,a_h^k)$. Meanwhile, the MDP evolves into next state $x_{h+1}^k$ drawing from the probability measure $\mathbb{P}_h (\, \cdot \, \vert\,  x_h^k,a_h^k) $. The episode terminates at state $x_{H+1}^k$; when this happens, no control action is taken and both reward and utility functions are equal to zero. In this paper, we focus a challenging bandit setting where the agent only observes the values of reward/utility functions, $r_h(x_h^k,a_h^k)$, $g_h(x_h^k,a_h^k)$, at visited state-action pair $(x_h^k,a_h^k)$; see examples in~\cite{schell2016data,el2018controlled,paternain2019safe}. We assume that reward/utility functions are fixed over episodes.

Given a policy $\pi\in\Delta(\calA \, \vert \, \calS, H)$, the value function $V_{r,h}^{\pi}$ associated with the reward function $r$ at each step $h$ are the expected values of total rewards,
\[
V_{r,h}^{\pi}(x) \;=\; \mathbb{E}_\pi \sbr{\,\sum_{i \, = \, h}^{H}r_{i}(x_i,a_i ) \,\big\vert\, x_h =x\,}
\]
for all $x\in\calS$, $h\in[H]$, where the expectation $\mathbb{E}_\pi$ is taken over the random state-action sequence $\{(x_h,a_h)\}_{h\,=\,i}^H$; the action $a_h$ follows the policy $\pi_h( \, \cdot \, \vert x_h)$ at the state $x_h$ and the next state $x_{h+1}$ follows the transition dynamics $\mathbb{P}_h( \, \cdot \, \vert x_h,a_h)$. Thus, the action-value function $Q_{r,h}^{\pi}(x,a)$: $\calS\times\calA\to\mathbb{R}$ associated with the reward function $r$ is the expected value of total rewards when the agent starts from state-action pair $(x,a)$ at step $h$ and follows policy $\pi$,
\[
Q_{r,h}^{\pi}(x,a) \;=\; \mathbb{E}_\pi \sbr{\,\sum_{i\,=\,h}^{H}r_i(x_i,a_i ) \,\big\vert\,  x_h =x,a_h=a\,}
\] 
for all $(x,a)\in\calS\times\calA$ and $h\in[H]$. Similarly, we define the value function $V_{g,h}^{\pi}$: $\calS\to\mathbb{R}$ and the action-value function $Q_{g,h}^{\pi}(x,a)$: $\calS\times\calA\to\mathbb{R}$ associated with the utility function $g$.
Denote symbol $\diamond=r$ or $g$. For brevity, we take the shorthand $\mathbb{P}_h V_{\diamond,h+1}^{\pi}(x,a) \DefinedAs \mathbb{E}_{x'\sim\mathbb{P}_h(\,\cdot\,\vert\, x,a)}V_{\diamond,h+1}^{\pi}(x')$. The Bellman equations associated with a policy $\pi$ are given by
\begin{equation}\label{eq.bellman}
\begin{array}{c}
Q_{\diamond,h}^{\pi}(x,a) \;=\; \big(\diamond_h\,+\,\mathbb{P}_h V_{\diamond,h+1}^{\pi}\big)(x,a)
\end{array}
\end{equation}
where $V_{\diamond,h}^{\pi}(x) = \big\langle{Q_{\diamond,h}^{\pi}\rbr{x,\,\cdot\,}},{\pi_h(\,\cdot\,\vert\, x)}\big\rangle_\calA$, for all $\rbr{x,a}\in\calS\times\calA$. Fix $x\,\in\,\calS$, the inner product of a function $f$: $\calS\times\calA\to\mathbb{R}$ with $\pi(\,\cdot\,\vert\, x)\in\Delta(\calA)$ is $\inner{f(x,\,\cdot\,)}{\pi(\,\cdot\,\vert\, x)}_\calA=\sum_{a\,\in\,\calA}\langle {f(x,a)},{\pi(a\,\vert\, x)}\rangle$.


\textbf{Learning Performance}. The learning agent aims to find a solution of a constrained problem in which the objective function is the expected total rewards and the constraint is on the expected total utilities,
\begin{equation}\label{eq.hindsight}
\begin{array}{c}
\!\!\!\!
\maximize\limits_{\pi \, \in \, \Delta(\calA\,\vert\, \calS, H)}
\;
V_{r,1}^{\pi}(x_1)
\;\;
\subject 
\;\;
V_{g,1}^{\pi}(x_1) \;\geq\; b
\end{array}
\end{equation}
where we take $b\in(0,H]$ to avoid triality. It is readily generalized to the problem with multiple constraints. Let $\pi^\star\in \Delta(\calA\,\vert\, \calS, H)$ be a solution to problem~\eqref{eq.hindsight}. Since the policy $\pi^\star$ is computed from knowing all reward and utility functions, it is commonly referred to as an optimal policy in-hindsight. 

The associated Lagrangian of problem~\eqref{eq.hindsight} is given by $\mathcal{L}(\pi,Y) \DefinedAs V_{r,1}^{\pi}(x_1)+ Y(V_{g,1}^{\pi}(x_1)-b)$, where $\pi$ is the primal variable and $Y\geq 0$ is the dual variable. We can cast problem~\eqref{eq.hindsight} into a saddle-point problem: $\maximize_{\pi\,\in\, \Delta(\calA\,\vert\, \calS, H)} \minimize_{Y\geq0} \mathcal{L}(\pi,Y) $, which is convex in $Y$ and is non-concave in $\pi$ due to the non-concavity of value functions in policy space~\cite{agarwal2019optimality}. To address the non-concavity, we will exploit constrained RL problem structure to propose a new variant of Lagrange multipliers method for constrained RL problems in Section~\ref{algorithm}, which warrants a new line of primal-dual mirror descent type analysis in sequel. It is different from unconstrained RL analysis, e.g.,~\cite{agarwal2019optimality,cai2019provably}.

Another key feature of constrained RL is the safe exploration under constraints~\cite{garcia2015comprehensive}. Without any constraint information~\emph{a priori}, it is infeasible for each policy to satisfy the constraint since utility information in constraints is only revealed after a policy is decided. Instead, we allow each policy to violate the constraint in each episode and minimize regret while minimizing total constraint violations for safe exploration over $K$ episodes; aslo see~\cite{efroni2020exploration}. We define the regret as the difference between the total reward value of policy $\pi^\star$ in hindsight and that of the agent's policy $\pi^k$ over $K$ episodes, and the constraint violation as a difference between the offset $Kb$ and the total utility value of the agent's policy $\pi^k$ over $K$ episodes,
\begin{equation}\label{eq.regret}
\begin{array}{rcl}
\text{\normalfont Regret}(K) &\!\!=\!\! &\displaystyle\sum_{k\,=\,1}^{K} \big(V_{r,1}^{\pi^\star}(x_1) -V_{r,1}^{\pi^k}(x_1) \big)
\\[0.2cm]
\text{\normalfont Violation}(K) &\!\!=\!\! &\displaystyle \left[\sum_{k\,=\,1}^{K} \big( b-V_{g,1}^{\pi^k}(x_1)\big)\right]_{+}.
\end{array}
\end{equation}
In this paper, we design algorithms, taking bandit feedback of the reward/utility functions, with both regret and constraint violation being sublinear in the total number of steps $T\DefinedAs HK$. Put differently, the algorithm should ensure that given $\epsilon>0$, if $T= {O}(1/\epsilon^2)$, then both $\text{\normalfont Regret}(K) = {O}(\epsilon)$ and $\text{\normalfont Violation}(K) = {O}(\epsilon)$ hold with high probability.

To achieve sublinear regret and constraint violation, we assume standard Slater condition for problem~\eqref{eq.hindsight} and recall strong duality; see the proof in~\cite{paternain2019constrained,paternain2019safe}, and boundedness of the optimal dual variable. The dual function is $\mathcal{D}(Y) \DefinedAs \maximize_{\pi}\mathcal{L}(\pi,Y)$ and the optimal dual variable $Y^\star\DefinedAs\argmin_{Y\,\geq\,0}\mathcal{D}(Y)$. 
\begin{assumption}[Slater Condition]
	\label{as.slater}
	There exists $\gamma>0$ and $\bar{\pi}\in\Delta(\calA\,\vert\, \calS,H)$ such that $V_{g,1}^{\bar{\pi}}(x_1) \geq b+\gamma$.
\end{assumption}

\begin{lemma}[Strong Duality, Boundedness of $Y^\star$]
	\label{lem.sd-b}
	Let Assumption~\ref{as.slater} hold. Then $V_{r,1}^{\pi^\star}(x_1) = \mathcal{D}(Y^\star)$. Moreover, $0\leq Y^\star\leq (V_{r,1}^{\pi^\star}(x_1) -V_{r,1}^{\bar{\pi}}(x_1) )/ \gamma$.
\end{lemma}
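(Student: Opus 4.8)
The plan is to establish the zero duality gap first and then obtain the bound on $Y^\star$ as a routine consequence of evaluating the dual at the Slater point. The immediate difficulty is that problem~\eqref{eq.hindsight} maximizes $V_{r,1}^{\pi}(x_1)$, which is non-concave in $\pi$, so the standard Lagrangian strong-duality theorems cannot be applied directly in the policy variable. I would circumvent this by lifting the problem to the space of occupancy measures. For a policy $\pi$, define the episodic state-action occupancy measure $\mu^\pi = \{\mu_h^\pi\}_{h=1}^H$ by $\mu_h^\pi(x,a) = \P_\pi(x_h = x,\, a_h = a \mid x_1)$. Both $V_{r,1}^{\pi}(x_1)$ and $V_{g,1}^{\pi}(x_1)$ are linear functionals of $\mu^\pi$ (integrals of $r_h$ and $g_h$ against $\mu_h^\pi$, summed over $h$), and the set of admissible occupancy measures is cut out by the linear Bellman-flow normalization and consistency constraints, hence is convex. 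Consequently, in this lifted variable both the objective and the constraint are linear, so problem~\eqref{eq.hindsight} is equivalent to a convex program.

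Having recast the problem as a convex program with a single affine inequality constraint, I would invoke the standard strong-duality theorem for convex optimization. Under Assumption~\ref{as.slater}, which supplies a strictly feasible occupancy measure $\mu^{\bar{\pi}}$ with $V_{g,1}^{\bar{\pi}}(x_1) \geq b + \gamma > b$, Slater's condition holds in the lifted formulation, so the duality gap vanishes and the dual optimum $Y^\star$ is attained. This yields $V_{r,1}^{\pi^\star}(x_1) = \mathcal{D}(Y^\star)$, which is precisely the first claim. (This is the route of~\cite{paternain2019constrained,paternain2019safe}, whose zero-duality-gap result I would cite for the lifting step.)

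For the boundedness of $Y^\star$, the nonnegativity $Y^\star \geq 0$ is immediate from the dual-feasibility constraint $Y \geq 0$. For the upper bound, I would evaluate the dual function at $Y^\star$ against the Slater point $\bar{\pi}$. By strong duality together with $\mathcal{D}(Y^\star) = \max_{\pi} \mathcal{L}(\pi,Y^\star) \geq \mathcal{L}(\bar{\pi}, Y^\star)$,
\[
V_{r,1}^{\pi^\star}(x_1) \;=\; \mathcal{D}(Y^\star) \;\geq\; V_{r,1}^{\bar{\pi}}(x_1) + Y^\star\big(V_{g,1}^{\bar{\pi}}(x_1) - b\big) \;\geq\; V_{r,1}^{\bar{\pi}}(x_1) + \gamma\, Y^\star,
\]
where the last inequality uses $Y^\star \geq 0$ and $V_{g,1}^{\bar{\pi}}(x_1) - b \geq \gamma$. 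Rearranging gives $Y^\star \leq (V_{r,1}^{\pi^\star}(x_1) - V_{r,1}^{\bar{\pi}}(x_1))/\gamma$, as claimed.

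The main obstacle lies entirely in the first step: justifying the passage from the non-concave policy formulation to the convex occupancy-measure formulation, i.e.\ proving that the achievable set of occupancy measures (equivalently, the achievable set of reward/utility value pairs) is convex, so that no duality gap can arise from the non-concavity of the value functions in $\pi$. Once this convexity is secured, the remaining arguments reduce to the textbook convex-duality theorem and the Slater-point manipulation sketched above.
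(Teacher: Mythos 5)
Your proposal is correct and follows essentially the same route as the paper: the paper likewise obtains the zero duality gap by appealing to the result of \cite{paternain2019constrained,paternain2019safe} (whose proof rests on exactly the convexification you sketch), and its appendix proves the bound on $Y^\star$ by the identical Slater-point computation, namely $V_{r,1}^{\pi^\star}(x_1) = \mathcal{D}(Y^\star) \geq \mathcal{L}(\bar{\pi},Y^\star) \geq V_{r,1}^{\bar{\pi}}(x_1) + \gamma Y^\star$, phrased there as boundedness of the dual function's sublevel sets.
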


The Slater condition is mild in practice and commonly adopted in previous works; e.g.,~\cite{paternain2019constrained,efroni2020exploration,qiu2020upper,ding2020natural}. The implied properties for problem~\eqref{eq.hindsight} by Slater condition will be useful in our algorithm design and analysis.

\textbf{Linear Function Approximation}. We focus on CMDPs with linear transition kernels in feature maps. 
\begin{assumption}~\label{as.linearMDP}
	The $\text{\normalfont CMDP}(\calS,\calA,H,\mathbb{P},r,g)$ is a linear MDP with a kernal feature map $\psi$: $\calS\times\calA\times\calS\to\mathbb{R}^{d_1}$ and a value feature map $\varphi$: $\calS\times\calA\to\mathbb{R}^{d_2}$, if for any $h\in[H]$, there exists a vector $\theta_h\in\mathbb{R}^{d_1}$ with $\norm{\theta_h}_2\leq \sqrt{d_1}$ such that for any $(x,a,x') \in \calS\times\calA\times\calS$,
	\[
	\mathbb{P}_h\rbr{x'\,\vert\, x,a} \,=\, \langle\psi\rbr{x,a,x'},\theta_h\rangle;
	\]
	there exists a feature map $\varphi$: $\calS\times\calA\to\mathbb{R}^{d_2}$ and vectors $\theta_{r,h},\theta_{g,h}\in\mathbb{R}^{d_2}$ such that for any $(x,a) \in\calS\times\calA$,
	\[
	r_h(x,a) \,=\, \langle\varphi(x,a), \theta_{r,h}\rangle \text{ and } \, g_h(x,a) \,=\, \langle\varphi(x,a), \theta_{g,h}\rangle
	\]
	where $\max(\norm{\theta_{r,h}}_2,\norm{\theta_{g,h}}_2)\leq \sqrt{d_2}$.
	Moreover, we assume that for any function $V$: $\calS\to[0,H]$, $\norm{\int_{\calS} \psi(x, a, x') V(x') dx'}_2 \leq\sqrt{d_1}\, H$ for all $(x,a)\in\calS\times\calA$, and $\max(d_1,d_2) \leq d$.
\end{assumption}

Assumption~\ref{as.linearMDP} adapts the definition of linear kernel MDP~\cite{ayoub2020model,zhou2020provably,cai2019provably} for CMDPs. Linear kernel MDP examples include tabular MDPs~\cite{zhou2020provably}, feature embedded transition models~\cite{yang2019reinforcement}, and linear combinations of base models~\cite{modi2020sample}. We can construct related examples of CMDPs with linear structure by adding adding proper constraints. For usefulness of linear structure, see discussions in~\cite{du2019good,van2019comments,lattimore2019learning}. For more general transition dynamics, see factored linear model~\cite{pires2016policy}.

Although our definition in Assumption~\ref{as.linearMDP} and linear MDPs~\cite{yang2019sample,jin2019provably} all include tabular MDPs as special cases, they define transition dynamics using different feature maps. They are not comparable since one cannot be implied by the other~\cite{zhou2020provably}. The design of provably efficient RL algorithms beyond these linear type CMDPs remains open.

\section{Proposed Algorithm}
\label{algorithm}

In Algorithm~\ref{alg:OPDOP}, we present a new variant of Proximal Policy Optimization~\cite{schulman2017proximal} -- an $\underline{\text{O}}$ptimistic $\underline{\text{P}}$rimal-$\underline{\text{D}}$ual Proximal Policy $\underline{\text{OP}}$timization (OPDOP) algorithm. We effectuate the optimism through the Upper-Confidence Bounds (UCB) and address the constraints using a new variant of Lagrange multipliers method combined with the performance difference lemma; see Lemma~\ref{lem.PDL} in Appendix~\ref{app.sec.support} or works~\cite{schulman2015trust,schulman2017proximal,cai2019provably}. In each episode, our algorithm consists of three main stages. The first stage (lines 4--8) is \textbf{Policy Improvement}: we receive a new policy $\pi^k$ by improving previous $\pi^{k-1}$ via a mirror descent type optimization; The second stage (line 9) is \textbf{Dual Update}: we update dual variable $Y^k$ based on the constraint violation induced by previous policy $\pi^k$; The third stage (line 10) is \textbf{Policy Evaluation}: we optimistically evaluate newly obtained policy via the least-squares policy evaluation with an additional UCB bonus term for exploration.

\begin{algorithm}[H]
	\caption{$\underline{\text{O}}$ptimistic $\underline{\text{P}}$rimal-$\underline{\text{D}}$ual Proximal Policy $\underline{\text{OP}}$timization (OPDOP) }
	\label{alg:OPDOP}
	\begin{algorithmic}[1]
		\State \textbf{Initialization}: Let $\{Q_{r,h}^0,Q_{g,h}^0\}_{h\,=\,1}^H$ be zero functions, $\{\pi_h^0\}_{h\,\in\,[H]}$ be uniform distributions on $\mathcal{A}$, $V_{g,1}^0$ be $b$, $Y^0$ be $0$, $\chi$ be $2H/\gamma$, $\alpha,\eta>0,\theta\in(0,1]$.
		\For{episode $k=1,\ldots,K+1$} 
		\State Set the initial state $x_1^k = x_1$. 
		\For{step $h=1,2,\ldots,H$ } 
		\State Mix the policy
		\[
		\displaystyle\tilde{\pi}_h^{k-1} (\,\cdot\,\vert\, \cdot\,) \,\leftarrow\, (1-\theta) \,\pi_h^{k-1}(\,\cdot\,\vert\, \cdot\,)\,+\, \theta\,\text{Unif}(\calA).
		\]
		\State Update the policy
		\begin{equation}\label{eq.closed}
		\!\!\!\! \!\!\!\! \!\!
		\pi_h^k(\,\cdot\,\vert\, \cdot\,) 
		\,\propto\,
		\tilde{\pi}_h^{k-1}(\,\cdot\,\vert\, \cdot\,) \,\mathrm{e}^{\rbr{\alpha\,\big( Q_{r,h}^{k-1}\,+\, Y^{k-1}Q_{g,h}^{k-1}\big) (\,\cdot,\,\cdot\,)}}\!\!.
		\end{equation}
		\State Take an action $a_h^k\,\sim\,\pi_h^k(\,\cdot\,\vert\, x_h^k\,)$ and 
		recieve reward and utility $r_h(x_h^k,a_h^k),\, g_h(x_h^k,a_h^k)$.
		\State Observe the next state $x_{h+1}^k$.
		\EndFor
		\State Update the dual variable $Y^{k} $ by  
		\begin{equation}\label{eq.dual}
		\!\!\!\! \!\!\!\! \!\!
		Y^{k}\,\leftarrow\,
		\text{Proj}_{[\,0,\, \chi\, ]} \big({\, Y^{k-1} \,+\,\eta\, (b -  V_{g,1}^{k-1}(x_1))\,}\big).
		\end{equation}
		\State  Estimate the action-value or value functions $\{Q_{r,h}^k(\,\cdot\, ,\,\cdot\,), Q_{g,h}^k(\,\cdot\, ,\,\cdot\,), V_{g,h}^k(\,\cdot\,)\}_{h\,=\,1}^H$ via
		\[
		\text{LSTD}\!\rbr{\{x_h^\tau,a_h^\tau,r_h(x_h^\tau,a_h^\tau),g_h(x_h^\tau,a_h^\tau)\}_{h,\tau\,=\,1}^{H,k}}.
		\]
		\EndFor
	\end{algorithmic}
\end{algorithm}

\noindent\textbf{Policy Improvement}. 
In the $k$-th episode, a natural attempt of obtaining a policy $\pi^k$ is to solve a Lagrangian-based policy optimization problem,
\[
\displaystyle\maximize_{\pi\,\in\,\Delta(\calA\vert \calS, H)} \mathcal{L}(\pi,\!Y^{k-1}) \,\DefinedAs\, V_{r,1}^{\pi}(x_1) - Y^{k-1}\!(b-V_{g,1}^{\pi}(x_1))
\]
where  $\mathcal{L}(\pi,Y)$ is the Lagrangian and the dual variable $Y^{k-1}\geq 0$ is from the last episode. We will show later that $Y^{k-1}$ can be updated efficiently.
%
This type update is used by several works, e.g. \cite{le2019batch,paternain2019constrained,paternain2019safe,tessler2018reward}. The issue with them is that it relies on an oracle solver involving either Q-learning~\cite{ernst2005tree}, PPO~\cite{schulman2017proximal}, or TRPO~\cite{schulman2015trust} to deliver a near-optimal policy. Thus, the overall algorithmic complexity is expensive and it is not suitable for the online use. In contrast, this work utilizes RL problem structure and show that only an easily-computable proximal step is sufficient by establishing efficiency guarantees without using any oracle solvers. 

Denote symbol $\diamond = r\text{ or } g$.
Via the performance difference lemma, we can expand value functions $V_{\diamond,1}^{\pi}(x_1)$ at the previously known policy $\pi^{k-1}$,
\[
\begin{array}{rcl}
&& \!\!\!\!\!\!\!\!\!\!\!\!\!\!  V_{\diamond,1}^{\pi}(x_1) \;=\; V_{\diamond,1}^{\pi^{k-1}}(x_1^k)  
\,+\,
\mathbb{E}_{\pi^{k-1}}\sbr{\,\displaystyle\sum_{h\,=\,1}^{H} \big\langle{Q_{\diamond,h}^{\pi}(x_h,\cdot\,)},{(\pi_h - \pi_h^{k-1})(\,\cdot\,\vert\,  x_h)}\big\rangle\,}
\end{array}
\]
where $\mathbb{E}_{\pi^{k-1}}$ is taken over the random state-action sequence $\{(x_h,a_h)\}_{1}^H$.
Thus, we introduce an approximation of $V_{\diamond,1}^{\pi}(x_1)$ for any state-action sequence $\{(x_h,a_h)\}_{1}^H$ induced by $\pi$,
\[
\begin{array}{rcl}
&& \!\!\!\!\!\!\!\!\!\!\!\!\!\!\!\!  L_\diamond^{k-1}(\pi)\;=\; V_{\diamond,1}^{k-1}(x_1) 
\,+\,
\displaystyle\sum_{h\,=\,1}^{H} \big\langle{Q_{\diamond,h}^{k-1}(x_h,\cdot\,)},{(\pi_h - \pi_h^{k-1})(\,\cdot\,\vert\,  x_h)}\big\rangle
\end{array}
\]
where $V_{\diamond,h}^{k-1}$ and $Q_{\diamond,h}^{k-1}$ can be estimated from an optimistic policy evaluation that will be discussed later. With this notion, in each episode, instead of solving a Lagrangian-based policy optimization, we perform a simple policy update in online mirror descent fashion,
\[
\begin{array}{rcl}
&& \!\!\!\! \!\!\!\! \displaystyle\maximize_{\pi\,\in\,\Delta(\calA\vert \calS, H)} \;  L_r^{k-1}(\pi)
\,-\,
Y^{k-1} \big(b-L_g^{k-1}(\pi)\big) 
\displaystyle\frac{1}{\alpha}\sum_{h\,=\,1}^{H} D\big(\pi_h(\,\cdot\,\vert\, x_h) \,\vert\, \tilde{\pi}_h^{k-1}(\,\cdot\,\vert\, x_h)\big)
\end{array}
\]
where $\tilde{\pi}_h^{k-1} (\,\cdot\,\vert\, x_h) = \rbr{1-\theta} \pi_h^{k-1}(\,\cdot\,\vert\, x_h)+\theta\,\text{Unif}(\calA)$
is a mixed policy of the previous one and the uniform distribution $\text{Unif}(\calA)$ with $\theta\in(0,1]$. The constant $\alpha>0$ is a trade-off parameter, $D(\pi\,\vert\,\tilde{\pi}^{k-1})$ is the KL divergence between $\pi$ and $\tilde{\pi}^{k-1}$ in which $\pi$ is absolutely continuous in $\tilde{\pi}^{k-1}$. The policy mixing step ensures such absolute continuity and implies uniformly bounded KL divergence (see Lemma~\ref{lem.mix} in Appendix~\ref{app.sec.support}).
Ignoring other $\pi$-irrelevant terms, we update $\pi^k$ in terms of previous policy $\pi^{k-1}$ by
\[
\begin{array}{rcl}
&& \!\!\!\! \!\!\!\! \!\!\!\! \!\!\!\!
\displaystyle\argmax_{\pi\in\Delta(\calA\vert \calS, H)} \sum_{h\,=\,1}^{H}\! \big\langle{( Q_{r,h}^{k-1}+Y^{k-1}Q_{g,h}^{k-1})(x_h,\,\cdot\,)},{\pi_h (\,\cdot\,\vert\,  x_h) }\big\rangle
\displaystyle\frac{1}{\alpha}\!\sum_{h\,=\,1}^{H}\! D\big(\pi_h(\,\cdot\,\vert\, x_h) \,\vert\, \tilde{\pi}_h^{k-1}(\,\cdot\,\vert\, x_h)\big).
\end{array}
\]
Since the above update is separable over states $\{x_h\}_1^H$, we can update the policy $\pi^k$ as line~6 in Algorithm~\ref{alg:OPDOP}, a closed solution for any step $h\in[H]$.
If we set $Y^{k-1}=0$ and $\theta=0$, the above update reduces to one step in an optimistic variant of PPO~\cite{cai2019provably}. The idea of KL-divergence regularization in policy optimization has been widely used in many unconstrained scenarios, e.g., NPG~\cite{kakade2002natural}, PPO~\cite{schulman2017proximal}, TRPO~\cite{schulman2015trust}, and their neural variants~\cite{wang2019neural,liu2019neural}. Our method is distinct from them in that it is based on the performance difference lemma and  the optimistically estimated value functions.

\noindent\textbf{Dual Update}. 
To infer the constraint violation for the dual update, we esimate $V_{g,1}^{k-1}(x_1)$ for $V_{g,1}^{\pi^k}(x_1)$ via an optimistic policy evaluation that will be discussed later. We update the Lagrange multiplier $Y\geq 0$ by moving $Y^{k}$ to the direction of minimizing the Lagrangian $\mathcal{L}(\pi,Y)$ over $Y$ in line~9 in Algorithm~\ref{alg:OPDOP},
where $\eta>0$ is a stepsize and $\text{Proj}_{[\,0,\, \chi\,]}$ is a projection onto $[0,\chi]$ with an upper bound $\chi$ on $Y^{k}$. By Lemma~\ref{lem.sd-b}, we choose $\chi={2H}/{\gamma}\geq 2Y^\star$ so that projection interval $[\,0,\, \chi\,]$ includes the optimal dual variable $Y^\star$; also see works~\cite{efroni2020exploration,nedic2009subgradient}.

The dual update works as a trade-off between the reward maximization and the constraint violation reduction. If the current policy $\pi^k$ satisfies the approximated constraint, i.e., $b-L_g^{k-1}(\pi^k)\leq 0$, we put less weight on the action-value function associated with the utility and maximize the reward; otherwise, we sacrifice the reward a bit to satisfy the constraint. The dual update has a similar use in dealing with constraints in CMDP, e.g., Lagrangian-based AC~\cite{chow2017risk,liang2018accelerated}, and online constrained optimization problems~\cite{yu2017online,wei2019online,yuan2018online}. In contrast, we handle the dual update via the optimistic policy evaluation that yields an arguably efficient approximation of constraint violation. 

\begin{algorithm}[H]
	\caption{Least-Squares Temporal Difference (LSTD) with UCB exploration}
	\label{alg:LSVI}
	\begin{algorithmic}[1]
		\State \textbf{Input}: $\{x_h^\tau,a_h^\tau,r_h(x_h^\tau,a_h^\tau), g_h(x_h^\tau,a_h^\tau)\}_{h,\tau\,=\,1}^{H,k}$.
		\State \textbf{Initialization}: Set $\{V_{r,H+1}^k,V_{g,H+1}^k\}$ be zero functions and $\lambda =1,\beta=O(\sqrt{dH^2\log\rbr{{dT}/{p}}})$.
		\For{step $h=H,H-1,\cdots,1$} \Comment{$\diamond = r$, $g$}
		\State $\!\!\!\!\!\!\!\!\Lambda_{\diamond,h}^k \,\leftarrow\, \displaystyle\sum_{\tau\,=\,1}^{k-1} \phi_{\diamond,h}^\tau(x_h^\tau,a_h^\tau)\phi_{\diamond,h}^\tau(x_h^\tau,a_h^\tau)^\top+ \lambda I$.
		\State $\!\!\!\!\!\!\!\!w_{\diamond,h}^k \,\leftarrow\, (\Lambda_{\diamond,h}^k)^{-1}\displaystyle \sum_{\tau\,=\,1}^{k-1}\phi_{\diamond,h}^\tau(x_h^\tau,a_h^\tau) V_{\diamond,h+1}^\tau (x_{h+1}^\tau)$.
		\State $\!\!\!\!\!\!\!\!\phi_{\diamond,h}^k(\,\cdot\,,\,\cdot\,) \,\leftarrow\, \int_{\calS} \psi(\,\cdot\,,\,\cdot\,,x'\, ) V_{\diamond,h+1}^k(x')dx'$.
		\State $\!\!\!\!\!\!\!\!\Gamma_{\diamond,h}^k(\,\cdot\,,\,\cdot\,) \,\leftarrow\, \beta (\phi_{\diamond,h}^k(\,\cdot\,,\,\cdot\,)^\top(\Lambda_{\diamond,h}^k)^{-1} \phi_{\diamond,h}^k(\,\cdot\,,\,\cdot\,) )^{1/2}$.
		\State $\!\!\!\!\!\!\!\!\Lambda_{h}^k \,\leftarrow\, \displaystyle\sum_{\tau\,=\,1}^{k-1} \varphi(x_h^\tau,a_h^\tau)\varphi(x_h^\tau,a_h^\tau)^\top+ \lambda I$.
		\State $\!\!\!\!\!\!\!\! u_{\diamond,h}^k \,\leftarrow\, (\Lambda_{h}^k)^{-1}\displaystyle \sum_{\tau\,=\,1}^{k-1}\varphi(x_h^\tau,a_h^\tau) \diamond_{h} (x_{h}^\tau,a_h^\tau)$.
		\State $\!\!\!\!\!\!\!\! \Gamma_{h}^k(\,\cdot\,,\,\cdot\,) \,\leftarrow\, \beta (\varphi(\,\cdot\,,\,\cdot\,)^\top(\Lambda_{h}^k)^{-1} \varphi(\,\cdot\,,\,\cdot\,) )^{1/2}$.
		\State $\!\!\!\! \!\!\!\! Q_{\diamond,h}^k(\,\cdot\,,\cdot\,) \,\leftarrow\, \min\big(\,\varphi(\,\cdot\,,\,\cdot\,)^\top u_{\diamond,h}^k\,+\,\phi_{\diamond,h}^k(\,\cdot\,,\,\cdot\,)^\top w_{\diamond,h}^k\,+\,(\Gamma_{h}^k +\Gamma_{\diamond,h}^k)(\,\cdot\,,\,\cdot\,),\, H-h+1\,\big)^+ $.
		\State $\!\!\!\!\!\!\!\!V_{\diamond,h}^k(\,\cdot\,) \,\leftarrow\, \big\langle{Q_{\diamond,h}^k(\,\cdot\,,\,\cdot\,)},{\pi_h^k(\,\cdot\,\vert\, \cdot\,)}\big\rangle_\calA$.
		\EndFor
		\State \textbf{Return}: $\{Q_{\diamond,h}^k(\,\cdot\,,\,\cdot\,),V_{\diamond,h}^k(\,\cdot\,,\,\cdot\,) \}_{h\,=\,1}^H$.
	\end{algorithmic}
\end{algorithm}

\noindent\textbf{Policy Evaluation}. The last stage of the $k$th episode takes the Least-Squares Temporal Difference (LSTD)~\cite{bradtke1996linear,boyan1999least,lazaric2010finite,lagoudakis2003least} to evaluate the policy $\pi^k$ based on previous $k-1$ historical trajectories. 
For each step $h\in[H]$, instead of $\mathbb{P}_hV_{r,h+1}^{\pi^k}$ in the Bellman equations~\eqref{eq.bellman}, we estimate $\mathbb{P}_hV_{r,h+1}^{k}$ by $(\phi_{r,h}^k)^\top w_{r,h}^k$ where $w_{r,h}^k $ is updated by the minimizer of the regularized least-squares problem over $w$,
\begin{equation}\label{eq.lsQ}
\sum_{\tau\,=\,1}^{k-1} \big(V_{r,h+1}^\tau(x_{h+1}^\tau)\,-\,{\phi_{r,h}^\tau(x_h^\tau,a_h^\tau)}^\top{w}\big)^2 \,+\, \lambda\,\|w\|_2^2
\end{equation}
where $\phi_{r,h}^\tau(\,\cdot\,,\,\cdot\,) \DefinedAs \int_{\calS} \psi(\,\cdot\,,\,\cdot\,,x'\, ) V_{r,h+1}^\tau(x')dx'$, $V_{r,h+1}^\tau(\cdot) = \langle{Q_{r,h+1}^\tau(\,\cdot\,,\cdot\,)},{\pi_{h+1}^\tau(\,\cdot\,\vert \,\cdot\,)}\rangle_\calA$ for $h\in[H-1]$ and $V_{H+1}^\tau = 0$, and $\lambda>0$ is the regularization parameter. Similarly, we estimate $\mathbb{P}_hV_{g,h+1}^{k}$ by $(\phi_{g,h}^k)^\top w_{g,h}^k$. We display the least-squares solution in line 4--6 of Algorithm~\ref{alg:LSVI} where symbol $\diamond = r\text{ or } g$. We also estimate $r_h(\cdot,\cdot)$ by $\varphi^\top u_{r,h}^k$ where $u_{r,h}^k$ is updated by the minimizer of another regularized least-squares problem,
\begin{equation}\label{eq.lsR}
\sum_{\tau\,=\,1}^{k-1} \big(r_h(x_h^\tau,a_h^\tau)\,-\,{\varphi(x_h^\tau,a_h^\tau)}^\top{u}\big)^2 \,+\, \lambda\,\|u\|_2^2
\end{equation}
where $\lambda>0$ is the regularization parameter. Similarly, we estimate $g_h(\cdot,\cdot)$ by $\varphi^\top u_{g,h}^k$. The least-squares solutions lead to line 8--9 of Algorithm~\ref{alg:LSVI}. 

After obtaining estimates of $\mathbb{P}_hV_{\diamond,h+1}^{k}$ and $\diamond_h(\cdot,\cdot)$ for $\diamond = r\text{ or } g$, we update the estimated action-value function $\{Q_{\diamond,h}^k\}_{h\,=\,1}^H$ iteratively in line 11 of Algorithm~\ref{alg:LSVI}, 
where $\varphi^\top u_{\diamond,h}^k$ is an estimate of $\diamond_h$ and $(\phi_{\diamond,h}^k)^\top w_{\diamond,h}^k$ is an estimate of $\mathbb{P}_h V_{\diamond,h+1}^k$; we add UCB bonus terms $\Gamma_{h}^k(\,\cdot\,,\,\cdot\,),\Gamma_{\diamond,h}^k(\,\cdot\,,\,\cdot\,)$: $\calS\times\calA\to\mathbb{R}^+ $ so that 
\[
\varphi^\top u_{\diamond,h}^k + \Gamma_{h}^k \;\text{ and }\; (\phi_{\diamond,h}^k)^\top w_{\diamond,h}^k+\Gamma_{\diamond,h}^k
\]
all become upper confidence bounds for exploration. We take $\Gamma_{h}^k =\beta (\varphi^\top (\Lambda_{h}^k)^{-1}\varphi)^{1/2}$ and $\Gamma_{\diamond,h}^k =\beta ((\phi_{\diamond,h}^k)^\top (\Lambda_{\diamond,h}^k)^{-1}\phi_{\diamond,h}^k)^{1/2}$ and leave the parameter $\beta>0$ to be tuned later. Moreover, the bounded reward/utility $\diamond_h\in[0,1]$ implies $Q_{\diamond,h}^k\in[0,H-h+1]$. 

We remark the computational efficiency of Algorithm~\ref{alg:OPDOP}. For the time complexity, since line 6 is a scalar update, they need $O(d|\calA|T)$ time. A dominating calculation is from lines 5/9 in Algorithm~\ref{alg:LSVI}. If we use the Sherman–Morrison formula for computing $(\Lambda_h^k)^{-1}$, it takes $O(d^2T)$ time. Another important calculation is the integration from line 6 in Algorithm~\ref{alg:LSVI}. We can either compute it analytically if it is tractable or approximate it via the Monte Carlo integration efficiently~\cite{zhou2020provably} that assumes $O(dT)$ time. Therefore, the time complexity is $O(d^2|\calA|T)$ in total. For the space complexity, we don't need to store policy since it is recursively calculated via~\eqref{eq.closed}. By updating $Y^k$, $\Lambda_{h}^k$, $\Lambda_{\diamond,h}^k$, $w_{\diamond,h}^k$, $u_{\diamond,h}^k$, and $\diamond_h(x_h^k,a_h^k)$ recursively, it takes $O((d^2+|\calA|)H)$ space.

\section{Regret and Constraint Violation Analysis}
\label{regret}

We now prove that the regret and the constraint violation for Algorithm~\ref{alg:OPDOP} are sublinear in $T\DefinedAs KH$; $T$ is the total number of steps taken by the algorithm; $K$ is the total number of episodes; $H$ is the episode horizon. We recall that $|\calA|$ is the cardinality of action space $\calA$ and $d$ is the dimension of the feature map.

\begin{theorem}[Linear Kernal MDP: Regret and Constraint Violation]\label{thm.main-full}
	Let Assumptions~\ref{as.slater}~and~\ref{as.linearMDP} hold. 
	Fix $p\in\rbr{0,1}$. 
	We set
	$\alpha={\sqrt{\log\abr{\calA}}}/(H^2K)$, $\beta=C_1\sqrt{dH^2\log\rbr{{dT}/{p}}}$, $\eta=1/\sqrt{K}$, $\theta = {1}/K$, and $\lambda=1$ in Algorithm~\ref{alg:OPDOP}, where $C_1$ is an absolute constant. Suppose $\log\abr{\calA}=O\rbr{d^2\log^2\rbr{{dT}/{p}}}$. Then, with probability $1-p$, the regret and the constraint violation in~\eqref{eq.regret} satisfy 
	\[
	\begin{array}{rcl}
	\text{\normalfont Regret}(K) &\!\!\leq\!\!& \displaystyle C\, d H^{2.5}\sqrt{T} \,\log\rbr{\tfrac{dT}{p}}
	\\[0.2cm]
	\text{\normalfont Violation}(K) &\!\!\leq\!\!& \displaystyle C'\,d H^{2.5}\sqrt{T} \,{\log\rbr{\tfrac{dT}{p}}}
	\end{array}
	\]
	where $C$ and $C'$ are absolute constants. 
\end{theorem}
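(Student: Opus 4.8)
The plan is to run a primal--dual online-learning argument on top of an optimistic value-estimation guarantee, and then read off both the regret and the constraint violation from a single combined inequality. Throughout, write $S \defeq \sum_{k=1}^{K}\bigl(b - V_{g,1}^{\pi^k}(x_1)\bigr)$ for the signed cumulative violation, so that $\text{Violation}(K)=[S]_+$.

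First I would establish the \textbf{optimism} guarantee for the estimates produced by Algorithm~\ref{alg:LSVI}. Using a self-normalized (elliptical) concentration bound for the two ridge-regression estimators---one for the transition term $(\phi_{\diamond,h}^k)^\top w_{\diamond,h}^k$ and one for the reward/utility term $\varphi^\top u_{\diamond,h}^k$---the choice $\beta=C_1\sqrt{dH^2\log(dT/p)}$ makes the bonuses $\Gamma_h^k,\Gamma_{\diamond,h}^k$ dominate the one-step Bellman error with probability $1-p$. Unrolling the Bellman recursion then yields, for $\diamond\in\{r,g\}$ and uniformly over $k$, both the optimism $V_{\diamond,1}^k(x_1)\ge V_{\diamond,1}^{\pi^k}(x_1)$ and the matching upper bound $V_{\diamond,1}^k(x_1)-V_{\diamond,1}^{\pi^k}(x_1)\le\sum_{h=1}^{H}\mathbb{E}_{\pi^k}\bigl[(\Gamma_h^k+\Gamma_{\diamond,h}^k)(x_h,a_h)\bigr]$. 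An elliptical-potential (log-determinant) summation bounds the cumulative bonus sublinearly in $K$, and this is the dominant statistical contribution to the final rate $\Phi(K)=\tilde{O}(dH^{2.5}\sqrt{T})$.

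Next I would assemble the \textbf{combined primal--dual bound}: for every comparator $Y\in[0,\chi]$,
\[
\text{Regret}(K) + Y\,S \;\le\; \Phi(K),\qquad \Phi(K)=\tilde{O}(dH^{2.5}\sqrt{T}).
\]
The primal part treats the closed-form update~\eqref{eq.closed} as an online mirror-descent step against the linearized objective with loss vector $Q_{r,h}^{k-1}+Y^{k-1}Q_{g,h}^{k-1}$; the performance difference lemma converts the resulting per-state inner products into value differences, optimism replaces estimated values by true ones up to the previous step's bonus sum, the KL terms telescope, and the mixing step keeps each divergence uniformly bounded (its drift is $O(\theta HK)$, rendered negligible by $\theta=1/K$). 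The dual part treats the projected update in line~9 of Algorithm~\ref{alg:OPDOP} as online gradient descent on the linear-in-$Y$ map $Y\mapsto Y(b-V_{g,1}^{k-1}(x_1))$; the nonexpansive-projection inequality gives a dual regret of at most $\chi^2/(2\eta)+\eta H^2K/2$ against any $Y\in[0,\chi]$, which supplies the $Y\,S$ term while cancelling the cross term $\sum_k Y^k\bigl(V_{g,1}^{\pi^k}(x_1)-b\bigr)$ left over from the primal decomposition. The prescribed $\alpha,\beta,\eta,\theta$ are chosen to balance the divergence term ($\sim 1/\alpha$) against the gradient term ($\sim\alpha K\,G^2$ with $G=O((1+\chi)H)$ and $\chi=\Theta(H)$), together with the dual and statistical terms, so that everything collapses to the common order $\Phi(K)$.

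Finally I would \textbf{extract} the two bounds. Taking $Y=0$ gives $\text{Regret}(K)\le\Phi(K)$ directly. For the violation, combine (i) the inequality at $Y=\chi$, i.e.\ $\text{Regret}(K)+\chi S\le\Phi(K)$, with (ii) the per-episode strong-duality lower bound $V_{r,1}^{\pi^\star}(x_1)=\mathcal{D}(Y^\star)\ge\mathcal{L}(\pi^k,Y^\star)=V_{r,1}^{\pi^k}(x_1)+Y^\star\bigl(V_{g,1}^{\pi^k}(x_1)-b\bigr)$ from Lemma~\ref{lem.sd-b}, which summed over $k$ reads $\text{Regret}(K)\ge -Y^\star S$. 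Together these give $(\chi-Y^\star)S\le\Phi(K)$, and since $\chi=2H/\gamma\ge 2Y^\star$ we have $\chi-Y^\star\ge\chi/2$, whence $\text{Violation}(K)=[S]_+\le 2\Phi(K)/\chi=\tilde{O}(dH^{2.5}\sqrt{T})$. The hard part will be the combined bound of the third paragraph: because the loss vector carries the dual factor $Y^{k-1}\le\chi=2H/\gamma$, a naive argument inflates the regret by $\chi$, so one must track this $\Theta(H)$ range carefully and keep it inside lower-order terms; and because the utility value is \emph{overestimated}, the estimated violation $b-V_{g,1}^{k}$ only lower-bounds the true violation, so the gap $V_{g,1}^{k}-V_{g,1}^{\pi^k}$ must be added back with the correct sign when passing from the dual iterate (which sees $V_{g,1}^{k-1}$) to the true $S$, and one must confirm that the comparator choice $Y=\chi$ genuinely controls the positive part $[S]_+$ rather than a two-sided quantity.
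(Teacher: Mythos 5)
Your proposal is correct, and for the bulk of the argument it coincides with the paper's proof: the UCB/self-normalized concentration step behind $\beta=C_1\sqrt{dH^2\log(dT/p)}$ (the paper's bound~\eqref{eq.ucb-bandit} via Lemma~\ref{lem.SNP}), the pushback/mirror-descent analysis of the update~\eqref{eq.closed} with KL telescoping and the mixing bound (Lemma~\ref{lem.PDMD}), the nonexpansive-projection online-gradient analysis of the dual step~\eqref{eq.dual} yielding a bound uniform over $Y\in[0,\chi]$ (Lemma~\ref{lem.C1}), the passage from estimated to true values via the prediction-error decompositions~\eqref{eq.RII} and~\eqref{eq.CII}, and the elliptical-potential bound on accumulated bonuses (Lemma~\ref{lem.Lb}) are exactly the paper's lemmas, with the same parameter balancing and the same $\Theta(H)$ bookkeeping of the factor $Y^{k-1}\le\chi$.

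Where you genuinely diverge is the final extraction of the violation bound. The paper first manufactures a single policy $\pi'$ whose values are the episode averages, $V_{\diamond,1}^{\pi'}(x_1)=\tfrac1K\sum_{k}V_{\diamond,1}^{\pi^k}(x_1)$, via an occupancy-measure argument; it then does a case analysis on the sign of $S$ to convert the signed constraint term into a positive part, and finally invokes a separate optimization lemma (Lemma~\ref{thm.violationgeneral}) whose proof runs through concavity of the perturbation function $v(\tau)$ and the subgradient property $-Y^\star\in\partial v(0)$. You instead sum the per-episode strong-duality inequality $V_{r,1}^{\pi^\star}(x_1)=\mathcal{D}(Y^\star)\ge\mathcal{L}(\pi^k,Y^\star)$ to get $\text{Regret}(K)\ge -Y^\star S$, combine it with the $Y=\chi$ instance of the combined bound to get $(\chi-Y^\star)S\le\Phi(K)$, and use $\chi\ge 2Y^\star$ from Lemma~\ref{lem.sd-b}. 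This is a more elementary route to the same conclusion: it needs only strong duality and the bound on $Y^\star$, avoids both the occupancy-measure mixture (the averaging is replaced by summing linear inequalities) and the perturbation-function machinery, and it handles the signed $S$ directly with no case analysis. Both arguments encode the same sensitivity interpretation of $Y^\star$; the paper needs $\pi'$ only because its lemma is stated for a single policy.

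One point to tighten: you write the per-episode optimism gap as $V_{\diamond,1}^k(x_1)-V_{\diamond,1}^{\pi^k}(x_1)\le\sum_{h}\mathbb{E}_{\pi^k}\bigl[(\Gamma_h^k+\Gamma_{\diamond,h}^k)(x_h,a_h)\bigr]$ and then bound the cumulative bonuses by the elliptical-potential lemma. Lemma~\ref{lem.bdsums} controls bonuses only along the \emph{realized} trajectories $(x_h^k,a_h^k)$, since $\Lambda_h^k$ and $\Lambda_{\diamond,h}^k$ are built from realized features; relating the conditional expectations $\mathbb{E}_{\pi^k}[\cdot]$ to the realized sums requires the Azuma--Hoeffding martingale step that the paper isolates as $M_{\diamond,H,2}^K$ (Lemma~\ref{lem.M}). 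This is standard and compatible with your high-probability framing, but it must appear explicitly, as it contributes the $O(H\sqrt{T\log(1/p)})$ term in the final bound.
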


The above result establishes that OPDOP enjoys an $\tilde{O}(d H^{2.5}\sqrt{T} )$ regret and an $\tilde{O}(d H^{2.5}\sqrt{T} )$ constraint violation if we set algorithm parameters $\{\alpha,\beta,\eta,\theta,\lambda\}$ properly. Our results have the optimal dependence on the total number of steps $T$ up to some logarithmic factors. The $d$ dependence occurs due to the uniform concentration for controlling the fluctuations in the least-squares policy evaluation. This matches the existing bounds in the linear MDP setting without any constraints~\cite{cai2019provably,ayoub2020model,zhou2020provably}. Our bounds differ from them only by an extra $H$, which is a price introduced by the uniform bound on the constraint violation. It is noticed that we don't observe reward/utility functions, our algorithm works for  bandit feedback after each episode. 

We remark the tabular setting for Algorithm~\ref{alg:OPDOP}; see Appendix~\ref{ap.CMDP-t} for details. The tabular CMDP is a special case of Assumption~\ref{as.linearMDP} by taking canonical bases as feature mappings; see them in Section~\ref{sec.tabular}. The feature map has dimension $d=|\calS|^2|\calA|$ and thus Theorem~\ref{thm.main-full} automatically provides $O(|\calS|^2|\calA|H^{2.5}\sqrt{T})$ regret and constraint violation for the tabular CMDPs. The $d=|\calS|^2|\calA|$ dependence relies on the least-squares policy evaluation and it can be improved via other optimistic policy evolution methods if we are limited to the tabular case. We provide such results in Section~\ref{sec.tabular}.

\subsection{Proof Outline of Theorem~\ref{thm.main-full}}
\label{proof}

We sketch the proof for Theorem~\ref{thm.main-full}. We state key lemmas and delay their full versions and complete proofs to Appendix~\ref{ap.main1}. In what follows, we fix $p\in(0,1)$ and use the shorthand w.p. for with probability.


\noindent\textbf{Regret Analysis}. We take a regret decomposition,
\[
\text{Regret}(K) \;=\; {\normalfont\text{(R.I)} } \,+\, {\normalfont\text{(R.II)} }
\]
where ${\normalfont\text{(R.I)} }  = \sum_{k\,=\,1}^{K}\big(V_{r,1}^{\pi^\star}(x_1) - V_{r,1}^{k}(x_1)\big)$ in which $\pi^\star$ is an optimal policy in hindsight, and ${\normalfont\text{(R.II)} } = \sum_{k\,=\,1}^{K}\big(V_{r,1}^{k}(x_1) -V_{r,1}^{\pi^k}(x_1)\big)$ in which $V_{r,1}^k(x_1)$ is estimated via our optimistic policy evaluation given by Algorithm~\ref{alg:LSVI}. As we use $V_{r,h+1}^k$ to estimate $V_{r,h+1}^{\pi^k}$, it leads a model prediction error in the Bellman equations, $\iota_{r,h}^k \DefinedAs r_h^k+\mathbb{P}_h V_{r,h+1}^k-Q_{r,h}^k$; similarly define $\iota_{g,h}^k$. In Appendix~\ref{subsec.ucb-bandit}, the UCB optimism of $\iota_{\diamond,h}^k$ with $\diamond=r$ or $g$, shows that or any $(k,h)\in[K]\times[H]$ and $(x,a)\in\calS\times\calA$, w.p. $1-p/2$, we have
\[
- 2 (\Gamma_h^k+\Gamma_{\diamond,h}^k)(x,a) \;\leq\; \iota_{\diamond,h}^k(x,a) \;\leq\; 0.
\]

By assumptions of Theorem~\ref{thm.main-full}, the policy improvement~\eqref{eq.closed} yields Lemma~\ref{lem.1}, depicting total differences of estimates $ V_{r,1}^{k}(x_1)$, $ V_{g,1}^{k}(x_1)$ to the optimal ones. 
\begin{manuallemma}{1}[Policy Improvement: Primal-Dual Mirror Descent Step]
	\label{lem.1}
	Let assumptions of Theorem~\ref{thm.main-full} hold. Then, 
	$\sum_{k\,=\,1}^{K}
	\big(V_{r,1}^{\pi^\star}(x_1) - V_{r,1}^{k}(x_1)\big) \,+\,\sum_{k\,=\,1}^{K} Y^{k} \big(V_{g,1}^{\pi^\star} (x_1)- V_{g,1}^{k}(x_1)\big)
	\leq O(H^{2.5}\sqrt{T\log|\calA|})+\sum_{k\,=\,1}^{K} \sum_{h\,=\,1}^H \mathbb{E}_{\pi^\star}[\iota_{r,h}^{k}(x_h,a_h)+Y^{k}\iota_{g,h}^{k}(x_h,a_h)].$
\end{manuallemma}
Lemma~\ref{lem.1} displays a coupling between the regret and the constraint violation. This coupling also finds use in online convex optimization~\cite{mahdavi2012trading,yuan2018online,wei2019online,koppel2019projected} and CMDP problems~\cite{efroni2020exploration}. 
The proof of Lemma~\ref{lem.1} takes a primal-dual mirror descent type analysis of line~6 of Algorithm~\ref{alg:OPDOP} by utilizing the performance difference lemma. Our primal-dual mirror descent analysis enable standard ones in~\cite{schulman2017proximal,liu2019neural,cai2019provably} to deal with constraints. 

Via the dual update~\eqref{eq.dual}, we show the second total differences $-\sum_{k\,=\,1}^{K} Y^{k} \big(V_{g,1}^{\pi^\star} (x_1)- V_{g,1}^{k}(x_1)\big)$ scales $O(\sqrt{K})$. Together with a decomposition of $\normalfont\text{(R.II)}$, $\normalfont\text{(R.II)} =-\sum_{k\,=\,1}^{K}\sum_{h\,=\,1}^H\iota_{r,h}^k(x_h^k,a_h^k)+ M_{r,H,2}^K$, where $M_{r,H,2}^K$ is a martingale, we have Lemma~\ref{lem.2}.
\begin{manuallemma}{2}\label{lem.2}
	Let assumptions of Theorem~\ref{thm.main-full} hold. Then, 
	${\normalfont\text{Regret}}(K) \leq O(H^{2.5}\sqrt{T \log|\calA|})+ \sum_{k\,=\,1}^{K}\sum_{h\,=\,1}^H\big({ \mathbb{E}_{\pi^\star}\![\iota_{r,h}^k(x_h,a_h) ]\!-\!\iota_{r,h}^k(x_h^k,a_h^k)}\big)+ M_{r,H,2}^K.$
\end{manuallemma}
Finally, we note that $M_{r,H,2}^K$ is a martingale that scales as $O(H\sqrt{T})$ via the Azuma-Hoeffding inequality. For the model prediction error, we use the UCB optimism and apply the elliptical potential lemma.
\begin{manuallemma}{3}\label{lem.3}
	Let assumptions of Theorem~\ref{thm.main-full} hold. Then, 
	$\sum_{k\,=\,1}^{K}\!\sum_{h\,=\,1}^H\! \big(\mathbb{E}_{\pi^\star}\![\iota_{r,h}^k(x_h,a_h) ]-\iota_{r,h}^k(x_h^k,a_h^k)\big)
	\leq
	O(dH^{1.5}\!\sqrt{T\log\rbr{K}\log\rbr{{dT}/{p}}})$, $\normalfont\text{w.p.}\, 1-p/2$.
\end{manuallemma}

\begin{manuallemma}{4}\label{lem.4}
	Let assumptions of Theorem~\ref{thm.main-full} hold. Then, 
	$\big\lvert{M_{r,H,2}^K}\big\rvert \leq4 H\sqrt{T \log\rbr{{4}/{p}}}$, $\normalfont\text{w.p.}\, 1-p/2$.
\end{manuallemma}

Applying probability bounds from Lemmas~\ref{lem.3} and~\ref{lem.4} to Lemma~\ref{lem.2} yields our regret bound. 


\noindent\textbf{Constraint Violation Analysis}. We take a violation decomposition,
\[
{\text{Violation}}(K) 
\;=\;
\displaystyle\sum_{k\,=\,1}^{K}\big(b-V_{g,1}^{k}(x_1)\big)\,+\,\normalfont\text{(V.II)} 
\]
where $\normalfont\text{(V.II)} = \sum_{k\,=\,1}^{K}\big({V_{g,1}^{k}(x_1)-V_{g,1}^{\pi^k}(x_1)}\big)$. We still begin with the policy improvement~\eqref{eq.closed} to have Lemma~\ref{lem.5}, a refined result from Lemma~\ref{lem.1}.
\begin{manuallemma}{5}[Policy Improvement: Refined Primal-Dual Mirror Descent Step]
	\label{lem.5}
	Let assumptions of Theorem~\ref{thm.main-full} hold. Then, 
	$\sum_{k\,=\,1}^{K}
	\big(V_{r,1}^{\pi^\star}(x_1) - V_{r,1}^{k}(x_1)\big) +Y \sum_{k\,=\,1}^{K}  \big(b- V_{g,1}^{k}(x_1)\big)
	\leq
	O(H^{2.5} \sqrt{T \log|\calA|})$ for any $ Y\in[0,\chi].$
\end{manuallemma}
Lemma~\ref{lem.5} separates the dual update $Y^k$ in the second total differences in Lemma~\ref{lem.1}. We prove Lemma~\ref{lem.5} by combining Lemma~\ref{lem.1} with the UCB optimism and a change of variable of $Y^k$ from the dual update~\eqref{eq.dual}. 

Similar to $\normalfont\text{(R.II)}$, we also have $\normalfont\text{(V.II)}  =-\sum_{k\,=\,1}^{K}\sum_{h\,=\,1}^H\iota_{g,h}^k(x_h^k,a_h^k)+ M_{g,H,2}^K$, where $M_{g,H,2}^K$ is a martingale. By adding $\normalfont\text{(V.II)}$ to the inequality in Lemma~\ref{lem.5} with multiplier $Y\geq 0$, and also adding $\normalfont\text{(R.II)}$ to it, then taking $Y=0$ if $\sum_{k\,=\,1}^{K}  \big(b- V_{g,1}^{\pi^k}(x_1)\big)\leq 0$; otherwise $Y=\chi$, w.p. $1-p$, we have, 
\[
 \big(V_{r,1}^{\pi^\star}(x_1) - V_{r,1}^{\pi'}(x_1) \big)+\chi \big[ b- V_{g,1}^{\pi'}(x_1)\big]_{+}
\; \leq\; O(\, dH^{2.5} \sqrt{T} \log\rbr{{dT}/{p}} /{K}\,)
\]
where $  V_{r,1}^{\pi'}(x_1) = \frac{1}{K}\sum_{k\,=\,1}^{K} V_{r,1}^{\pi^k}(x_1)$ and $  V_{g,1}^{\pi'}(x_1) = \frac{1}{K}\sum_{k\,=\,1}^{K} V_{g,1}^{\pi^k}(x_1)$ for a policy $\pi' $. Here, we bound $\Gamma_{h}^k+\Gamma_{\diamond,h}^k$ and $M_{\diamond,H,2}^K$ as done in Lemmas~\ref{lem.3} and~\ref{lem.4}.

Last, by strong duality, applying the constraint violation bound from Lemma~\ref{thm.violationgeneral} in Appendix~\ref{app.sec.opt} leads to
$\big[ b-V_{g,1}^{\pi'}(x_1)\big]_{+}\leq O( dH^{2.5} \sqrt{T} \log\rbr{{dT}/{p}} /(\chi K))$. This gives our desired violation bound.

\vspace*{-0.1cm}
\section{Further Results on Tabular Case}\label{sec.tabular}
\vspace*{-0.1cm}

The tabular $\text{\normalfont CMDP}(\calS,\calA,H,\mathbb{P},r,g)$ is a special case of Assumption~\ref{as.linearMDP} with $|\calS|<\infty$ and $|\calA|<\infty$. Let $d_1 = |\calS|^2|\calA|$ and $d_2=|\calS||\calA|$. We take the following feature maps $\psi(x,a,x') \in \mathbb{R}^{d_1}$, $\varphi(x,a)\in \mathbb{R}^{d_2}$, and parameter vectors,
\begin{equation}\label{eq.tabfea}
\begin{array}{rcl}
&& \!\!\!\! \!\!\!\! \!\!\!\! \!\!\!\! \psi(x,a,x') = \mathbf{e}_{(x,a,x')},\, \theta_h=\mathbb{P}_h(\,\cdot\,,\cdot\,,\cdot\,)
\\[0.2cm]
&& \!\!\!\! \!\!\!\! \!\!\!\! \!\!\!\! \varphi(x,a) = \mathbf{e}_{(x,a)},\, \theta_{r,h}=r_h(\,\cdot\,,\cdot\,) ,\,\theta_{g,h}=g_h(\,\cdot\,,\cdot\,)
\end{array}
\end{equation}
where $\mathbf{e}_{(x,a,x')}$ is a canonical basis of $\mathbb{R}^{d_1}$ associated with $(x,a,x')$ and $\theta_h = \mathbb{P}_h(\,\cdot\,,\cdot\,,\cdot\,)$ reads that for any $(x,a,x')\in\calS\times\calA\times\calS$, the $(x,a,x')$th entry of $\theta_h$ is $\mathbb{P}(x'\,|\,x,a)$; similarly we define $\mathbf{e}_{(x,a)}$, $\theta_{r,h}$, and $\theta_{g,h}$. We can verify that $\norm{\theta_h}\leq \sqrt{d_1}$, $\norm{\theta_{r,h}}\leq \sqrt{d_2}$, $\norm{\theta_{g,h}}\leq \sqrt{d_2}$, and for any $V$: $\calS\to[0,H]$ and any $(x,a)\in\calS\times\calA$, we have 
$\norm{ \sum_{x'\,\in\,\calS} \psi(x,a,x') V(x') }\leq\sqrt{|\calS|} H \leq \sqrt{d_1} H$.
Therefore, we take $d \DefinedAs\max\rbr{d_1,d_2} = |\calS|^2|\calA|$ in Assumption~\eqref{as.linearMDP} for the tabular case.

The proof of Theorem~\ref{thm.main-full} is generic, since it is ready to achieve sublinear regret and constraint violation bounds as long as the policy evaluation is sample-efficient, e.g., the UCB design of `optimism in the face of uncertainty.' In what follows, we introduce another efficient policy evaluation for line~10 of Algorithm~\ref{alg:OPDOP} in the tabular case. Let us first introduce some notation. For any $(h,k)\in[H]\times[K]$, any $(x,a,x')\in\calS\times\calA\times\calS$, and any $(x,a)\in\calS\times\calA$, we define two visitation counters $n_h^k(x,a,x')$ and $n_h^k(x,a)$ at step $h$ in episode $k$,
\begin{equation}\label{eq.counters}
\begin{array}{rcl}
\!\!\!\! \!\! n_h^k(x,a,x') &\!\!=\!\!& \displaystyle\sum_{\tau\,=\,1}^{k-1} \one\{(x,a,x') = (x_h^\tau,a_h^\tau,a_{h+1}^\tau) \}
\\[0.2cm]
\!\!\!\! \!\! n_h^k(x,a) &\!\!=\!\!& \displaystyle \sum_{\tau\,=\,1}^{k-1} \one\{(x,a) = (x_h^\tau,a_h^\tau) \}.
\end{array}
\end{equation}
This allows us to estimate reward function $r$, utility function $g$, and transition kernel $\mathbb{P}_h$ for episode $k$ by 
\begin{equation}\label{eq.rg}
\begin{array}{rcl}
\!\!\!\! \!\! \hat{r}_h^k(x,a) &\!\!=\!\!&\displaystyle \sum_{\tau\,=\,1}^{k-1} \frac{ \one\{ (x,a) = (x_h^\tau,a_h^\tau) \} r_h(x_h^\tau,a_h^\tau)}{n_h^k(x,a) +\lambda} 
\\[0.2cm]
\!\!\!\! \!\! \hat{g}_h^k(x,a) &\!\!=\!\!&\displaystyle \sum_{\tau\,=\,1}^{k-1} \frac{\one\{ (x,a) = (x_h^\tau,a_h^\tau) \} g_h(x_h^\tau,a_h^\tau)}{n_h^k(x,a) +\lambda}
\end{array}
\end{equation}
\vspace*{-0.2cm}
\begin{equation}\label{eq.P}
\hat{\mathbb{P}}_h^k (x'\,|\,x,a) 
\;=\;
\frac{n_h^k(x,a,x')}{n_h^k(x,a) + \lambda}
\end{equation}
for all $(x,a,x')\in\calS\times\calA\times\calS$, $(x,a)\in\calS\times\calA$
where $\lambda>0$ is the regularization parameter. Moreover, we introduce the bonus term $\Gamma_{h}^k$: $\calS\times\calA\to\mathbb{R}$,
$
\Gamma_h^k (x,a) \;=\; \beta \rbr{n_h^k(x,a)+\lambda}^{-1/2}
$
which adapts the counter-based bonus terms in the literature~\cite{azar2017minimax,jin2018q}, where $\beta>0$ is to be determined later. 

Using the estimated transition kernels $\{\hat{\mathbb{P}}_h^k\}_{h\,=\,1}^H$, the estimated reward/utility functions $\{\hat{r}_h^k,\hat{g}_h^k\}_{h\,=\,1}^H$, and the bonus terms $\{\Gamma_h^k\}_{h\,=\,1}^H$, we now can estimate the action-value function via line~7 of Algorithm~\ref{alg:tbandit}
for any $(x,a)\in\calS\times\calA$, where $\diamond = r$ or $g$. Thus, $V_{\diamond,h}^k(x) = \langle{Q_{\diamond,h}^k(x, \cdot)}, {\pi_h^k(\cdot\,|\,x)}\rangle_\calA$. We summarize the above procedure in Algorithm~\ref{alg:tbandit}. Using already estimated $\{Q_{r,h}^k(\cdot, \cdot),Q_{g,h}^k(\cdot, \cdot)\}_{h\,=\,1}^H$, we execute the policy improvement and the dual update in Algorithm~\ref{alg:OPDOP}.

Similar to Theorem~\ref{thm.main-full}, we provide theoretical guarantees in Theorem~\ref{thm.tabular}; see Appendix~\ref{app.further} for the proof. Theorem~\ref{thm.tabular} improves $(|\calS|,|\calA|)$ dependence in Theorem~\ref{thm.main-full} for the tabular case and also improves $|\calS|$ dependence in~\cite{efroni2020exploration}. It is worthy mentioning  our Algorithm~\ref{alg:OPDOP} is generic in handling an infinite state space.

\begin{theorem}[Tabular Case: Regret and Constraint Violation]\label{thm.tabular}
	Let Assumption~\ref{as.slater} hold and let Assumption~\ref{as.linearMDP} hold with feature maps~\eqref{eq.tabfea}.
	Fix $p\in\rbr{0,1}$. 
	In Algorithm~\ref{alg:OPDOP}, we set
	$\alpha={\sqrt{\log\abr{\calA}}}/(H^2K)$, $\beta = C_1 H\sqrt{|\calS| \log(|\calS||\calA|T/p)}$, $\eta=1/\sqrt{K}$, $\theta = {1}/K$, and $\lambda=1$ where $C_1$ is an absolute constant. Then, with probability $1-p$, the regret and the constraint violation in~\eqref{eq.regret} satisfy 
	\[
	\begin{array}{rcl}
	\text{\normalfont Regret}(K) &\!\!\leq\!\!&  \displaystyle C |\calS|\sqrt{|\calA| H^{5} T}\log\!\rbr{\!\tfrac{|\calS||\calA|T}{p}\!}
	\\[0.2cm]
	\text{\normalfont Violation}(K) &\!\!\leq\!\!&  \displaystyle C'|\calS|\sqrt{|\calA| H^{5}T} \log\!\rbr{\!\tfrac{|\calS||\calA|T}{p}\!}
	\end{array}
	\]
	where $C$ and $C'$ are absolute constants. 
\end{theorem}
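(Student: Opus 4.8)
The plan is to reuse the proof of Theorem~\ref{thm.main-full} essentially verbatim, exploiting that this argument is generic in the policy-evaluation subroutine. The primal-dual mirror-descent/policy-improvement steps (Lemmas~\ref{lem.1},~\ref{lem.2},~\ref{lem.5}), the Azuma--Hoeffding martingale bound on $M_{\diamond,H,2}^K$ (Lemma~\ref{lem.4}), and the final strong-duality conversion (Lemma~\ref{lem.sd-b} together with Lemma~\ref{thm.violationgeneral}) use only the boundedness $Q_{\diamond,h}^k\in[0,H-h+1]$, the dual bound $\chi=2H/\gamma$, and the abstract decomposition of regret and violation into a prediction-error part plus a policy-improvement part. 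Since Algorithm~\ref{alg:tbandit} feeds the same Algorithm~\ref{alg:OPDOP} and produces $Q_{\diamond,h}^k,V_{\diamond,h}^k$ with the same range, and since $\alpha,\eta,\theta$ are unchanged, these steps carry over with no modification, still yielding the policy-improvement bound $O(H^{2.5}\sqrt{T\log|\calA|})$ and martingale bound $O(H\sqrt{T\log(1/p)})$. Consequently, only two ingredients must be re-derived for the count-based evaluator: (i) the UCB optimism of $\iota_{\diamond,h}^k=\diamond_h+\mathbb{P}_hV_{\diamond,h+1}^k-Q_{\diamond,h}^k$, and (ii) the aggregate bound on $\sum_{k,h}\Gamma_h^k(x_h^k,a_h^k)$ that replaces the elliptical-potential Lemma~\ref{lem.3}.

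For (i), I would show that with probability $1-p/2$, for all $(k,h)$ and $(x,a)$, $-2\Gamma_h^k(x,a)\le\iota_{\diamond,h}^k(x,a)\le0$ with $\Gamma_h^k(x,a)=\beta(n_h^k(x,a)+\lambda)^{-1/2}$. Ignoring the clipping, $\iota_{\diamond,h}^k=(\diamond_h-\hat\diamond_h^k)+(\mathbb{P}_h-\hat{\mathbb{P}}_h^k)V_{\diamond,h+1}^k-\Gamma_h^k$, so it suffices to dominate the two estimation errors by the single bonus $\Gamma_h^k$. The reward/utility term is a pure regularization bias: since $\diamond_h$ is deterministic, $\hat\diamond_h^k(x,a)=\tfrac{n_h^k(x,a)}{n_h^k(x,a)+\lambda}\diamond_h(x,a)$, hence $|\diamond_h-\hat\diamond_h^k|\le(n_h^k+\lambda)^{-1}$ for $\lambda=1$, which is lower order. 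The transition term I would control by the $\ell_1$ deviation bound for empirical distributions (method of types / Weissman): conditioning on the $n_h^k(x,a)$ visits, the sampled next states are i.i.d.\ draws from $\mathbb{P}_h(\cdot\,|\,x,a)$, so after a union bound over the $|\calS||\calA|HK$ tuples, $\|\hat{\mathbb{P}}_h^k(\cdot\,|\,x,a)-\mathbb{P}_h(\cdot\,|\,x,a)\|_1\le\sqrt{2|\calS|\log(|\calS||\calA|T/p)/n_h^k(x,a)}$. Because $V_{\diamond,h+1}^k\in[0,H]$, this gives $|(\mathbb{P}_h-\hat{\mathbb{P}}_h^k)V_{\diamond,h+1}^k|\le H\|\hat{\mathbb{P}}_h^k-\mathbb{P}_h\|_1$, and both errors are absorbed by $\Gamma_h^k$ once $\beta=C_1H\sqrt{|\calS|\log(|\calS||\calA|T/p)}$. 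Crucially, the $\ell_1$ bound is uniform over all value functions and thus sidesteps the data-dependence of $V_{\diamond,h+1}^k$; the unvisited case $n_h^k=0$ is trivial since then $\hat{\mathbb{P}}_h^k\equiv0$ and $|(\mathbb{P}_h-\hat{\mathbb{P}}_h^k)V_{\diamond,h+1}^k|\le H\le\beta=\Gamma_h^k$.

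For (ii), I would replace the elliptical-potential argument by a direct counting bound. For each fixed $(h,x,a)$, ordering the visits gives $\sum_{k}\one\{(x_h^k,a_h^k)=(x,a)\}(n_h^k(x,a)+\lambda)^{-1/2}\le\sum_{n=0}^{N_h(x,a)-1}(n+\lambda)^{-1/2}\le2\sqrt{N_h(x,a)}$, where $N_h(x,a)$ is the total visit count. Summing over $(x,a)$ and applying Cauchy--Schwarz with $\sum_{x,a}N_h(x,a)=K$ yields $\sum_{x,a}2\sqrt{N_h(x,a)}\le2\sqrt{|\calS||\calA|K}$, and summing over $h$ gives $\sum_{k,h}\Gamma_h^k(x_h^k,a_h^k)\le2\beta H\sqrt{|\calS||\calA|K}=O(H^2|\calS|\sqrt{|\calA|K\log(|\calS||\calA|T/p)})$. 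By the optimism of (i), the same estimate also controls the $\mathbb{E}_{\pi^\star}[\iota_{\diamond,h}^k]$ terms that appear in Lemmas~\ref{lem.2} and~\ref{lem.5}.

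Finally, substituting (i) and (ii) in place of the linear-case UCB-optimism step and Lemma~\ref{lem.3} inside the proofs of Lemmas~\ref{lem.2} and~\ref{lem.5}, and running the unchanged policy-improvement, martingale, and strong-duality (Lemma~\ref{thm.violationgeneral}) steps, bounds both regret and violation by the sum of the three contributions. Since $\log|\calA|\le\log(|\calS||\calA|T/p)$ and $K=T/H$, each contribution is dominated by $|\calS|\sqrt{|\calA|H^5T}\,\log(|\calS||\calA|T/p)$, giving the claimed rates. The main obstacle is step (i): making the transition concentration simultaneously uniform over all $(k,h,x,a)$ and valid for the data-dependent value functions $V_{\diamond,h+1}^k$ under the algorithm's adaptive sampling. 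The right move is the value-function-free $\ell_1$ bound, with the visit count handled by a stopping-time/martingale conditioning so that the post-visit next states may be treated as i.i.d.; this simultaneously resolves the adaptivity issue and explains why the final rate carries one $\sqrt{|\calS|}$ from concentration (through $\beta$) and an additional $\sqrt{|\calS||\calA|}$ from the counting sum.
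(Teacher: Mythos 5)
Your proposal is correct, and its overall architecture is exactly the paper's: the paper's own proof of this theorem likewise observes that the policy-improvement lemmas, the martingale bound, and the strong-duality conversion via Lemma~\ref{thm.violationgeneral} are untouched by the change of evaluator, so that only the optimism of $\iota_{\diamond,h}^k$ and the accumulated-bonus bound need to be re-established. Where you genuinely diverge is in how you prove those two ingredients. For optimism, the paper controls $(\hat{\mathbb{P}}_h^k-\mathbb{P}_h)V_{\diamond,h+1}^k$ by a self-normalized-process concentration (Lemma~\ref{lem.CSNP}) combined with an $\epsilon$-covering of the class $\calV=\{V:\calS\to[0,H]\}$ and a union bound over the cover and over $(x,a)$, the $\sqrt{|\calS|}$ in $\beta$ coming from $\log|\calV_\epsilon|$; you instead invoke Weissman's $\ell_1$ deviation bound for the empirical transition kernel, which is automatically uniform over all $[0,H]$-bounded value functions via \Holder's inequality, with the $\sqrt{|\calS|}$ coming from the $\ell_1$ concentration itself and adaptivity handled by conditioning on visit counts plus a union bound over $(x,a,h)$ and counts. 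Your route is more elementary and sidesteps the covering argument entirely, which is attractive in the tabular case; the paper's route has the virtue of reusing verbatim the machinery that drives the linear-MDP Theorem~\ref{thm.main-full}. For the bonus sum, the paper again recycles the elliptical potential lemma (Lemma~\ref{lem.bdsums}) applied to the indicator features $\bar\phi(x,a)=\mathbf{e}_{(x,a)}$, picking up a $\sqrt{\log K}$ factor, whereas your pigeonhole bound $\sum_{n<N}(n+1)^{-1/2}\le 2\sqrt{N}$ plus Cauchy--Schwarz is slightly tighter and more transparent; both yield $O(\beta H\sqrt{|\calS||\calA|K})$ up to logarithms, which is all the final rate needs. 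Two cosmetic mismatches with the paper, neither a gap: Algorithm~\ref{alg:tbandit} adds bonus $2\Gamma_h^k$ and the paper proves $-4\Gamma_h^k\le\iota_{\diamond,h}^k\le 0$ (your single-$\Gamma_h^k$ normalization differs only by constants), and the paper splits the failure probability as $p/2$ and $p/3$ across events rather than your $p/2$, again immaterial to the stated bounds.
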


\begin{algorithm}[H]
	\caption{Optimistic Policy Evaluation (OPE)}
	\label{alg:tbandit}
	\begin{algorithmic}[1]
		\State \textbf{Input}: $\{x_h^\tau,a_h^\tau,r_h(x_h^\tau,a_h^\tau), g_h(x_h^\tau,a_h^\tau)\}_{h,\tau\,=\,1}^{H,k}$.
		\State \textbf{Initialization}: Set $\{V_{r,H+1}^k,V_{g,H+1}^k\}$ be zero functions, and $\lambda=1$, $\beta= C_1 H\sqrt{|\calS| \log(|\calS||\calA|T/p)}$.
		\For{step $h=H,H-1,\cdots,1$} \Comment{$\diamond = r$, $g$}
		\State \!\!\!\! Compute counters $n_h^k(x,a,x')$ and $n_h^k(x,a)$ via~\eqref{eq.counters} for all $(x,a,x')\in\calS\times\calA\times\calS$ and $(x,a)\in\calS\times\calA$.
		\State  Estimate reward/utility functions $\hat r_h^k$, $\hat g_h^k$ via~\eqref{eq.rg} for all $(x,a)\in\calS\times\calA$.
		\State \!\!\!\! Estimate transition $\hat{\mathbb{P}}_h^k$ via~\eqref{eq.P} for all $(x,a,x')\in\calS\times\calA\times\calS$, and take bonus $\Gamma_h^k=\beta \rbr{n_h^k(x,a)+\lambda}^{-1/2}$ for all $(x,a)\in\calS\times\calA$.
		\State $Q_{\diamond,h}^k(\,\cdot\,,\,\cdot\,) \,\leftarrow\, \min\big(\,\hat\diamond_h^k(\,\cdot\,,\,\cdot\,) +\sum_{x'\,\in\,\calS} \hat{\mathbb{P}}_h(x'\,|\,\cdot,\cdot) V_{\diamond,h+1}^k(x')+ 2\Gamma_{h}^k(\,\cdot\,,\,\cdot\,),\, H-h+1\,\big)^+ $.
		\State $V_{\diamond,h}^k(\,\cdot\,) \,\leftarrow\, \big\langle {Q_{\diamond,h}^k(\,\cdot\,,\,\cdot\,)},{\pi_h^k(\,\cdot\,\vert\, \cdot\,)}\big\rangle_\calA$.
		\EndFor
		\State \textbf{Return}: $\{Q_{r,h}^k(\,\cdot\,,\,\cdot\,),Q_{g,h}^k(\,\cdot\,,\,\cdot\,)\}_{h\,=\,1}^H$.
	\end{algorithmic}
\end{algorithm}

\section{Concluding Remarks}
\label{conclusion}

In this paper, we have developed a provably efficient safe reinforcement learning algorithm in the linear MDP setting. The algorithm extends the proximal policy optimization to the constrained MDP and incorporates the UCB exploration. We prove that the proposed algorithm obtains an $\tilde{O}(\sqrt{T})$ regret and an $\tilde{O}(\sqrt{T})$ constraint violation under mild regularity conditions where $T$ is the total number of steps taken by the algorithm. Moreover, our algorithm works in settings where reward/utility functions are given by bandit feedback. To the best of our knowledge, our algorithm is the first provably efficient online policy optimization algorithm for CMDP with safe exploration in the function approximation setting. We hope that our work provides a step towards a principled way to~design efficient safe reinforcement learning algorithms. 




\newpage
\bibliography{dd-bib}
\bibliographystyle{abbrv} %

\newpage
\appendix

\section{Preliminaries}

Our analysis begins with decomposition of the regret given in~\eqref{eq.regret}. 
\begin{equation}\label{eq.regretde1}
{\text{Regret}}(K) \;=\; \underbrace{\sum_{k\,=\,1}^{K}\big(V_{r,1}^{\pi^\star}(x_1) - V_{r,1}^{k}(x_1)\big)}_{\normalfont\text{(R.I)} } \,+\, \underbrace{\sum_{k\,=\,1}^{K}\big(V_{r,1}^{k}(x_1) -V_{r,1}^{\pi^k}(x_1)\big)}_{\normalfont\text{(R.II)} }
\end{equation}
where we add and subtract the value $V_{r,1}^{k}(x_1)$ estimated from an optimistic policy evaluation by Algorithm~\ref{alg:LSVI}; the policy $\pi^\star$ in hindsight is the best policy in hindsight for problem~\eqref{eq.hindsight}. To bound the total regret~\eqref{eq.regretde1}, we would like to analyze~$\normalfont\text{(R.I)}$ and~$\normalfont\text{(R.II)}$ separately. 

First, we define the model prediction error for the reward as 
\begin{equation}\label{eq.mper}
\iota_{r,h}^k \;\DefinedAs\; r_h \,+\, \mathbb{P}_h V_{r,h+1}^k \,-\, Q_{r,h}^k
\end{equation}
for all $(k,h)\in[K]\times[H]$, which describes the prediction error in the Bellman equations~\eqref{eq.bellman} using $V_{r,h+1}^{k}$ instead of $V_{r,h+1}^{\pi^k}$. With this notation, we expand~$\normalfont\text{(R.I)}$ into 
\begin{equation}\label{eq.RI}
\sum_{k\,=\,1}^{K}\sum_{h\,=\,1}^{H} \mathbb{E}_{\pi^\star} \Big[ \big\langle{Q_{r,h}^k(x_h,\,\cdot\,)},{\pi_h^\star( \,\cdot\,\vert\,  x_h) - \pi_h^k(\,\cdot \,\vert \, x_h)}\big\rangle \Big]
\,+\,
\sum_{k\,=\,1}^{K}\sum_{h\,=\,1}^H \mathbb{E}_{\pi^\star}\Big[\iota_{r,h}^k(x_h,a_h) \Big]
\end{equation}
where the first double sum is linear in terms of the policy difference and the second one describes the total model prediction error. The above expansion is based on the standard performance difference lemma (see Lemma~\ref{lem.PDL}) and we provide a proof in~Section~\ref{subsec.RCI} for readers' convenience. Meanwhile, if we define the model prediction error for the utility as 
\begin{equation}\label{eq.mpeg}
\iota_{g,h}^k \;\DefinedAs\; g_h \,+\, \mathbb{P}_h V_{g,h+1}^k \,-\, Q_{g,h}^k
\end{equation}
then, similarly, we can expand $\sum_{k\,=\,1}^{K}\big(V_{g,1}^{\pi^\star}(x_1) - V_{g,1}^{k}(x_1)\big)$ into 
\begin{equation}\label{eq.gRI}
\sum_{k\,=\,1}^{K}\sum_{h\,=\,1}^{H} \mathbb{E}_{\pi^\star} \sbr{ \big\langle{Q_{g,h}^k(x_h,\,\cdot\,)},{\pi_h^\star(\,\cdot\,\vert\,  x_h) - \pi_h^k(\,\cdot\,\vert\, x_h)}\big\rangle } 
\,+\,
\sum_{k\,=\,1}^{K}\sum_{h\,=\,1}^H \mathbb{E}_{\pi^\star}\Big[\iota_{g,h}^k(x_h,a_h) \Big].
\end{equation}

To analyze the constraint violation, we also introduce an useful decomposition, 
\begin{equation}\label{eq.violation1}
{\text{Violation}}(K) 
\;=\;
\displaystyle\sum_{k\,=\,1}^{K}\big(b-V_{g,1}^{k}(x_1)\big)\,+\,\underbrace{\sum_{k\,=\,1}^{K}\big({V_{g,1}^{k}(x_1)-V_{g,1}^{\pi^k}(x_1)}\big)}_{\normalfont\text{(V.II)} }
\end{equation}
which the inserted value $V_{g,1}^k(x_1)$ is estimated from an optimistic policy evaluation by Algorithm~\ref{alg:LSVI}.

For notational simplicity, we introduce the underlying probability structure as follows. For any $(k,h) \in [K]\times[H]$, we define $\calF_{h,1}^k$ as a $\sigma$-algebra generated by state-action sequences, reward and utility functions,
\[
\{\rbr{x_{i}^\tau,a_{i}^\tau}\}_{(\tau,i)\,\in\,[k-1]\times[H]}
\,\bigcup\,
\{(x_i^k,a_i^k)\}_{i\,\in\,[h]}.
\]
Similarly, we define $\calF_{h,2}^k$ as an $\sigma$-algebra generated by
\[
\{\rbr{x_{i}^\tau,a_{i}^\tau}\}_{(\tau,i)\,\in\,[k-1]\times[H]} 
\,\bigcup\,
\{(x_i^k,a_i^k)\}_{i\,\in\,[h]}
\,\bigcup\,
\{x_{h+1}^k\}.
\]
Here, $x_{H+1}^k$ is a null state for any $k\in[K]$. A filtration is a sequence of $\sigma$-algebras $\{\calF_{h,m}^k\}_{(k,h,m)\,\in\,[K]\times[H]\times[2]}$ in terms of time index
\begin{equation}\label{eq.t}
t(k,h,m) \;\DefinedAs\; 2(k-1)H \,+\, 2(h-1) \,+\, m
\end{equation}
which holds that $\calF_{h,m}^k\subset\calF_{h',m'}^{k'}$ for any $t\leq t'$. 
The estimated reward/utility value functions, $V_{r,h}^k,V_{g,h}^k$, and the associated $Q$-functions, $Q_{r,h}^k,Q_{g,h}^k$ are $\calF_{1,1}^k$-measurable since they are obtained from previous $k-1$ historical trajectories. With these notations, we can expand~$\normalfont\text{(R.II)}$ in~\eqref{eq.regretde1} into 
\begin{equation}\label{eq.RII}
\normalfont\text{(R.II)}\; = \; -\sum_{k\,=\,1}^{K}\sum_{h\,=\,1}^H\iota_{r,h}^k(x_h^k,a_h^k)
\,+\,
M_{r,H,2}^K
\end{equation}
where $\{M_{r,h,m}^k\}_{(k,h,m)\in[K]\times[H]\times[2]}$ is a martingale adapted to the filtration $\{\calF_{h,m}^k\}_{(k,h,m)\in[K]\times[H]\times[2]}$ in terms of time index $t$.
Similarly, we have it for~$\normalfont\text{(V.II)}$,
\begin{equation}\label{eq.CII}
\normalfont\text{(V.II)}\; = \; -\sum_{k\,=\,1}^{K}\sum_{h\,=\,1}^H\iota_{g,h}^k(x_h^k,a_h^k)
\,+\,
M_{g,H,2}^K
\end{equation}
where $\{M_{g,h,m}^k\}_{(k,h,m)\in[K]\times[H]\times[2]}$ is a martingale adapted to the filtration $\{\calF_{h,m}^k\}_{(k,h,m)\in[K]\times[H]\times[2]}$ in terms of time index $t$. We prove~\eqref{eq.RII} in~Section~\ref{subsec.RCII} for completeness (also see~\cite[Lemma~4.2]{cai2019provably});~\eqref{eq.CII} is similar.

We recall two UCB bonus terms $\Gamma_{\diamond,h}^k \DefinedAs\beta ((\phi_{\diamond,h}^k)^\top (\Lambda_{\diamond,h}^k)^{-1}\phi_{\diamond,h}^k)^{1/2}$ and $\Gamma_h^k \DefinedAs \beta ((\varphi)^\top (\Lambda_h^k)^{-1}\varphi)^{1/2}$ in the action-value function estimation of Algorithm~\ref{alg:LSVI}. By the UCB argument, if we set $\lambda=1$ and $\beta=C_1 \sqrt{dH^2\log(dT/p)}$ where $C_1$ is an absolute constant, then for any $(k,h)\in[K]\times[H]$ and $(x,a)\in\calS\times\calA$, we have
\begin{equation}\label{eq.ucb-bandit}
-2(\Gamma_{h}^k+\Gamma_{\diamond,h}^k)(x,a) \;\leq\;{\iota_{\diamond,h}^k\rbr{x,a}} \;\leq\; 0
\end{equation}
with probability $1-p/2$ where the symbol $\diamond=r$ or $g$. We prove~\eqref{eq.ucb-bandit} in Section~\ref{subsec.ucb-bandit} for completeness.

In what follows we delve into the analysis of the regret and the constraint violation.

\section{Proof of Regret and Constraint Violation}
\label{ap.main1}

The goal is to prove that the regret and the constraint violation for Algorithm~\ref{alg:OPDOP} are sublinear in the total number of steps: $T\DefinedAs KH$, taken by the algorithm, where $K$ is the total number of episodes and $H$ is the horizon length. We recall that $|\calA|$ is the size of action space $\calA$ and $d$ is the feature map's dimension. We repeat Theorem~\ref{thm.main-full} here for readers' convenience.

\begin{manualtheorem}{1}[Linear Kernal MDP: Regret and Constraint Violation]
	\label{thm.full}
	Let Assumptions~\ref{as.slater} and~\ref{as.linearMDP} hold. 
	Fix $p\in\rbr{0,1}$. 
	We set
	$\alpha={\sqrt{\log\abr{\calA}}}/(H^2K)$, $\beta=C_1 \sqrt{dH^2\log\rbr{{dT}/{p}}}$, $\eta=1/\sqrt{K}$, $\theta = 1/K$, and $\lambda=1$ in Algorithm~\ref{alg:OPDOP} with the full-information setting, where $C_1$ is an absolute constant. Suppose $\log\abr{\calA}=O\rbr{d^2\log^2\rbr{{dT}/{p}}}$. Then, the regret and the constraint violation in~\eqref{eq.regret} satisfy 
	\[
	\text{\normalfont Regret}(K) \;\leq\; \displaystyle C\, dH^{2.5}\sqrt{T} \,\log\rbr{\frac{dT}{p}}
	\;\,\text{ and }\;\,
	\text{\normalfont Violation}(K) \;\leq\; \displaystyle C'\,dH^{2.5}\sqrt{T} \,{\log\rbr{\frac{dT}{p}}}
	\]
	with probability $1-p$ where $C$ and $C'$ are absolute constants. 
\end{manualtheorem}

We divide the proof into two parts for the regret bound and the constraint violation, respectively, in Section~\ref{subsec.rf} and Section~\ref{subsec.cf}. 

\subsection{Proof of Regret Bound}\label{subsec.rf}

Our analysis begins with a primal-dual mirror descent type analysis for the policy update in line~6 of Algorithm~\ref{alg:OPDOP}. In Lemma~\ref{lem.PDMD}, we present a key upper bound on the total differences of estimated values $ V_{r,1}^{k}(x_1)$ and $ V_{g,1}^{k}(x_1)$ given by Algorithm~\ref{alg:LSVI} to the optimal ones.

\begin{lemma}[Policy Improvement: Primal-Dual Mirror Descent Step]
	\label{lem.PDMD}
	Let Assumption~\ref{as.slater} and Assumption~\ref{as.linearMDP} hold. In Algorithm~\ref{alg:OPDOP}, if we set $\alpha={\sqrt{\log|\calA|}}/(H^2\sqrt{K})$ and $\theta = 1/K$, then
	\begin{equation}\label{eq.optcons}
	\begin{array}{rcl}
	&&\!\!\!\!\!\!\!\!\!\!  \displaystyle\sum_{k\,=\,1}^{K}
	\big(V_{r,1}^{\pi^\star}(x_1) - V_{r,1}^{k}(x_1)\big) \,+\,\sum_{k\,=\,1}^{K} Y^{k} \big(V_{g,1}^{\pi^\star} (x_1)- V_{g,1}^{k}(x_1)\big)
	\\[0.2cm]
	&\!\!\leq\!\!&
	\displaystyle C_2 H^{2.5}\sqrt{T\log|\calA|}
	\,+\, \sum_{k\,=\,1}^{K} \sum_{h\,=\,1}^H \mathbb{E}_{\pi^\star}\Big[\iota_{r,h}^{k}(x_h,a_h)\Big]\,+\, Y^{k} \sum_{k\,=\,1}^{K}\sum_{h\,=\,1}^H \mathbb{E}_{\pi^\star}\Big[\iota_{g,h}^{k}(x_h,a_h)\Big].
	\end{array}
	\end{equation}
	where $C_2$ is an absolute constant and $T=HK$.
\end{lemma}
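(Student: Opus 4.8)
The plan is to collapse the left-hand side of \eqref{eq.optcons} into a single online-mirror-descent regret term and then bound it by the standard exponential-weights analysis. First I would expand both value gaps with the performance difference lemma: applying \eqref{eq.RI} to $\sum_k\big(V_{r,1}^{\pi^\star}(x_1)-V_{r,1}^{k}(x_1)\big)$ and $Y^k$ times \eqref{eq.gRI} to $\sum_k\big(V_{g,1}^{\pi^\star}(x_1)-V_{g,1}^{k}(x_1)\big)$, then summing over $k$. The two model-prediction-error sums $\sum_{k,h}\mathbb{E}_{\pi^\star}[\iota_{r,h}^{k}(x_h,a_h)]$ and $\sum_{k,h}Y^{k}\mathbb{E}_{\pi^\star}[\iota_{g,h}^{k}(x_h,a_h)]$ then appear verbatim on the right-hand side of \eqref{eq.optcons}, so the task reduces to controlling the combined linear-in-policy term
\[
\sum_{k=1}^{K}\sum_{h=1}^{H}\mathbb{E}_{\pi^\star}\big[\langle (Q_{r,h}^{k}+Y^{k}Q_{g,h}^{k})(x_h,\cdot),\,\pi_h^\star(\cdot\,|\,x_h)-\pi_h^{k}(\cdot\,|\,x_h)\rangle\big].
\]
Writing $L_h^{k}\DefinedAs Q_{r,h}^{k}+Y^{k}Q_{g,h}^{k}$, the key observation is that the update \eqref{eq.closed}, shifted by one episode, reads $\pi_h^{k+1}(\cdot\,|\,\cdot)\propto\tilde\pi_h^{k}(\cdot\,|\,\cdot)\,e^{\alpha L_h^{k}(\cdot,\cdot)}$, so $L_h^{k}$ is exactly the reward vector driving the step from $\pi_h^{k}$ to $\pi_h^{k+1}$; hence the display is precisely the regret of online mirror descent against the fixed comparator $\pi_h^\star$.

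Second, I would establish and apply the one-step mirror-descent inequality statewise. Because the update is separable over states, for each fixed $x$ the exponential-weights (three-point) lemma yields
\[
\langle L_h^{k}(x,\cdot),\pi_h^\star(\cdot\,|\,x)-\pi_h^{k}(\cdot\,|\,x)\rangle\le\tfrac{1}{\alpha}\big(D(\pi_h^\star(\cdot\,|\,x)\,\|\,\tilde\pi_h^{k}(\cdot\,|\,x))-D(\pi_h^\star(\cdot\,|\,x)\,\|\,\pi_h^{k+1}(\cdot\,|\,x))\big)+\tfrac{\alpha}{2}\|L_h^{k}(x,\cdot)\|_\infty^2,
\]
where the stability term already absorbs the gap between the played $\pi_h^{k}$ and the updated $\pi_h^{k+1}$. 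Taking $\mathbb{E}_{\pi^\star}$ and summing over $k$, the divergence terms telescope once I replace $D(\pi_h^\star\,\|\,\tilde\pi_h^{k})$ by $D(\pi_h^\star\,\|\,\pi_h^{k})+\log\frac{1}{1-\theta}$, which is legitimate because $\tilde\pi_h^{k}\ge(1-\theta)\pi_h^{k}$ pointwise; this is where the mixing step and Lemma~\ref{lem.mix} enter, guaranteeing finiteness of every divergence and absolute continuity of $\pi^\star$ with respect to $\tilde\pi_h^{k}$. Since the initialization forces $\pi_h^{1}$ to be uniform, the telescoped bias is at most $D(\pi_h^\star\,\|\,\text{Unif}(\calA))\le\log|\calA|$, while the mixing defects (both the $\log\frac{1}{1-\theta}$ per episode in the divergence and an $O(\theta\|L_h^k\|_\infty)$ term from substituting $\pi_h^k$ for $\tilde\pi_h^k$ in the played policy) are lower order for $\theta=1/K$.

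Finally, I would collect the $H$- and $K$-dependencies. Summing over $h\in[H]$, the bias contributes $\tfrac{H\log|\calA|}{\alpha}$ and the stability contributes $\tfrac{\alpha}{2}\sum_{k,h}\|L_h^{k}\|_\infty^2$; using $Q_{\diamond,h}^{k}\in[0,H]$ and $Y^{k}\le\chi=2H/\gamma$ gives $\|L_h^{k}\|_\infty\le(1+\chi)H=O(H^2)$, so the stability sum is $O(\alpha K H^5)$. Substituting $\alpha=\sqrt{\log|\calA|}/(H^2\sqrt{K})$ makes both quantities equal to $O(H^3\sqrt{K\log|\calA|})=O(H^{2.5}\sqrt{T\log|\calA|})$, which is the claimed bound. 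I expect the main obstacle to be the core one-step inequality in the presence of the mixing step together with the large magnitude of the reward vector: because $L_h^{k}$ carries the dual iterate $Y^{k}$ (bounded only by $\chi=O(H/\gamma)$), the stability term inflates by a factor of $H$ relative to the unconstrained analysis of~\cite{cai2019provably}, and this factor is exactly what produces the extra $H$ in the rate; ensuring that the mixing corrections and the $\pi^{k}$-versus-$\tilde\pi^{k}$ substitution remain genuinely lower order (hence the choice $\theta=1/K$) is the delicate bookkeeping.
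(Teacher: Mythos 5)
Your proposal is correct and follows essentially the same route as the paper's own proof: expand both value gaps via the performance-difference decompositions \eqref{eq.RI} and \eqref{eq.gRI}, treat the update \eqref{eq.closed} as online mirror descent on the combined loss $Q_{r,h}^{k}+Y^{k}Q_{g,h}^{k}$ with $\|Q_{r,h}^{k}+Y^{k}Q_{g,h}^{k}\|_\infty\le(1+\chi)H$, apply the pushback (three-point) inequality together with H\"older--Pinsker and square completion to get the $\tfrac{\alpha}{2}(1+\chi)^2H^2$ stability term plus the $O(\theta\|\cdot\|_\infty)$ mixing defect, telescope the divergences from the uniform initialization, and plug in $\alpha$ and $\theta$. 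The only cosmetic difference is that you bound the divergence mixing defect by $\log\tfrac{1}{1-\theta}$ directly instead of by the $\theta\log|\calA|$ bound of Lemma~\ref{lem.mix} used in the paper, which changes nothing in the final $O(H^{2.5}\sqrt{T\log|\calA|})$ rate.
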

\begin{proof}
	We recall that line~6 of Algorithm~\ref{alg:OPDOP} follows a solution $\pi^k$ to the following subproblem,
	\begin{equation}\label{eq.sub}
	\maximize_{\pi\,\in\,\Delta\rbr{\calA\vert \calS, H}} \; \sum_{h\,=\,1}^{H} \big\langle{ Q_{r,h}^{k-1}+Y^{k-1}Q_{g,h}^{k-1}},{\pi_h }\big\rangle\,-\, \frac{1}{\alpha}\sum_{h\,=\,1}^{H} D\big({\pi_h\, \vert \,\tilde{\pi}_h^{k-1}}\big)
	\end{equation}
	where we use the shorthand $\big\langle{Q_{r,h}^{k-1}+Y^{k-1}Q_{g,h}^{k-1}},{\pi_h}\big\rangle$ for $\big\langle{(Q_{r,h}^{k-1}+Y^{k-1}Q_{g,h}^{k-1})(x_h,\,\cdot\,)},{\pi_h(\,\cdot\,\vert\, x_h)}\big\rangle$ and the shorthand $D(\pi_h \,\vert\, \tilde\pi_h^{k-1})$ for $D(\pi_h(\,\cdot\,\vert\, x_h) \,\vert\, \tilde\pi_h^{k-1}(\,\cdot\,\vert\, x_h))$ if dependence on the state-action sequence $\{x_h,a_h\}_{h\,=\,1}^H$ is clear from context. We note that~\eqref{eq.sub} is in form of a mirror descent subproblem in Lemma~\ref{lem.pushback}. We can apply the pushback property with $x^\star=\pi_h^{k},y = \tilde{\pi}_h^{k-1}$ and $z=\pi_h^\star$,
	\[
	\begin{array}{rcl}
	&&\!\!\!\!\!\!\!\!\!\!  \displaystyle\sum_{h\,=\,1}^{H} \big\langle{ Q_{r,h}^{k-1}+Y^{k-1}Q_{g,h}^{k-1}},{\pi_h^{k}}\big\rangle
	\,-\,
	\frac{1}{\alpha} \sum_{h\,=\,1}^{H} D\big({\pi_h^{k}\,\vert\, \tilde{\pi}_h^{k-1}}\big)
	\\[0.2cm]
	&\!\!\geq\!\!&
	\displaystyle\sum_{h\,=\,1}^{H} \big\langle{ Q_{r,h}^{k-1}+Y^{k-1}Q_{g,h}^{k-1}},{\pi_h^\star}\big\rangle
	\,-\, 
	\dfrac{1}{\alpha} \displaystyle\sum_{h\,=\,1}^{H} D\big({\pi_h^\star\, \vert\, \tilde{\pi}_h^{k-1}}\big)
	\,+\,
	\frac{1}{\alpha} \sum_{h\,=\,1}^{H} D\big({\pi_h^\star \,\vert\, \pi_h^{k}}\big).
	\end{array}
	\]
	Equivalently, we write the above inequality as follows,
	\begin{equation}\label{eq.md}
	\begin{array}{rcl}
	&&\!\!\!\!\!\!\!\!\!\!  \displaystyle
	\sum_{h\,=\,1}^{H} \big\langle{ Q_{r,h}^{k-1}},{\pi_h^\star-\pi_h^{k-1}}\big\rangle
	\,+\, Y^{k-1}\sum_{h\,=\,1}^{H} \big\langle{Q_{g,h}^{k-1}},{\pi_h^\star-\pi_h^{k-1}}\big\rangle
	\\[0.2cm]
	&\!\!\leq\!\!&
	\displaystyle
	\sum_{h\,=\,1}^{H} \big\langle{ Q_{r,h}^{k-1}+Y^{k-1}Q_{g,h}^{k-1}},{\pi_h^{k}-\pi_h^{k-1}}\big\rangle
	\,-\,
	\frac{1}{\alpha} \sum_{h\,=\,1}^{H} D\big({\pi_h^{k}\,\vert\, \tilde{\pi}_h^{k-1}}\big)
	\\[0.2cm]
	&&
	\,+\, 
	\dfrac{1}{\alpha} \displaystyle\sum_{h\,=\,1}^{H} D\big({\pi_h^\star\, \vert\, \tilde{\pi}_h^{k-1}}\big)
	\,-\,
	\frac{1}{\alpha} \sum_{h\,=\,1}^{H} D\big({\pi_h^\star \,\vert\, \pi_h^{k}}\big).
	\end{array}
	\end{equation}
	By taking expectation $\mathbb{E}_{\pi^\star}$ on both sides of~\eqref{eq.md} over the random state-action sequence $\{(x_h,a_h)\}_{1}^H$ starting from $x_1$, and applying decompositions~\eqref{eq.RI} and~\eqref{eq.gRI}, we have
	\begin{equation}\label{eq.mdexp}
	\begin{array}{rcl}
	&&\!\!\!\!\!\!\!\!\!\!  \displaystyle
	\big(V_{r,1}^{\pi^\star}(x_1) - V_{r,1}^{k-1}(x_1)\big) \,+\,Y^{k-1} \big(V_{g,1}^{\pi^\star}(x_1) - V_{g,1}^{k-1}(x_1) \big)
	\\[0.2cm]
	&\!\!\leq\!\!&
	\displaystyle \sum_{h\,=\,1}^{H} \mathbb{E}_{\pi^\star} \Big[\big\langle{ Q_{r,h}^{k-1}+Y^{k-1}Q_{g,h}^{k-1}},{\pi_h^{k}-\pi_h^{k-1}}\big\rangle
	\Big] \,-\,
	\frac{1}{\alpha} \sum_{h\,=\,1}^{H} \mathbb{E}_{\pi^\star}\big[D\big({\pi_h^{k}\,\vert\, \tilde{\pi}_h^{k-1}}\big)\big]
	\\[0.2cm]
	&& 
	\,+\, 
	\dfrac{1}{\alpha} \displaystyle\sum_{h\,=\,1}^{H}  \mathbb{E}_{\pi^\star}\big[ D\big({\pi_h^\star\, \vert\, \tilde{\pi}_h^{k-1}}\big)
	- D\big({\pi_h^\star \,\vert\, \pi_h^{k}}\big)\big]
	\\[0.2cm]
	&& \displaystyle \,+\,\sum_{h\,=\,1}^H \mathbb{E}_{\pi^\star}\Big[\iota_{r,h}^{k-1}(x_h,a_h)\Big]\,+\,  Y^{k-1}\sum_{h\,=\,1}^H \mathbb{E}_{\pi^\star}\Big[\iota_{g,h}^{k-1}(x_h,a_h)\Big]
	\end{array}
	\end{equation}
	
	The rest is to bound the right-hand side of the above inequality~\eqref{eq.mdexp}.
	By the H\"older's inequality and the Pinsker's inequality, we first have 
	\[
	\begin{array}{rcl}
	&&\!\!\!\!\!\!\!\! \displaystyle\sum_{h\,=\,1}^H  \big\langle{Q_{r,h}^{k-1}+Y^{k-1}Q_{g,h}^{k-1}},{\pi_h^k-\pi_h^{k-1}}\big\rangle
	\,-\,
	\frac{1}{\alpha}\sum_{h\,=\,1}^H D\big({\pi_h^k\,\vert \,\tilde{\pi}_h^{k-1}}\big)
	\\[0.2cm]
	& \!\!=\!\! &  \displaystyle\sum_{h\,=\,1}^H  \big\langle{Q_{r,h}^{k-1}+Y^{k-1}Q_{g,h}^{k-1}},{\pi_h^k-\tilde{\pi}_h^{k-1}}\big\rangle
	\,-\,
	\frac{1}{\alpha}\sum_{h\,=\,1}^H D\big({\pi_h^k\,\vert\, \tilde{\pi}_h^{k-1}}\big) 
	\\[0.2cm]
	&&\displaystyle\,+\, 
	\sum_{h\,=\,1}^H  \big\langle{Q_{r,h}^{k-1}+Y^{k-1}Q_{g,h}^{k-1}},{\tilde{\pi}_h^{k-1}-\pi_h^{k-1}}\big\rangle
	\\[0.2cm]
	& \!\!\leq\!\! &
	\displaystyle\sum_{h\,=\,1}^H \rbr{ \big\|Q_{r,h}^{k-1}+Y^{k-1}Q_{g,h}^{k-1}\big\|_\infty\big\|\pi_h^k-\tilde{\pi}_h^{k-1}\big\|_1-\frac{1}{2\alpha} \big\|\pi_h^k-\tilde{\pi}_h^{k-1}\big\|_1^2} 
	\\[0.2cm]
	&& \displaystyle\,+\,
	\sum_{h\,=\,1}^H \big\|{Q_{r,h}^{k-1}}+Y^{k-1}Q_{g,h}^{k-1}\big\|_\infty\big\|\tilde{\pi}_h^{k-1}-\pi_h^{k-1}\big\|_1.
	\end{array}
	\]
	Then, using the square completion,
	\[
	\begin{array}{rcl}
	&&\displaystyle \big\|Q_{r,h}^{k-1}+Y^{k-1}Q_{g,h}^{k-1}\big\|_\infty\big\|\pi_h^k-\tilde{\pi}_h^{k-1}\big\|_1
	\,-\,
	\frac{1}{2\alpha} \big\|\pi_h^k-\tilde{\pi}_h^{k-1}\big\|_1^2 
	\\[0.2cm]
	&&\displaystyle\;=\;
	\,-\,
	\frac{1}{2\alpha}\rbr{\alpha\big\|Q_{r,h}^{k-1}+Y^{k-1}Q_{g,h}^{k-1}\big\|_\infty-\big\|\pi_h^k-\tilde{\pi}_h^{k-1}\big\|_1}^2 
	\,+\,
	\frac{\alpha}{2}\big\|Q_{r,h}^{k-1}+Y^{k-1}Q_{g,h}^{k-1}\big\|_\infty^2
	\\[0.2cm]
	&& \displaystyle \;\leq\; \frac{\alpha}{2}\big\|Q_{r,h}^{k-1}+Y^{k-1}Q_{g,h}^{k-1}\big\|_\infty^2
	\end{array}
	\]
	where we dropoff the first quadratic term for the inequality,
	and $\big\|\tilde{\pi}_h^{k-1}-\pi_h^{k-1}\big\|_1\leq \theta$, we have 
	\begin{equation}\label{eq.regd}
	\begin{array}{rcl}
	&&\!\!\!\!\!\!\!\! \displaystyle\sum_{h\,=\,1}^H \big\langle{Q_{r,h}^{k-1}+Y^{k-1}Q_{g,h}^{k-1}},{\pi_h^k-\pi_h^{k-1}}\big\rangle
	\,-\,
	\frac{1}{\alpha}\sum_{h\,=\,1}^H D\big({\pi_h^k\,\vert\, \tilde{\pi}_h^{k-1}}\big)
	\\[0.2cm]
	&\!\!\leq\!\!&
	\displaystyle\frac{\alpha}{2}\sum_{h\,=\,1}^H \big\|{Q_{r,h}^{k-1}+Y^{k-1}Q_{g,h}^{k-1}}\big\|_\infty^2 
	\,+\,
	\theta\sum_{h\,=\,1}^H \big\|{Q_{r,h}^{k-1}+Y^{k-1}Q_{g,h}^{k-1}}\big\|_\infty
	\\[0.2cm]
	&\!\!\leq\!\!&
	\displaystyle\frac{\alpha (1+\chi)^2H^3}{2}
	\,+\,
	\theta\rbr{1+\chi} H^2
	\end{array}
	\end{equation}
	where the last inequality is due to $\big\| Q_{r,h}^{k-1}\big\|_\infty\leq H$, a result from line~12 in Algorithm~\ref{alg:LSVI}, and $0\leq Y^{k-1}\leq \chi$. Taking the same expectation $\mathbb{E}_{\pi^\star}$ as previously on both sides of~\eqref{eq.regd} and substituting it into the left-hand side of~\eqref{eq.mdexp} yield,
	\begin{equation}\label{eq.mdexp1}
	\begin{array}{rcl}
	&&\!\!\!\!\!\!\!\!\!\!  \displaystyle
	\big(V_{r,1}^{\pi^\star}(x_1) - V_{r,1}^{k-1}(x_1)\big) \,+\,Y^{k-1} \big(V_{g,1}^{\pi^\star}(x_1) - V_{g,1}^{k-1}(x_1)\big)
	\\[0.2cm]
	&\!\!\leq\!\!&
	\displaystyle \frac{\alpha (1+\chi)^2H^3}{2}
	\,+\,
	\theta\rbr{1+\chi} H^2
	\,+\, 
	\dfrac{1}{\alpha} \displaystyle\sum_{h\,=\,1}^{H}  \mathbb{E}_{\pi^\star}\big[ D\big({\pi_h^\star\, \vert\, \tilde{\pi}_h^{k-1}}\big)
	- D\big({\pi_h^\star \,\vert\, \pi_h^{k}}\big)\big]
	\\[0.2cm]
	&& \displaystyle \,+\, \sum_{h\,=\,1}^H \mathbb{E}_{\pi^\star}\Big[\iota_{r,h}^{k-1}(x_h,a_h)\Big]\,+\,  Y^{k-1}\sum_{h\,=\,1}^H \mathbb{E}_{\pi^\star}\Big[\iota_{g,h}^{k-1}(x_h,a_h)\Big]
	\\[0.2cm]
	&\!\!\leq\!\!&
	\displaystyle \frac{\alpha (1+\chi)^2H^3}{2}
	\,+\,
	\theta\rbr{1+\chi} H^2
	\,+\,\frac{\theta H \log|\calA|}{\alpha}
	\,+\, 
	\dfrac{1}{\alpha} \displaystyle\sum_{h\,=\,1}^{H}  \mathbb{E}_{\pi^\star}\big[ D\big({\pi_h^\star\, \vert\, {\pi}_h^{k-1}}\big)
	- D\big({\pi_h^\star \,\vert\, \pi_h^{k}}\big)\big]
	\\[0.2cm]
	&& \displaystyle \,+\, \sum_{h\,=\,1}^H \mathbb{E}_{\pi^\star}\Big[\iota_{r,h}^{k-1}(x_h,a_h)\Big]\,+\,  Y^{k-1}\sum_{h\,=\,1}^H \mathbb{E}_{\pi^\star}\Big[\iota_{g,h}^{k-1}(x_h,a_h)\Big].
	\end{array}
	\end{equation}
	where in the second inequality 
	we note the fact that $D(\pi_h^\star\, \vert\, \tilde{\pi}_h^{k-1})-D(\pi_h^\star\, \vert\, \pi_h^{k-1})\leq \theta\log |\calA|$ from Lemma~\ref{lem.mix}.
	
	We note that $Y^0$ is initialized to be zero. 
	By taking a telescoping sum of both sides of~\eqref{eq.mdexp1} from $k=1$ to $k=K+1$ and shifting the index $k$ by one, we have
	\begin{equation}\label{eq.mdexp2}
	\begin{array}{rcl}
	&&\!\!\!\!\!\!\!\!\!\!  \displaystyle\sum_{k\,=\,1}^{K}
	\big(V_{r,1}^{\pi^\star}(x_1) - V_{r,1}^{k}(x_1)\big) \,+\,\sum_{k\,=\,1}^{K} Y^{k} \big(V_{g,1}^{\pi^\star} (x_1)- V_{g,1}^{k}(x_1)\big)
	\\[0.2cm]
	&\!\!\leq\!\!&
	\displaystyle \frac{\alpha (1+\chi)^2H^3(K+1)}{2}
	\,+\,
	\theta\rbr{1+\chi} H^2(K+1)
	\,+\,\frac{\theta H(K+1) \log|\calA|}{\alpha}
	\,+\, 
	\dfrac{H\log|\calA|}{\alpha} 
	\\[0.2cm]
	&& \displaystyle \,+\, \sum_{k\,=\,1}^{K} \sum_{h\,=\,1}^H \mathbb{E}_{\pi^\star}\Big[\iota_{r,h}^{k}(x_h,a_h)\Big]\,+\, Y^{k} \sum_{k\,=\,1}^{K}\sum_{h\,=\,1}^H \mathbb{E}_{\pi^\star}\Big[\iota_{g,h}^{k}(x_h,a_h)\Big].
	\end{array}
	\end{equation}
	where we ignore $-\alpha^{-1} \sum_{h\,=\,1}^{H} \mathbb{E}_{\pi^\star} [D(\pi_h^\star \,\vert\, \pi_h^{K+1})]$ and utilize
	\[
	D\rbr{\pi_h^\star\, \vert\, \pi_h^{0}} \;=\; \sum_{a\,\in\,\calA}^{} \pi_h^\star(a\,\vert\, x_h) \log\rbr{|\calA|\,\pi_h^\star (a\,\vert\, x_h)} \;\leq\; \log|\calA|
	\]
	where $\pi_h^{0}$ is uniform over $\calA$ and we ignore $\sum_{a\,\in\,\calA}^{} \pi_h^\star(a\,\vert\, x_h) \log\rbr{\pi_h^\star (a\,\vert\, x_h)}$ that is nonpositive.
	
	Finally, we take $\chi \DefinedAs H/\gamma$ and $\alpha$, $\theta$ in the lemma to complete the proof.
\end{proof}

By the dual update of Algorithm~\ref{alg:OPDOP}, we can simplify the result in Lemma~\ref{lem.PDMD} and return back to the regret~\eqref{eq.regretde1}.
\begin{lemma}
	\label{lem.reger}
	Let Assumption~\ref{as.slater} and Assumption~\ref{as.linearMDP} hold. In Algorithm~\ref{alg:OPDOP}, if we set $\alpha={\sqrt{\log|\calA|}}/(H^2\sqrt{K})$, $\eta=1/\sqrt{K}$, and $\theta = 1/K$, then
	\begin{equation}\label{eq.reg}
	{\normalfont\text{Regret}}(K) \;=\; C_3 H^{2.5}\sqrt{T \log|\calA|}\,+\, \sum_{k\,=\,1}^{K}\sum_{h\,=\,1}^H\rbr{ \mathbb{E}_{\pi^\star}\big[\iota_{r,h}^k(x_h,a_h)\big]-\iota_{r,h}^k(x_h^k,a_h^k)}\,+\, M_{r,H,2}^K
	\end{equation}
	where $C_3$ is an absolute constant.
\end{lemma}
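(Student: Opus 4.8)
The plan is to start from the master inequality~\eqref{eq.optcons} of Lemma~\ref{lem.PDMD} and read off the regret via the decomposition~\eqref{eq.regretde1}, $\text{Regret}(K) = \text{(R.I)} + \text{(R.II)}$. Since $\text{(R.I)} = \sum_{k=1}^K(V_{r,1}^{\pi^\star}(x_1) - V_{r,1}^k(x_1))$ coincides with the first sum on the left of~\eqref{eq.optcons}, I would subtract the dual coupling sum $\sum_{k=1}^K Y^k(V_{g,1}^{\pi^\star}(x_1) - V_{g,1}^k(x_1))$ from both sides. This bounds $\text{(R.I)}$ above by $C_2 H^{2.5}\sqrt{T\log|\calA|}$, the reward error sum $\sum_{k=1}^K\sum_{h=1}^H \mathbb{E}_{\pi^\star}[\iota_{r,h}^k(x_h,a_h)]$, and two residual terms: the utility error sum $\sum_{k=1}^K Y^k\sum_{h=1}^H \mathbb{E}_{\pi^\star}[\iota_{g,h}^k(x_h,a_h)]$ and the negated coupling $-\sum_{k=1}^K Y^k(V_{g,1}^{\pi^\star}(x_1) - V_{g,1}^k(x_1))$. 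The task reduces to showing both residuals are $O(H^2\sqrt{K})$.

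The utility error sum is handled immediately: the UCB optimism~\eqref{eq.ucb-bandit} gives $\iota_{g,h}^k \leq 0$ pointwise, and since $Y^k \geq 0$ the whole sum is nonpositive and can be dropped from the upper bound. For the negated coupling I would first use feasibility of the in-hindsight optimum, $V_{g,1}^{\pi^\star}(x_1) \geq b$ (because $\pi^\star$ solves~\eqref{eq.hindsight}), together with $Y^k \geq 0$, to replace $V_{g,1}^{\pi^\star}(x_1)$ by $b$ and obtain $-\sum_{k=1}^K Y^k(V_{g,1}^{\pi^\star}(x_1) - V_{g,1}^k(x_1)) \leq -\sum_{k=1}^K Y^k(b - V_{g,1}^k(x_1))$.

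The crux is a drift (Lyapunov) analysis of the projected dual ascent~\eqref{eq.dual}. Writing $g^{k-1} := b - V_{g,1}^{k-1}(x_1)$, the update $Y^k = \text{Proj}_{[0,\chi]}(Y^{k-1} + \eta g^{k-1})$ together with nonexpansiveness of the projection onto $[0,\chi]$ (which contains $0$) yields $(Y^k)^2 \leq (Y^{k-1})^2 + 2\eta g^{k-1}Y^{k-1} + \eta^2(g^{k-1})^2$, hence $-g^{k-1}Y^{k-1} \leq \tfrac{1}{2\eta}((Y^{k-1})^2 - (Y^k)^2) + \tfrac{\eta}{2}(g^{k-1})^2$. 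Summing over the full index range and exploiting $Y^0 = 0$ so that the telescoped boundary term is nonpositive collapses the quadratics and leaves $-\sum_{k=1}^K Y^k(b - V_{g,1}^k(x_1)) \leq \tfrac{\eta}{2}\sum_k (g^k)^2$. Finally $|g^k| = |b - V_{g,1}^k(x_1)| \leq H$ (as $b \in (0,H]$ and the truncated estimate obeys $V_{g,1}^k \in [0,H]$) and $\eta = 1/\sqrt{K}$ give $-\sum_{k=1}^K Y^k(b - V_{g,1}^k(x_1)) = O(\eta K H^2) = O(H^2\sqrt{K})$. I expect the main obstacle to be the bookkeeping of the one-episode lag in the dual index and choosing the summation range so that the boundary term vanishes rather than contributing an $O(\chi^2/\eta)$ overhead.

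To conclude, I would substitute the exact identity~\eqref{eq.RII}, $\text{(R.II)} = -\sum_{k=1}^K\sum_{h=1}^H \iota_{r,h}^k(x_h^k,a_h^k) + M_{r,H,2}^K$, and add it to the bound on $\text{(R.I)}$. The reward error terms merge into $\sum_{k=1}^K\sum_{h=1}^H(\mathbb{E}_{\pi^\star}[\iota_{r,h}^k(x_h,a_h)] - \iota_{r,h}^k(x_h^k,a_h^k))$, the martingale $M_{r,H,2}^K$ is carried through, and the residual $O(H^2\sqrt{K})$ is absorbed into the leading term since $H^2\sqrt{K} = O(H^{2.5}\sqrt{T\log|\calA|})$ under $T = HK$. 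Relabeling the absolute constant as $C_3$ then yields~\eqref{eq.reg}.
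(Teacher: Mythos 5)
Your proposal is correct and follows essentially the same route as the paper: start from Lemma~\ref{lem.PDMD}, drop the utility error sum via the UCB optimism $\iota_{g,h}^k \leq 0$ and $Y^k \geq 0$, bound the dual coupling term by a telescoping drift analysis of the projected update~\eqref{eq.dual} using $Y^0 = 0$, nonexpansiveness of the projection, and $|b - V_{g,1}^k(x_1)| \leq H$, then add the identity~\eqref{eq.RII} and absorb the $O(H^2\sqrt{K})$ residual. The only cosmetic difference is that you invoke feasibility of $\pi^\star$ (i.e.\ $V_{g,1}^{\pi^\star}(x_1) \geq b$) \emph{before} the drift analysis to pass from the coupling term to $-\sum_k Y^k(b - V_{g,1}^k(x_1))$, whereas the paper applies it inside the drift expansion; the two orderings are mathematically equivalent.
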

\begin{proof}
	By the dual update in line~9 in Algorithm~\ref{alg:OPDOP}, we have
	\[
	\begin{array}{rcl}
	0 &\!\!\leq \!\!& \big(Y^{K+1}\big)^2
	\\[0.2cm]
	&\!\!=\!\!& \displaystyle \sum_{k=1}^{K+1} \Big(\big(Y^{k}\big)^2 -\big(Y^{k-1}  \big)^2\Big)
	\\[0.2cm]
	&\!\!=\!\!& \displaystyle \sum_{k=1}^{K+1} \rbr{
		\text{Proj}_{[\,0,\,\chi\,]} \big({Y^{k-1}+\eta({b-V_{g,1}^{k-1}(x_1)}) }\big) }^2- \big(Y^{k-1}\big)^2 
	\\[0.2cm]
	&\!\!\leq\!\!& \displaystyle \sum_{k=1}^{K+1} \big({Y^{k-1}+\eta ({b-V_{g,1}^{k-1}(x_1)}) }\big)^2- \big(Y^{k-1}\big)^2 
	\\[0.2cm]
	& \!\! \leq \!\! &  \displaystyle \sum_{k=1}^{K+1}  2 \eta Y^{k-1}\rbr{V_{g,1}^{\pi^\star}(x_1)-V_{g,1}^{k-1}(x_1)}+\eta^2\rbr{b-V_{g,1}^{k-1}(x_1)}^2.
	\end{array}
	\]
	where we use the feasibility of $\pi^\star$ in the last inequality. Since $Y^0=0$ and $|b-V_{g,1}^{k-1}(x_1)|\leq H$, the above inequality implies that
	\begin{equation}\label{eq.Yb}
	\displaystyle-\, \sum_{k=1}^{K}Y^{k}\rbr{V_{g,1}^{\pi^\star}(x_1)-V_{g,1}^{k}(x_1)} 
	\;\leq\; \sum_{k=1}^{K+1}  \frac{\eta}{2}\rbr{b-V_{g,1}^{k-1}(x_1)}^2
	\;\leq\; \frac{\eta H^2 (K+1)}{2}.
	\end{equation}
	By noting the UCB result~\eqref{eq.ucb-bandit} and $Y^k\geq 0$, the inequality~\eqref{eq.optcons} implies that 
	\[
	\displaystyle\sum_{k\,=\,1}^{K}
	\big(V_{r,1}^{\pi^\star}(x_1) - V_{r,1}^{k}(x_1)\big) \,+\,\sum_{k\,=\,1}^{K} Y^{k} \big(V_{g,1}^{\pi^\star} (x_1)- V_{g,1}^{k}(x_1)\big)
	\;\leq\;
	\displaystyle C_2 H^{2.5}\sqrt{T\log|\calA|}
	\,+\, \sum_{k\,=\,1}^{K} \sum_{h\,=\,1}^H \mathbb{E}_{\pi^\star}\Big[\iota_{r,h}^{k}(x_h,a_h)\Big].
	\]
	Then, we add~\eqref{eq.Yb} to the above inequality and take $\eta = 1/\sqrt{K}$,
	\begin{equation}\label{eq.regg}
	\begin{array}{rcl}
	\displaystyle\sum_{k\,=\,1}^{K}
	\big(V_{r,1}^{\pi^\star} (x_1)- V_{r,1}^{k}(x_1)\big) 
	&\!\!\leq\!\!&
	\displaystyle  C_3 H^{2.5}\sqrt{T\log|\calA|}
	\displaystyle \,+\, \sum_{k\,=\,1}^{K} \sum_{h\,=\,1}^H \mathbb{E}_{\pi^\star}\Big[\iota_{r,h}^{k}(x_h,a_h)\Big]
	\end{array}
	\end{equation}
	where $C_3$ is an absolute constant.
	Finally, we combine~\eqref{eq.RII} and~\eqref{eq.regg} to complete the proof.
\end{proof}

By Lemma~\ref{lem.reger}, the rest is to bound the last two terms in the right-hand side of~\eqref{eq.reg}. 
We next show two probability bounds for them in Lemma~\ref{lem.Lb} and Lemma~\ref{lem.M}, separately.

\begin{lemma}[Model Prediction Error Bound]
	\label{lem.Lb}
	Let Assumption~\ref{as.linearMDP} hold. Fix $p\in\rbr{0,1}$. If we set $\beta=C_1 \sqrt{dH^2\log\rbr{{dT}/{p}}}$ in Algorithm~\ref{alg:OPDOP}, then with probability $1-p/2$ it holds that
	\begin{equation}
	\label{eq.sumb}
	\sum_{k\,=\,1}^{K}\sum_{h\,=\,1}^H\rbr{\mathbb{E}_{\pi^\star}\big[\iota_{r,h}^k(x_h,a_h) \big]
		-\iota_{r,h}^k(x_h^k,a_h^k)}
	\;\leq\;
	4 C_1 \sqrt{2d^2 H^3 T \log\rbr{K+1}\log\rbr{\frac{dT}{p}}}
	\end{equation}
	where $C_1$ is an absolute constant and $T=HK$.
\end{lemma}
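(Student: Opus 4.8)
The plan is to control each summand using the upper-confidence-bound (UCB) optimism established in~\eqref{eq.ucb-bandit} and then to sum the resulting bonus terms via the elliptical potential lemma. I fix the event of probability $1-p/2$ on which~\eqref{eq.ucb-bandit} holds, so that $-2(\Gamma_h^k+\Gamma_{r,h}^k)(x,a)\leq\iota_{r,h}^k(x,a)\leq 0$ for every $(k,h)$ and every $(x,a)$.

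First, since $\iota_{r,h}^k\leq 0$ pointwise, the term $\mathbb{E}_{\pi^\star}[\iota_{r,h}^k(x_h,a_h)]$ is nonpositive and may simply be dropped. It remains to bound $\sum_{k,h}\bigl(-\iota_{r,h}^k(x_h^k,a_h^k)\bigr)$. Here I would use two facts simultaneously: the UCB lower bound gives $-\iota_{r,h}^k(x_h^k,a_h^k)\leq 2(\Gamma_h^k+\Gamma_{r,h}^k)(x_h^k,a_h^k)$, while $0\le Q_{r,h}^k\le H-h+1\le H$ (line~12 of Algorithm~\ref{alg:LSVI}) gives $-\iota_{r,h}^k=Q_{r,h}^k-r_h-\mathbb{P}_hV_{r,h+1}^k\le H$. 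Hence each summand is at most $\min\bigl(2(\Gamma_h^k+\Gamma_{r,h}^k)(x_h^k,a_h^k),\,H\bigr)$.

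Then I would bound this capped bonus. Writing $\Gamma_{r,h}^k=\beta\,(\alpha_{k,h})^{1/2}$ with $\alpha_{k,h}=(\phi_{r,h}^k)^\top(\Lambda_{r,h}^k)^{-1}\phi_{r,h}^k$ (and analogously for $\Gamma_h^k$ with $\varphi$), and using $\min(a+b,c)\le\min(a,c)+\min(b,c)$, it suffices to bound each piece. The crucial observation is that $\beta=C_1\sqrt{dH^2\log(dT/p)}\ge H$, so $\min(2\beta\sqrt{\alpha_{k,h}},H)\le 2\beta\,\sqrt{\min(\alpha_{k,h},1)}$; this is exactly the quantity controlled by the elliptical potential lemma. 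Since $\Lambda_{r,h}^{k+1}=\Lambda_{r,h}^k+\phi_{r,h}^k(x_h^k,a_h^k)\phi_{r,h}^k(x_h^k,a_h^k)^\top$, the vector appearing in the bonus is precisely the one appended to the Gram matrix, so the lemma applies directly and yields $\sum_{k=1}^K\min(\alpha_{k,h},1)\le 2\log\bigl(\det\Lambda_{r,h}^{K+1}/\det\Lambda_{r,h}^1\bigr)\le 2d_1\log(1+KH^2)$, using $\lambda=1$ and the norm bound $\|\phi_{r,h}^k\|_2\le\sqrt{d_1}H$ from Assumption~\ref{as.linearMDP}.

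Finally, I would apply Cauchy--Schwarz across the $KH$ terms, $\sum_{k,h}\sqrt{\min(\alpha_{k,h},1)}\le\sqrt{KH}\,\bigl(\sum_{k,h}\min(\alpha_{k,h},1)\bigr)^{1/2}\le\sqrt{KH\cdot 2d_1H\log(1+KH^2)}$, multiply by $2\beta$, and substitute $T=KH$ together with $d_1\le d$; the $\varphi$-piece is handled identically with its own (constant) norm bound $\|\varphi\|_2\le 1$. Collecting the two pieces gives the claimed $4C_1\sqrt{2d^2H^3T\log(K+1)\log(dT/p)}$. The main obstacle is the third paragraph: because the value feature $\phi_{r,h}^k$ has norm as large as $\sqrt{d_1}H$, the raw bonus $\alpha_{k,h}$ can exceed $1$, so the naive bound $\sum\sqrt{\alpha}\le\sqrt{K\sum\alpha}$ loses an extra $\sqrt{d}\,H$ factor; the cap at $H$ (equivalently, that $\beta\ge H$) is what lets one pass to $\min(\alpha_{k,h},1)$ and invoke the elliptical potential lemma without this loss.
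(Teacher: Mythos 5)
Your proposal is correct and follows the paper's route at the top level: on the event of \eqref{eq.ucb-bandit} you drop the nonpositive term $\mathbb{E}_{\pi^\star}[\iota_{r,h}^k(x_h,a_h)]$, bound $-\iota_{r,h}^k(x_h^k,a_h^k)$ by bonus terms capped at $H$, and then sum the bonuses via Cauchy--Schwarz and an elliptical-potential argument, exactly as the paper does.

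The one genuine difference is where the cap is spent, and your version is the more careful one. The paper factors out $\beta$, uses $H/\beta\le 1$ to discard the cap, and then applies its Lemma~\ref{lem.bdsums} to the \emph{uncapped} sums $\sum_{k}(\phi_{r,h}^k)^\top(\Lambda_{r,h}^k)^{-1}\phi_{r,h}^k$. Strictly speaking that is outside the hypotheses of Lemma~\ref{lem.bdsums}, which assumes $\|\phi_t\|_2\le 1$, whereas $\|\phi_{r,h}^k\|_2$ can be as large as $\sqrt{d}\,H$ under Assumption~\ref{as.linearMDP}; without a cap, the inequality $\sum_k \alpha_k \le 2\log\big(\det\Lambda_{r,h}^{K+1}/\det\Lambda_{r,h}^1\big)$ can genuinely fail when individual quadratic forms $\alpha_k$ exceed $1$ (already visible in dimension one with a single large feature). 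Your step of using $\beta\ge H$ to pass from $\min\big(2\beta\sqrt{\alpha_{k,h}},H\big)$ to $2\beta\sqrt{\min(\alpha_{k,h},1)}$ before invoking the potential argument is the standard repair of exactly this issue, and your closing remark that this is the main obstacle is on point. Two small caveats. First, Lemma~\ref{lem.bdsums} as stated does not literally cover the capped quantity either, so you should justify $\sum_{k}\min(\alpha_{k,h},1)\le 2\log\big(\det\Lambda_{r,h}^{K+1}/\det\Lambda_{r,h}^1\big)$ explicitly, e.g.\ from the rank-one update $\det\Lambda_{r,h}^{k+1}=\det\Lambda_{r,h}^k\,(1+\alpha_{k,h})$ (valid since, by line~4 of Algorithm~\ref{alg:LSVI}, $\Lambda_{r,h}^{k+1}=\Lambda_{r,h}^k+\phi_{r,h}^k(x_h^k,a_h^k)\phi_{r,h}^k(x_h^k,a_h^k)^\top$) together with $\min(x,1)\le 2\log(1+x)$ for $x\ge 0$. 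Second, the determinant bound should read $d_1\log\big(1+K d_1 H^2\big)$ rather than $d_1\log(1+KH^2)$, since the feature-norm bound $\sqrt{d_1}\,H$ enters inside the logarithm; this matches the paper's $d\log\big((dH^2K+\lambda)/\lambda\big)$ and only affects logarithmic factors, so your final bound is unchanged up to the absolute constant.
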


\begin{proof}
	By the UCB result~\eqref{eq.ucb-bandit}, with probability $1-p/2$ for any $(k,h)\in[K]\times[H]$ and $(x,a)\in\calS\times\calA$, we have
	\[
	-2(\Gamma_{h}^k+\Gamma_{r,h}^k)(x,a) \;\leq\;{\iota_{r,h}^k\rbr{x,a}} \;\leq\; 0.
	\]
	By the definition of $\iota_{r,h}^k(x,a)$, $|\iota_{r,h}^k(x,a)|\leq 2H$. Hence, it holds with probability $1-p/2$ that 
	\[
	-\iota_{r,h}^k(x,a) \;\leq\; 2 \min\!\rbr{H, (\Gamma_{h}^k+\Gamma_{r,h}^k)(x,a)  }
	\]
	for any $(k,h)\in[K]\times[H]$ and $(x,a)\in\calS\times\calA$. Therefore, we have
	\[
	\sum_{k\,=\,1}^{K}\sum_{h\,=\,1}^H \rbr{\mathbb{E}_{\pi^\star}\sbr{\iota_{r,h}^k(x_h,a_h) \,\vert\, x_1}  - \iota_{r,h}^k(x_h^k,a_h^k)}
	\; \leq \;2
	\sum_{k\,=\,1}^{K}\sum_{h\,=\,1}^H \min\rbr{H,(\Gamma_{h}^k+\Gamma_{r,h}^k)(x_h^k,a_h^k)}
	\]
	where $\Gamma_h^k(\cdot,\cdot) =\beta (\varphi(\cdot,\cdot)^\top (\Lambda_{h}^k)^{-1}\varphi(\cdot,\cdot))^{1/2}$ and $\Gamma_{r,h}^k(\cdot,\cdot) =\beta (\phi_{r,h}^k(\cdot,\cdot)^\top (\Lambda_{r,h}^k)^{-1}\phi_{r,h}^k(\cdot,\cdot))^{1/2}$. 
	Application of the Cauchy-Schwartz inequality shows that 
	\begin{equation}\label{eq.sch}
	\begin{array}{rcl}
	&&\!\!\!\! \!\!\!\! \displaystyle\sum_{k\,=\,1}^{K}\sum_{h\,=\,1}^H \min\rbr{H,(\Gamma_{h}^k+\Gamma_{r,h}^k)(x_h^k,a_h^k)} 
	\\[0.2cm]
	&\!\!\leq\!\!&\displaystyle
	\beta\sum_{k\,=\,1}^{K}\sum_{h\,=\,1}^H \min\rbr{H/\beta,\rbr{\varphi(x_h^k,a_h^k)^\top (\Lambda_{h}^k)^{-1}\varphi(x_h^k,a_h^k)}^{1/2}+\rbr{\phi_{r,h}^k(x_h^k,a_h^k)^\top (\Lambda_{r,h}^k)^{-1}\phi_{r,h}^k(x_h^k,a_h^k)}^{1/2}}
	\end{array}
	\end{equation}
	Since we take $\beta=C_1 \sqrt{dH^2\log\rbr{{dT}/{p}}}$ with $C_1>1$, we have $H/\beta\leq 1$. The rest is to apply 
	Lemma~\ref{lem.bdsums}. First, for any $h\in[H]$ it holds that 
	\[
	\sum_{k\,=\,1}^{K} \phi_{r,h}^k\big({x_h^k,a_h^k}\big)^\top\big({\Lambda_{r,h}^k}\big)^{-1}\phi_{r,h}^k\big({x_h^k,a_h^k}\big)
	\;\leq\;
	2\log\rbr{\frac{\det\big({\Lambda_{r,h}^{K+1}}\big)}{\det\big({\Lambda_{r,h}^1}\big)}}.
	\]
	Due to $\Vert{\phi_{r,h}^k}\Vert\leq \sqrt{d} H$ in Assumption~\ref{as.linearMDP} and $\Lambda_{r,h}^1=\lambda I$ in Algorithm~\ref{alg:LSVI}, it is clear that for any $h\in[H]$,
	\[
	\Lambda_{r,h}^{K+1} 
	\;=\; 
	\sum_{k\,=\,1}^{K} \phi_{r,h}^k\big({x_h^k,a_h^k}\big)\phi_{r,h}^k\big({x_h^k,a_h^k}\big)^\top\,+\,\lambda I
	\;\preceq\;
	(dH^2K+\lambda)I.
	\] 
	Therefore, we have
	\[
	\log\rbr{\frac{\det\big({\Lambda_{r,h}^{K+1}}\big)}{\det\big({\Lambda_{r,h}^1}\big)}} 
	\;\leq\; 
	\log\rbr{\frac{\det\big({(dH^2K+\lambda)I}\big)}{\det(\lambda I)}} 
	\;\leq\;
	d \log\rbr{\frac{dH^2K+\lambda}{\lambda}}.
	\]
	Therefore,
	\begin{equation}\label{eq.ub1}
	\sum_{k\,=\,1}^{K} \phi_{r,h}^k\big({x_h^k,a_h^k}\big)^\top\big({\Lambda_{r,h}^k}\big)^{-1}\phi_{r,h}^k\big({x_h^k,a_h^k}\big)
	\;\leq\;
	2d \log\rbr{\frac{dH^2K+\lambda}{\lambda}}.
	\end{equation}
	Similarly, we can show that
	\begin{equation}\label{eq.ub2}
	\sum_{k\,=\,1}^{K} \varphi\big({x_h^k,a_h^k}\big)^\top\big({\Lambda_{h}^k}\big)^{-1}\varphi\big({x_h^k,a_h^k}\big)
	\;\leq\;
	2d \log\rbr{\frac{dK+\lambda}{\lambda}}.
	\end{equation}
	
	Applying the above inequalities~\eqref{eq.ub1} and~\eqref{eq.ub2} to~\eqref{eq.sch} leads to
	\[
	\begin{array}{rcl}
	&&\!\!\!\! \!\!\!\! \displaystyle\sum_{k\,=\,1}^{K}\sum_{h\,=\,1}^H \min\rbr{H,(\Gamma_{h}^k+\Gamma_{r,h}^k)(x_h^k,a_h^k)} 
	\\[0.5cm]
	&\!\!\leq\!\!&\displaystyle
	\beta\sum_{h\,=\,1}^H \min\rbr{K, \sum_{k\,=\,1}^{K} \rbr{\varphi(x_h^k,a_h^k)^\top (\Lambda_{h}^k)^{-1}\varphi(x_h^k,a_h^k)}^{1/2}+\rbr{\phi_{r,h}^k(x_h^k,a_h^k)^\top (\Lambda_{r,h}^k)^{-1}\phi_{r,h}^k(x_h^k,a_h^k)}^{1/2}}
	\\[0.5cm]
	&\!\!\leq\!\!&\displaystyle
	\beta\sum_{h\,=\,1}^H \rbr{ \rbr{K\sum_{k\,=\,1}^{K}\varphi(x_h^k,a_h^k)^\top (\Lambda_{h}^k)^{-1}\varphi(x_h^k,a_h^k)}^{1/2}+\rbr{K\sum_{k\,=\,1}^{K}\phi_{r,h}^k(x_h^k,a_h^k)^\top (\Lambda_{r,h}^k)^{-1}\phi_{r,h}^k(x_h^k,a_h^k)}^{1/2}}
	\\[0.5cm]
	&\!\!\leq\!\!&\displaystyle
	\beta\sum_{h\,=\,1}^H \sqrt{K} \rbr{\rbr{2d \log\rbr{\frac{dK+\lambda}{\lambda}}}^{1/2}+\rbr{2d \log\rbr{\frac{dH^2K+\lambda}{\lambda}}}^{1/2}}
	\end{array}
	\]
	
	Finally, we set $\beta=C_1 \sqrt{dH^2\log\rbr{{dT}/{p}}}$ and $\lambda=1$ to obtain~\eqref{eq.sumb}.
\end{proof}

\begin{lemma}[Matingale Bound]
	\label{lem.M}
	Fix $p\in\rbr{0,1}$. In Algorithm~\ref{alg:OPDOP}, it holds with probability $1-p/2$ that
	\begin{equation}\label{eq.M}		
	\abr{M_{r,H,2}^K} \;\leq\; 4\sqrt{H^2T \log\rbr{\frac{4}{p}}}
	\end{equation}
	where $T= HK$.
\end{lemma}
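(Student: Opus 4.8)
The plan is to exhibit $M_{r,H,2}^K$ as the terminal value of a martingale with $2T$ increments, each bounded by $2H$, and then invoke the Azuma--Hoeffding inequality.

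First I would recall from the decomposition~\eqref{eq.RII} the explicit form of the two families of increments indexed by the clock~\eqref{eq.t}. Writing $\delta_{r,h}^k \DefinedAs V_{r,h}^k - V_{r,h}^{\pi^k}$ and $\zeta_{r,h}^k \DefinedAs Q_{r,h}^k - Q_{r,h}^{\pi^k}$, the increment at time $t(k,h,1)$ is the action term
\[
D_{r,h,1}^k \;\DefinedAs\; \big\langle \zeta_{r,h}^k(x_h^k,\,\cdot\,),\, \pi_h^k(\,\cdot\,\vert\, x_h^k)\big\rangle \;-\; \zeta_{r,h}^k(x_h^k,a_h^k),
\]
and the increment at time $t(k,h,2)$ is the transition term
\[
D_{r,h,2}^k \;\DefinedAs\; \big(\mathbb{P}_h\, \delta_{r,h+1}^k\big)(x_h^k,a_h^k) \;-\; \delta_{r,h+1}^k(x_{h+1}^k).
\]

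Second, I would verify the martingale property along the ordering $t(k,h,m)$. The action increment $D_{r,h,1}^k$ is $\calF_{h,1}^k$-measurable and, conditioned on the $\sigma$-algebra generated just before $a_h^k$ is drawn (which already fixes $x_h^k$ and the policy), has mean zero because $a_h^k\sim\pi_h^k(\,\cdot\,\vert\, x_h^k)$ gives $\mathbb{E}[\zeta_{r,h}^k(x_h^k,a_h^k)] = \langle\zeta_{r,h}^k(x_h^k,\,\cdot\,),\pi_h^k(\,\cdot\,\vert\, x_h^k)\rangle$. The transition increment $D_{r,h,2}^k$ is $\calF_{h,2}^k$-measurable and has mean zero given $\calF_{h,1}^k$, since $x_{h+1}^k\sim\mathbb{P}_h(\,\cdot\,\vert\, x_h^k,a_h^k)$ yields $\mathbb{E}[\delta_{r,h+1}^k(x_{h+1}^k)\,\vert\,\calF_{h,1}^k] = (\mathbb{P}_h\delta_{r,h+1}^k)(x_h^k,a_h^k)$. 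Thus the partial sums form a martingale adapted to $\{\calF_{h,m}^k\}$ whose terminal value is exactly $M_{r,H,2}^K$.

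Third, I would bound the increments. Both $Q_{r,h}^k$ and $Q_{r,h}^{\pi^k}$ take values in $[0,H]$ (the truncation in Algorithm~\ref{alg:LSVI} caps $Q_{r,h}^k$ at $H-h+1$, while $Q_{r,h}^{\pi^k}\in[0,H-h+1]$ because $r_h\in[0,1]$), so $\zeta_{r,h}^k\in[-H,H]$ pointwise and $\abr{D_{r,h,1}^k}\leq 2H$; the same bounds on $V_{r,h+1}^k,V_{r,h+1}^{\pi^k}\in[0,H]$ give $\abr{D_{r,h,2}^k}\leq 2H$. Since the clock~\eqref{eq.t} lists exactly $2HK=2T$ increments, each of squared magnitude at most $(2H)^2=4H^2$, the sum of squared increments is at most $8TH^2$, and the Azuma--Hoeffding inequality gives
\[
\P\!\big(\abr{M_{r,H,2}^K}\geq \lambda\big) \;\leq\; 2\exp\!\Big(-\frac{\lambda^2}{2\cdot 8 T H^2}\Big).
\]
Setting the right-hand side equal to $p/2$ and solving yields $\lambda = 4\sqrt{H^2 T\log(4/p)}$, which is the claimed bound. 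The only delicate point is the bookkeeping: recovering the constant $4$ requires counting $2T$ (rather than $T$) increments and using the sharp per-step bound $2H$; the martingale property and the increment bounds are otherwise routine once~\eqref{eq.RII} is in hand.
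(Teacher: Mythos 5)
Your proposal is correct and follows essentially the same route as the paper: you use the decomposition of $M_{r,H,2}^K$ into the action-increments $D_{r,h,1}^k$ and transition-increments $D_{r,h,2}^k$ from~\eqref{eq.RII}, bound each increment by $2H$ via the truncation $Q_{r,h}^k, Q_{r,h}^{\pi^k}, V_{r,h+1}^k, V_{r,h+1}^{\pi^k}\in[0,H]$, and apply Azuma--Hoeffding over the $2T$ increments with the same constant bookkeeping, yielding the same threshold $4\sqrt{H^2T\log(4/p)}$. The only difference is presentational: you verify the zero-mean property of the increments inline, whereas the paper defers that to its verification of~\eqref{eq.RII} in Appendix~\ref{subsec.RCII}.
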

\begin{proof}
	In the verification of~\eqref{eq.RII} (see Section~\ref{subsec.RCII}), we introduce the following matingale, 
	\[
	M_{r,H,2}^K \;=\; \sum_{k\,=\,1}^K\sum_{h\,=\,1}^H\big({D_{r,h,1}^k+D_{r,h,2}^k}\big)
	\]
	where  
	\[
	\begin{array}{rcl}
	D_{r,h,1}^k &\!\!=\!\!& \rbr{\calI_h^k \big(Q_{r,h}^k-Q_{r,h}^{\pi^k,k}\big) }(x_h^k) - \rbr{Q_{r,h}^k-Q_{r,h}^{\pi^k,k}}\big(x_h^k,a_h^k\big)
	\\[0.2cm]
	D_{r,h,2}^k &\!\!=\!\!& \rbr{\mathbb{P}_h V_{r,h+1}^k-\mathbb{P}_h V_{r,h+1}^{\pi^k,k}}\big(x_h^k,a_h^k\big)-\rbr{V_{r,h+1}^k- V_{r,h+1}^{\pi^k,k}}\big(x_{h+1}^k\big)
	\end{array}
	\]
	where $\rbr{\calI_{h}^k f}\rbr{x} \DefinedAs \inner{f(x,\cdot)}{\pi_h^k(\cdot\vert x)}$.
	
	Due to the truncation in line~11 of Algorithm~\ref{alg:LSVI}, we know that $Q_{r,h}^k,Q_{r,h}^{\pi^k},V_{r,h+1}^k,V_{r,h+1}^{\pi^k} \in[0,H]$. This shows that $|{D_{r,h,1}^k}|,|{D_{r,h,2}^k}|\leq 2H$ for all $(k,h)\in[K]\times[H]$. Application of the Azuma-Hoeffding inequality yields,
	\[
	P\big(\,\big|M_{r,H,2}^K\big| \geq s\,\big) \;\leq\;2\exp\rbr{\frac{-s^2}{16H^2 T}}.
	\]
	For $p\in(0,1)$, if we set $s=4H\sqrt{T\log\rbr{4/p}}$, then the inequality~\eqref{eq.M} holds with probability at least $1-p/2$.
\end{proof}

We now are ready to show the desired regret bound. Applying~\eqref{eq.sumb} and~\eqref{eq.M} to the right-hand side of the inequality~\eqref{eq.reg}, we have
\[
{\text{Regret}}(K) 
\;\leq\;
C_3 H^{2.5} \sqrt{T \log|\calA|}
\,+\,
2 C_1 \sqrt{2d^2 H^3 T \log\rbr{K+1}\log\rbr{\frac{dT}{p}}}
\,+\,
4\sqrt{ H^2T \log\rbr{\frac{4}{p}}}
\]
with probability $1-p$ where $C_1,C_3$ are absolute constants. Then, with probability $1-p$ it holds that
\[
{\text{Regret}}(K)  
\;\leq\;
C d H^{2.5}\sqrt{T} \log\rbr{\frac{dT}{p}}
\]
where $C$ is an absolute constant.

\subsection{Proof of Constraint Violation}\label{subsec.cf}

In Lemma~\ref{lem.PDMD}, we have provided an useful upper bound on the total differences that are weighted by the dual update $Y^k$. To extract the constraint violation, we first refine Lemma~\ref{lem.PDMD} as follows.

\begin{lemma}[[Policy Improvement: Refined Primal-Dual Mirror Descent Step]
	\label{lem.C1}
	Let Assumptions~\ref{as.slater} and~\ref{as.linearMDP} hold. In Algorithm~\ref{alg:OPDOP}, if we set $\alpha={\sqrt{\log|\calA|}}/(H^2\sqrt{K})$, $\theta = 1/K$, and $\eta=1/\sqrt{K}$, then Then, for any $Y\in[0,\chi]$,
	\begin{equation}\label{eq.mdexp3}
	\displaystyle\sum_{k\,=\,1}^{K}
	\big(V_{r,1}^{\pi^\star}(x_1) - V_{r,1}^{k}(x_1)\big) \,+\,Y \sum_{k\,=\,1}^{K}  \big(b- V_{g,1}^{k}(x_1)\big)
	\;\leq\;
	\displaystyle C_4 H^{2.5} \sqrt{T \log|\calA|}
	\end{equation}
	where $C_4$ is an absolute constant, $T=HK$, and $\chi \DefinedAs H/\gamma$.
\end{lemma}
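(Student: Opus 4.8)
The plan is to build directly on Lemma~\ref{lem.PDMD}, whose right-hand side already controls the coupled quantity with the \emph{data-dependent} dual weights $Y^k$, and to convert those weights into an arbitrary fixed multiplier $Y\in[0,\chi]$. First I would invoke the UCB optimism~\eqref{eq.ucb-bandit}: since $\iota_{r,h}^k\le 0$ and $\iota_{g,h}^k\le 0$ pointwise w.p.\ $1-p/2$ and each $Y^k\ge 0$, the two model-prediction-error sums on the right of~\eqref{eq.optcons} are nonpositive and may be discarded. Then, using the feasibility $V_{g,1}^{\pi^\star}(x_1)\ge b$ for problem~\eqref{eq.hindsight} together with $Y^k\ge 0$, I would lower-bound $V_{g,1}^{\pi^\star}(x_1)-V_{g,1}^k(x_1)$ by $b-V_{g,1}^k(x_1)$ inside each $Y^k$-weighted term, which yields
\[
\sum_{k=1}^{K}\big(V_{r,1}^{\pi^\star}(x_1)-V_{r,1}^{k}(x_1)\big)+\sum_{k=1}^{K}Y^{k}\big(b-V_{g,1}^{k}(x_1)\big)\;\le\;C_2 H^{2.5}\sqrt{T\log|\calA|}.
\]

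The core step is a standard online-projected-descent (change-of-variable) argument applied to the dual recursion~\eqref{eq.dual}. Abbreviating $g^{k-1}\DefinedAs b-V_{g,1}^{k-1}(x_1)$ and using nonexpansiveness of $\text{Proj}_{[0,\chi]}$ against the comparator $Y\in[0,\chi]$, I would expand $(Y^k-Y)^2\le (Y^{k-1}+\eta g^{k-1}-Y)^2$ to obtain the per-step bound $2\eta\,g^{k-1}(Y-Y^{k-1})\le (Y^{k-1}-Y)^2-(Y^k-Y)^2+\eta^2(g^{k-1})^2$. Telescoping from $k=1$ to $K+1$, using $Y^0=0$, the initialization $V_{g,1}^0=b$ (so $g^0=0$), the bound $|g^j|\le H$, and discarding $-(Y^{K+1}-Y)^2\le 0$, gives
\[
Y\sum_{k=1}^{K}\big(b-V_{g,1}^{k}(x_1)\big)\;\le\;\sum_{k=1}^{K}Y^{k}\big(b-V_{g,1}^{k}(x_1)\big)+\frac{\chi^2}{2\eta}+\frac{\eta K H^2}{2}.
\]

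Adding $\sum_k\big(V_{r,1}^{\pi^\star}(x_1)-V_{r,1}^{k}(x_1)\big)$ to both sides and substituting the bound from the first paragraph collapses the $\sum_k Y^k(\cdot)$ term, leaving
\[
\sum_{k=1}^{K}\big(V_{r,1}^{\pi^\star}(x_1)-V_{r,1}^{k}(x_1)\big)+Y\sum_{k=1}^{K}\big(b-V_{g,1}^{k}(x_1)\big)\;\le\;C_2 H^{2.5}\sqrt{T\log|\calA|}+\frac{\chi^2}{2\eta}+\frac{\eta K H^2}{2}.
\]
Finally I would substitute $\eta=1/\sqrt{K}$ and $\chi=H/\gamma$; both residual terms are then of order $H^2\sqrt{K}=H^{1.5}\sqrt{T}$, which is dominated by $H^{2.5}\sqrt{T\log|\calA|}$, so the entire right-hand side is absorbed into a single constant $C_4$, and the inequality holds uniformly over $Y\in[0,\chi]$ as claimed.

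The main obstacle is the bookkeeping in the dual change-of-variable: getting the index shift right between the update time $k$ and the feedback $V_{g,1}^{k-1}$ it consumes, confirming that the boundary contribution at $k=0$ vanishes thanks to $V_{g,1}^0=b$, and tracking that the comparator $Y$ must lie in $[0,\chi]$ for the nonexpansiveness bound to apply. Everything else—the pointwise UCB sign argument and the feasibility substitution—is routine once Lemma~\ref{lem.PDMD} is in hand.
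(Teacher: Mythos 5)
Your proposal is correct and takes essentially the same route as the paper's proof: both start from Lemma~\ref{lem.PDMD}, drop the model-prediction-error sums via the UCB optimism~\eqref{eq.ucb-bandit}, use feasibility of $\pi^\star$ (i.e.\ $V_{g,1}^{\pi^\star}(x_1)\ge b$, $Y^k\ge 0$) to pass from $V_{g,1}^{\pi^\star}-V_{g,1}^k$ to $b-V_{g,1}^k$, and then apply exactly the projection-nonexpansiveness telescoping on the dual recursion~\eqref{eq.dual} to trade the data-dependent weights $Y^k$ for a fixed comparator $Y\in[0,\chi]$, absorbing the resulting $\chi^2/(2\eta)+\eta KH^2/2 = O(H^2\sqrt{K})$ residuals into $C_4$. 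The only differences are cosmetic (the order in which the UCB/feasibility step and the dual change-of-variable are performed, and telescoping from $Y^0=0$ rather than $Y^1$), so the two arguments coincide.
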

\begin{proof}
	By the dual update in line~9 in Algorithm~\ref{alg:OPDOP}, for any $Y\in[0,\chi]$ we have
	\[
	\begin{array}{rcl}
	|Y^{k+1} -Y |^2 & \!\!= \!\!& \Big\lvert
	\text{Proj}_{[\,0,\,\chi\,]} \big({Y^{k}+\eta({b-V_{g,1}^{k}(x_1)}) }\big)-\text{Proj}_{[\,0,\,\chi\,]}  (Y) \Big\rvert^2
	\\[0.2cm]
	& \!\!\leq \!\!& \big\lvert Y^{k}+\eta({b-V_{g,1}^{k}(x_1)})-Y\big\rvert^2
	\\[0.2cm]
	& \!\!\leq \!\!& \big(Y^{k}-Y\big)^2 \,+\, 2\eta \big({b-V_{g,1}^{k}(x_1)}\big)  \big(Y^{k}-Y\big) \,+\, \eta^2 H^2
	\end{array}
	\]
	where we apply the non-expansiveness of projection in the first inequality and $|b-V_{g,1}^{k}(x_1)|\leq H$ for the last inequality. By summing the above inequality from $k=1$ to $k=K$, we have
	\[
	0\;\leq\;|Y^{K+1} -Y |^2 \;=\;|Y^{1} -Y |^2 \,+\, 2\eta \sum_{k\,=\,1}^{K}\big({b-V_{g,1}^{k}(x_1)}\big)  \big(Y^{k}-Y\big) \,+\, \eta^2 H^2 K
	\]
	which implies that
	\[
	\sum_{k\,=\,1}^{K}\big({b-V_{g,1}^{k}(x_1)}\big)  \big(Y-Y^{k}\big) \;\leq\; \frac{1}{2\eta}|Y^{1} -Y |^2 \,+\, \frac{\eta}{2} H^2 K.
	\]
	By adding the above inequality to~\eqref{eq.mdexp2} in Lemma~\ref{lem.PDMD} and noting that $V_{g,1}^{\pi^\star,k} (x_1)\geq b$ and the UCB result~\eqref{eq.ucb-bandit}, we have
	\[
	\begin{array}{rcl}
	&&\!\!\!\!\!\!\!\!\!\!  \displaystyle\sum_{k\,=\,1}^{K}
	\big(V_{r,1}^{\pi^\star}(x_1) - V_{r,1}^{k}(x_1)\big) \,+\,Y\sum_{k\,=\,1}^{K}  \big(b- V_{g,1}^{k}(x_1)\big)
	\\[0.2cm]
	&\!\!\leq\!\!&
	\displaystyle \frac{\alpha (1+\chi)^2H^3(K+1)}{2}
	\,+\,
	\theta\rbr{1+\chi} H^2(K+1)
	\,+\,\frac{\theta H(K+1) \log|\calA|}{\alpha}
	\,+\, 
	\dfrac{H\log|\calA|}{\alpha} 
	\\[0.2cm]
	&& \displaystyle \,+\,\frac{1}{2\eta}|Y^{1} -Y |^2 \,+\, \frac{\eta}{2} H^2 K. 
	\end{array}
	\]
	By taking $\chi = H/\gamma$, and $\alpha$, $\theta$, $\eta$ in the lemma, we complete the proof.
\end{proof}

According to Lemma~\ref{lem.C1}, we can multiply~\eqref{eq.CII} by $Y$ and add it, together with~\eqref{eq.RII}, to~\eqref{eq.mdexp3},
\begin{equation}\label{eq.mdexp4}
\begin{array}{rcl}
&&\!\!\!\!\!\!\!\!\!\!  \displaystyle\sum_{k\,=\,1}^{K}
\big(V_{r,1}^{\pi^\star}(x_1) - V_{r,1}^{\pi^k}(x_1)\big) \,+\,Y\sum_{k\,=\,1}^{K}  \big(b- V_{g,1}^{\pi^k}(x_1)\big)
\\[0.2cm]
&\!\!\leq\!\!&
\displaystyle C_4 H^{2.5} \sqrt{T \log|\calA|}
\,-\,\sum_{k\,=\,1}^{K}\sum_{h\,=\,1}^H\iota_{r,h}^k(x_h^k,a_h^k)\,-\,Y \sum_{k\,=\,1}^{K}\sum_{h\,=\,1}^H\iota_{g,h}^k(x_h^k,a_h^k)
\,+\,
M_{r,H,2}^K
\,+\,Y
M_{g,H,2}^K.
\end{array}
\end{equation}

We now are ready to show the desired constraint violation bound. We note that there exists a policy $\pi' $ such that $  V_{r,1}^{\pi'}(x_1) = \frac{1}{K}\sum_{k\,=\,1}^{K} V_{r,1}^{\pi^k}(x_1)$ and $  V_{g,1}^{\pi'}(x_1) = \frac{1}{K}\sum_{k\,=\,1}^{K} V_{g,1}^{\pi^k}(x_1)$. By the occupancy measure method~\cite{altman1999constrained}, $V_{r,1}^{\pi^k}(x_1)$ and $V_{g,1}^{\pi^k}(x_1)$ are linear in terms of an occupancy measure induced by policy $\pi^k$ and initial state $x_1$. Thus, an average of $K$ occupancy measures is still an occupancy measure that produces policy $\pi'$ with values $V_{r,1}^{\pi'}(x_1)$ and $V_{g,1}^{\pi'}(x_1)$. 
Particularly, we take $Y=0$ when $\sum_{k\,=\,1}^{K}  \big(b- V_{g,1}^{\pi^k}(x_1)\big)<0$; otherwise $Y =\chi$. 
Therefore, we have
\begin{equation}\label{eq.C0}
\begin{array}{rcl}
&&\!\!\!\!\!\!\!\!\!\!  \displaystyle
V_{r,1}^{\pi^\star}(x_1) - \frac{1}{K}\sum_{k\,=\,1}^{K}V_{r,1}^{\pi^k}(x_1) \,+\,\chi \left[ b-\frac{1}{K} \sum_{k\,=\,1}^{K}V_{g,1}^{\pi^k}(x_1)\right]_{+}
\\[0.2cm]
&&\!\!\!\!\!\!\!\!\!\!  \displaystyle
\;=\;V_{r,1}^{\pi^\star}(x_1) -V_{r,1}^{\pi'}(x_1) \,+\,\chi \left[ b-V_{g,1}^{\pi'}(x_1)\right]_{+}
\\[0.2cm]
&\!\!\leq\!\!&
\displaystyle \frac{C_4 H^{2.5} \sqrt{T \log|\calA|}}{K}
\,-\,\frac{1}{K}\sum_{k\,=\,1}^{K}\sum_{h\,=\,1}^H \iota_{r,h}^k(x_h^k,a_h^k)\,-\,\frac{\chi}{K}\sum_{k\,=\,1}^{K}\sum_{h\,=\,1}^H\iota_{g,h}^k(x_h^k,a_h^k)
\\[0.2cm]
&&\displaystyle
\,+\,
\frac{1}{K}M_{r,H,2}^K
\,+\,
\frac{\chi }{K}\big| M_{g,H,2}^K\big|
\\[0.2cm]
&\!\!\leq\!\!&
\displaystyle \frac{C_4 H^{2.5} \sqrt{T \log|\calA|}}{K}
\,+\,\frac{1}{K}\sum_{k\,=\,1}^{K}\sum_{h\,=\,1}^H (\Gamma_{h}^k+\Gamma_{r,h}^k)(x_h^k,a_h^k)\,+\,\frac{\chi}{K}\sum_{k\,=\,1}^{K}\sum_{h\,=\,1}^H(\Gamma_{h}^k+\Gamma_{g,h}^k)(x_h^k,a_h^k)
\\[0.2cm]
&&\displaystyle
\,+\,
\frac{1}{K}M_{r,H,2}^K
\,+\,
\frac{\chi }{K}\big| M_{g,H,2}^K\big|
\end{array}
\end{equation}
where we apply the UCB result~\eqref{eq.ucb-bandit} for the last inequality.

Finally, we recall two immediate results of Lemma~\ref{lem.Lb} and Lemma~\ref{lem.M}. 
Fix $p\in\rbr{0,1}$, the proof of Lemma~\ref{lem.Lb} also shows that with probability $1-p/2$,
\begin{equation}\label{eq.bb}
\sum_{k\,=\,1}^{K}\sum_{h\,=\,1}^H (\Gamma_{h}^k+\Gamma_{\diamond,h}^k)\big(x_h^k,a_h^k\big)
\;\leq\;
C_1 \sqrt{2d^2 H^3 T \log\rbr{K+1}\log\rbr{\frac{dT}{p}}}
\end{equation}
and the proof of Lemma~\ref{lem.M} shows that with probability $1-p/2$,
\[
\abr{M_{g,H,2}^K} \;\leq\; 4\sqrt{H^2T \log\rbr{\frac{4}{p}}}.
\]
If we take $\log|\calA| = O(d^2\log^2(dT/p))$,~\eqref{eq.C0} implies that with probability $1-p$ we have
\[
V_{r,1}^{\pi^\star}(x_1) -V_{r,1}^{\pi'}(x_1) \,+\,\chi \left[ b-V_{g,1}^{\pi'}(x_1)\right]_{+}
\;\leq\; C_5\,d H^{2.5} \sqrt{T}\, {\log\rbr{\frac{dT}{p}}}.
\]
where $C_5$ is an absolute constant.
Finally, by noting our choice of $\chi\geq 2 Y^\star$, we can apply Lemma~\ref{thm.violationgeneral} to conclude that 
\[
\text{Violation}(K) 
\;\leq\;
C^{'} d H^{2.5} \sqrt{T}\, {\log\rbr{\frac{dT}{p}}}.
\]
with probability $1-p$, where $C^{'}$ is an absolute constant.

\section{Further Results on Tabular Case}\label{ap.CMDP-t}

As mentioned, Assumption~\ref{as.linearMDP} includes a tabular $\text{\normalfont CMDP}(\calS,\calA,H,\mathbb{P},r,g)$ as a special case with $|\calS|<\infty$ and $|\calA|<\infty$. We take the following feature maps and parameter vectors,
\begin{subequations}\label{eq.tabular}
	\begin{equation}\label{eq.tabulara}
	d_1 \;=\; |\calS|^2|\calA|,\; \psi(x,a,x') \;=\; \mathbf{e}_{(x,a,x')}\,\in\,\mathbb{R}^{d_1},\; \theta_h \;=\; \mathbb{P}_h(\,\cdot\,,\cdot\,,\cdot\,)\,\in\,\mathbb{R}^{d_1}
	\end{equation}
	\begin{equation}\label{eq.tabularb}
	d_2 \;=\; |\calS||\calA|,\; \varphi(x,a) \;=\; \mathbf{e}_{(x,a)}\,\in\,\mathbb{R}^{d_2},\; \theta_{r,h}\;=\;r_h(\,\cdot\,,\cdot\,)\,\in\,\mathbb{R}^{d_2} ,\;\theta_{g,h} \;=\;g_h(\,\cdot\,,\cdot\,)\,\in\,\mathbb{R}^{d_2}.
	\end{equation}
\end{subequations}
where $\mathbf{e}_{(x,a,x')}$ is a canonical basis of $\mathbb{R}^{d_1}$ associated with $(x,a,x')$ and $\theta_h \;=\; \mathbb{P}_h(\,\cdot\,,\cdot\,,\cdot\,)$ reads that for any $(x,a,x')\in\calS\times\calA\times\calS$, the $(x,a,x')$th entry of $\theta_h$ is $\mathbb{P}(x'\,|\,x,a)$; similarly we define $\mathbf{e}_{(x,a)}$, $\theta_{r,h}$, and $\theta_{g,h}$. Thus, we can see that 
\[
\mathbb{P}_h\rbr{x'\,\vert\, x,a} \;=\; \langle\psi\rbr{x,a,x'},\theta_h\rangle, \; \text{ for any } (x,a,x') \,\in\, \calS\times\calA\times\calS
\]
\[
r_h(x,a) \;=\; \langle\varphi(x,a), \theta_{r,h}\rangle \; \text{ and } \; g_h(x,a) \;=\; \langle\varphi(x,a), \theta_{g,h}\rangle, \; \text{ for any } (x,a) \,\in\, \calS\times\calA.
\]
We can also verify that 
\[
\norm{\theta_h} \;=\; \rbr{\sum_{(x,a,x')} \abr{\mathbb{P}_h(x'\,|\,x,a)}^2}^{1/2}\;\leq\; \sqrt{|\calS|^2|\calA|} \;=\;\sqrt{d_1}
\]
\[
\norm{\theta_{r,h}} \;=\; \rbr{\sum_{(x,a)} \rbr{r_h(x,a)}^2 }^{1/2} \;\leq\;\sqrt{|\calS||\calA|} \;=\;\sqrt{d_2}
\]
\[
\norm{\theta_{g,h}} \;=\; \rbr{\sum_{(x,a)} \rbr{g_h(x,a)}^2 }^{1/2} \;\leq\;\sqrt{|\calS||\calA|} \;=\;\sqrt{d_2}
\]
and for any $V$: $\calS\to[0,H]$ and any $(x,a)\in\calS\times\calA$, we have 
\[
\norm{ \sum_{x'\,\in\,\calS} \psi(x,a,x') V(x') }\;=\;\rbr{ \sum_{x'\,\in\,\calS} \rbr{V(x')}^2}^{1/2}\;\leq\;\sqrt{|\calS|}\, H \;\leq\; \sqrt{d_1}\, H.
\]
Therefore, the tabular CMDP is a special case of Assumption~\eqref{as.linearMDP} with $d \DefinedAs\max\rbr{d_1,d_2} = |\calS|^2|\calA|$. 


\subsection{Tabular Case of Algorithm~\ref{alg:OPDOP}}
We now detail Algorithm~\ref{alg:OPDOP} for the tabular case as follows. Our policy evaluation works with regression feature $\phi_{\diamond,h}^\tau$: $\calS\times\calA\to\mathbb{R}^{d_2}$,
\[
\phi_{\diamond,h}^\tau(x,a) \;=\; \sum_{x'} \psi(x,a,x') V_{\diamond,h+1}^\tau(x'),\; \text{ for any } (x,a)\in\calS\times\calA
\]
where $\diamond=r$ or $g$. Thus, for any $(\bar{x},\bar{a},\bar{x}')\in\calS\times\calA\times\calS$, the $(\bar{x},\bar{a},\bar{x}')$th entry of $\phi_{\diamond,h}^\tau(x,a)$ is given by 
\[
\sbr{\phi_{\diamond,h}^\tau(x,a)}_{(\bar{x},\bar{a},\bar{x}')} \;=\; \one\{ (x,a) = (\bar{x},\bar{a}) \} V_{\diamond,h+1}^\tau(\bar{x}')
\]
which shows that $\phi_{\diamond,h}^\tau(x,a)$ is a sparse vector with $|\calS|$ nonzero elements at $\{(x,a,x'),x'\in\calS\}$ and the $(x,a,x')$th entry of $\phi_{\diamond,h}^\tau(x,a)$ is $V_{\diamond,h+1}^\tau(x')$. For instance of $\diamond=r$, the regularized least-squares problem~\eqref{eq.lsQ} becomes
\[
\sum_{\tau\,=\,1}^{k-1} \Big({V_{r,h+1}^\tau(x_{h+1}^\tau)\,-\,\sum_{(x,a,x')} \one\{ (x,a) = (x_h^\tau,a_h^\tau) \}  V_{r,h+1}^\tau(x') [w]_{(x,a,x')}}\Big)^2 \,+\, \lambda\,\|w\|_2^2
\]
where $[w]_{(x,a,x')}$ is the $(x,a,x')$th entry of $w$, and the solution $w_{r,h}^k$ serves as an estimator of the transition kernel $\mathbb{P}_h(\cdot\,|\,\cdot,\cdot)$. 
%
%
On the other hand, since $\varphi(x_h^\tau,a_h^\tau)=\mathbf{e}_{(x_h^\tau,a_h^\tau)}$, the regularized least-squares problem~\eqref{eq.lsR} becomes
\[
\sum_{\tau\,=\,1}^{k-1} \big(r_h(x_h^\tau,a_h^\tau)\,-\,[u]_{(x_h^\tau,a_h^\tau)}\big)^2 \,+\, \lambda\,\|u\|_2^2
\] 
where $[u]_{(x,a)}$ is the $(x,a)$th entry of $u$, the solution $u_{r,h}^k$ gives an estimate of $r_h(x,a)$ as $\varphi(x,a)^\top u_{r,h}^k$. By adding similar UCB bonus terms $\Gamma_{h}^k$, $\Gamma_{r,h}^k$: $\calS\times\calA\to\mathbb{R}$ given in Algorithm~\ref{alg:LSVI}, we estimate the action-value function as follows,
\[
\begin{array}{rcl}
Q_{r,h}^k(x,a) &\!\!=\!\!& \displaystyle\min\big(\,[u_{r,h}^k]_{(x,a)}  +\phi_{r,h}^k(x,a)^\top w_{r,h}^k+ (\Gamma_{h}^k+\Gamma_{r,h}^k)(x,a),\, H-h+1\,\big)^+ 
\\[0.2cm]
&\!\!=\!\!&\displaystyle\min\bigg(\,[u_{r,h}^k]_{(x,a)} +\sum_{x'\,\in\,\calS}V_{r,h+1}^k(x') [{w_{r,h}^k}]_{(x,a,x')}+ (\Gamma_{h}^k+\Gamma_{r,h}^k)(x,a),\, H-h+1\,\bigg)^+ 
\end{array}
\]
for any $(x,a)\in\calS\times\calA$. Thus, $V_{r,h}^k(x) = \langle{Q_{r,h}^k(x, \cdot)}, {\pi_h^k(\cdot\,|\,x)}\rangle_\calA$. Similarly, we estimate $g_h(x,a)$ and thus $Q_{g,h}^k(x,a)$ and $V_{g,h}^k(x)$. Using already estimated $\{Q_{r,h}^k(\cdot, \cdot),Q_{g,h}^k(\cdot, \cdot), V_{r,h}^k(\cdot), Q_{g,h}^k(\cdot)\}_{h\,=\,1}^H$, we execute the policy improvement and the dual update in Algorithm~\ref{alg:OPDOP}.

We restate the result of Theorem~\ref{thm.main-full} for the tabular case as follows.
\begin{cor}[Regret and Constraint Violation]\label{cor.bandit}
	For the tabular CMDP with feature maps~\eqref{eq.tabular}, let Assumption~\ref{as.slater} hold.
	Fix $p\in(0,1)$. 
	In Algorithm~\ref{alg:OPDOP}, we set 
	$\alpha={\sqrt{\log\abr{\calA}}}/(H^2K)$, $\beta=C_1 \sqrt{|\calS|^2|\calA|H^2\log\rbr{{|\calS||\calA|T}/{p}}}$, $\eta=1/\sqrt{K}$, $\theta = {1}/K$, and $\lambda=1$ where $C_1$ is an absolute constant. Then, the regret and the constraint violation in~\eqref{eq.regret} satisfy 
	\[
	\text{\normalfont Regret}(K) \leq \displaystyle C |\calS|^2|\calA| H^{2.5} \sqrt{T} \log\!\rbr{\!\frac{|\calS||\calA|T}{p}\!}
	\text{ and } 
	\text{\normalfont Violation}(K) \leq \displaystyle C^{'} |\calS|^2|\calA| H^{2.5}\sqrt{T} {\log\!\rbr{\!\frac{|\calS||\calA|T}{p}}\!}
	\]
	with probability $1-p$ where $C$ and $C^{'}$ are absolute constants. 
\end{cor}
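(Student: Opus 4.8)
The plan is to recognize that Corollary~\ref{cor.bandit} is a direct specialization of Theorem~\ref{thm.full} to the tabular setting, so the bulk of the work is verifying that the tabular CMDP with the canonical-basis feature maps~\eqref{eq.tabular} satisfies the hypotheses of Theorem~\ref{thm.full} under the identification $d \DefinedAs \max(d_1,d_2) = |\calS|^2|\calA|$. The discussion preceding the corollary already checks the norm bounds $\|\theta_h\| \leq \sqrt{d_1}$ and $\|\theta_{r,h}\|,\|\theta_{g,h}\| \leq \sqrt{d_2}$, together with the integrability condition $\|\sum_{x'}\psi(x,a,x')V(x')\| \leq \sqrt{d_1}\,H$ for any $V:\calS\to[0,H]$, so Assumption~\ref{as.linearMDP} holds. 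Since Assumption~\ref{as.slater} is assumed directly in the corollary, both hypotheses of Theorem~\ref{thm.full} are in force.

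First I would confirm the technical side condition of Theorem~\ref{thm.full}, namely $\log|\calA| = O(d^2\log^2(dT/p))$. This is automatic here: because $|\calS|\geq 1$ we have $\log|\calA| \leq |\calA| \leq |\calS|^2|\calA| = d \leq d^2\log^2(dT/p)$ whenever $\log(dT/p)\geq 1$, so no extra restriction on the action-space cardinality is needed in the tabular case.

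Next I would simply invoke Theorem~\ref{thm.full} with the parameter choices $\alpha = \sqrt{\log|\calA|}/(H^2K)$, $\eta = 1/\sqrt{K}$, $\theta = 1/K$, $\lambda = 1$, and $\beta = C_1\sqrt{dH^2\log(dT/p)}$. Substituting $d = |\calS|^2|\calA|$ gives the stated $\beta = C_1\sqrt{|\calS|^2|\calA|H^2\log(|\calS|^2|\calA|T/p)}$, and Theorem~\ref{thm.full} then yields, with probability $1-p$,
\[
\text{Regret}(K) \leq C\,dH^{2.5}\sqrt{T}\log(dT/p) = C\,|\calS|^2|\calA|\,H^{2.5}\sqrt{T}\,\log(|\calS|^2|\calA|T/p)
\]
together with the analogous bound for $\text{Violation}(K)$.

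The only point requiring a short argument --- and the nearest thing to an obstacle --- is reconciling the logarithmic factor $\log(|\calS|^2|\calA|T/p)$ produced by the substitution with the factor $\log(|\calS||\calA|T/p)$ stated in the corollary (and likewise inside the expression for $\beta$). I would absorb this into the absolute constants by noting that $\log(|\calS|^2|\calA|T/p) = \log|\calS| + \log(|\calS||\calA|T/p) \leq 2\log(|\calS||\calA|T/p)$, using $|\calS| \leq |\calS||\calA|T/p$, so replacing one log argument by the other merely rescales $C$, $C'$, and $C_1$ by harmless constant factors. This completes the reduction and establishes the claimed $\tilde{O}(|\calS|^2|\calA|H^{2.5}\sqrt{T})$ regret and constraint-violation bounds.
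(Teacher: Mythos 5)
Your proposal is correct and follows essentially the same route as the paper: the paper's own proof is a one-line reduction to Theorem~\ref{thm.main-full} with $d=|\calS|^2|\calA|$, noting that $\log|\calA| \leq O(d^2\log^2(dT/p))$ holds automatically in the tabular case. You additionally spell out the reconciliation of $\log(|\calS|^2|\calA|T/p)$ with the stated factor $\log(|\calS||\calA|T/p)$ via a constant rescaling, a detail the paper silently absorbs into its constants.
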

\begin{proof}
	It follows the proof of Theorem~\ref{thm.main-full} by noting that the tabular CMDP is a special linear MDP in Assumption~\ref{as.linearMDP}, with $d = |\calS|^2|\calA|$, and we have $\log\abr{\calA}\leq O\rbr{d^2\log\rbr{{dT}/{p}}}$ automatically.
\end{proof}

\subsection{Further Results: Proof of Theorem~\ref{thm.tabular}}\label{app.further}

As we see in the proof of Theorem~\ref{thm.main-full}, our final regret or constraint violation bounds are dominated by the accumulated bonus terms, which come from the design of `optimism in the face of uncertainty.' This framework provides a powerful flexibility for Algorithm~\ref{alg:OPDOP} to incorporate other optimistic policy evaluation methods.  In what follows, we introduce Algorithm~\ref{alg:OPDOP} with a variant of optimistic policy evaluation.

We repeat notation for readers' convenience. For any $(h,k)\in[H]\times[K]$, any $(x,a,x')\in\calS\times\calA\times\calS$, and any $(x,a)\in\calS\times\calA$, we define two visitation counters $n_h^k(x,a,x')$ and $n_h^k(x,a)$ at step $h$ in episode $k$,
\[
n_h^k(x,a,x') \;=\; \sum_{\tau\,=\,1}^{k-1} \one\{(x,a,x') = (x_h^\tau,a_h^\tau,a_{h+1}^\tau) \}
\;
\text{ and }
\;
n_h^k(x,a) \;=\; \sum_{\tau\,=\,1}^{k-1} \one\{(x,a) = (x_h^\tau,a_h^\tau) \}.
\]
This allows us to estimate transition kernel $\mathbb{P}_h$, reward function $r$, and utility function $g$ for episode $k$ by 
\[
\hat{\mathbb{P}}_h^k (x'\,|\,x,a) 
\;=\;
\frac{n_h^k(x,a,x')}{n_h^k(x,a) + \lambda},\; \text{ for all } (x,a,x')\,\in\,\calS\times\calA\times\calS 
\]
\[
\hat{r}_h^k(x,a) \;=\; \frac{1}{n_h^k(x,a) +\lambda} \sum_{\tau\,=\,1}^{k-1} \one\{ (x,a) = (x_h^\tau,a_h^\tau) \} r_h(x_h^\tau,a_h^\tau),\; \text{ for all } (x,a)\in\calS\times\calA.
\]
\[
\hat{g}_h^k(x,a) \;=\; \frac{1}{n_h^k(x,a) +\lambda} \sum_{\tau\,=\,1}^{k-1} \one\{ (x,a) = (x_h^\tau,a_h^\tau) \} g_h(x_h^\tau,a_h^\tau),\;
\text{ for all } (x,a)\in\calS\times\calA.
\]
where $\lambda>0$ is the regularization parameter. Moreover, we introduce the bonus term $\Gamma_{h}^k$: $\calS\times\calA\to\mathbb{R}$,
\[
\Gamma_h^k (x,a) \;=\; \beta \rbr{n_h^k(x,a)+\lambda}^{-1/2}
\]
which adapts the counter-based bonus terms in the literature~\cite{azar2017minimax,jin2018q}, where $\beta>0$ is to be determined later. Using the estimated transition kernels $\{\hat{\mathbb{P}}_h^k\}_{h\,=\,1}^H$, the estimated reward/utility functions $\{\hat{r}_h^k,\hat{g}_h^k\}_{h\,=\,1}^H$, and the bonus terms $\{\Gamma_h^k\}_{h\,=\,1}^H$, we now can estimate the action-value function via
\[
Q_{\diamond,h}^k(x,a) \;=\; \min\Big(\,\hat{\diamond}_h^k(x,a) +\sum_{x'\,\in\,\calS} \hat{\mathbb{P}}_h(x'\,|\,x,a) V_{\diamond,h+1}^k(x')+ 2\Gamma_{h}^k(x,a),\, H-h+1\,\Big)^+ 
\]
for any $(x,a)\in\calS\times\calA$, where $\diamond = r$ or $g$. Thus, $V_{\diamond,h}^k(x) = \langle{Q_{\diamond,h}^k(x, \cdot)}, {\pi_h^k(\cdot\,|\,x)}\rangle_\calA$. We summarize the above procedure in Algorithm~\ref{alg:tbandit}. Using already estimated $\{Q_{r,h}^k(\cdot, \cdot),Q_{g,h}^k(\cdot, \cdot)\}_{h\,=\,1}^H$, we can execute the policy improvement and the dual update in Algorithm~\ref{alg:OPDOP}.

Similar to Theorem~\ref{thm.main-full}, we prove the following regret and constraint violation bounds. 

\begin{theorem}[Regret and Constraint Violation]\label{thm.tfull}
	For the tabular CMDP with feature maps~\eqref{eq.tabular}, let Assumption~\ref{as.slater} hold.
	Fix $p\in\rbr{0,1}$. 
	In Algorithm~\ref{alg:OPDOP}, we set
	$\alpha={\sqrt{\log\abr{\calA}}}/(H^2K)$, $\beta = C_1 H\sqrt{|\calS| \log(|\calS||\calA|T/p)}$, $\eta=1/\sqrt{K}$, $\theta = {1}/K$, and $\lambda=1$ where $C_1$ is an absolute constant. Then, the regret and the constraint violation in~\eqref{eq.regret} satisfy 
	\[
	\text{\normalfont Regret}(K) \leq \displaystyle C |\calS|\sqrt{|\calA| H^{5}T} \log\!\rbr{\!\frac{|\calS||\calA|T}{p}\!}
	\text{ and }
	\text{\normalfont Violation}(K) \leq \displaystyle C'|\calS|\sqrt{|\calA| H^{5}T} \log\!\rbr{\!\frac{|\calS||\calA|T}{p}\!}
	\]
	with probability $1-p$ where $C$ and $C'$ are absolute constants. 
\end{theorem}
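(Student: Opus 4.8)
The plan is to run the proof of Theorem~\ref{thm.main-full} essentially verbatim, changing only the two ingredients that depend on how the action-value functions are estimated. The key observation is that the primal-dual mirror descent machinery is agnostic to the policy evaluation scheme: Lemma~\ref{lem.PDMD}, the dual-update simplification of Lemma~\ref{lem.reger}, the refined bound of Lemma~\ref{lem.C1}, and the martingale bound of Lemma~\ref{lem.M} use only the closed-form update~\eqref{eq.closed}, the projected dual step~\eqref{eq.dual}, the performance difference lemma, strong duality (Lemma~\ref{lem.sd-b}), and the truncation $Q_{\diamond,h}^k,V_{\diamond,h}^k\in[0,H]$ enforced in line~7 of Algorithm~\ref{alg:tbandit}. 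All of these hold unchanged for Algorithm~\ref{alg:tbandit}, so I would restate these four lemmas directly, reducing the regret (as in Lemma~\ref{lem.reger}) to $O(H^{2.5}\sqrt{T\log|\calA|})$ plus the accumulated model-prediction error $\sum_{k,h}(\mathbb{E}_{\pi^\star}[\iota_{r,h}^k(x_h,a_h)]-\iota_{r,h}^k(x_h^k,a_h^k))$ plus a martingale, and reducing the violation analogously via Lemma~\ref{lem.C1} and the coupling~\eqref{eq.C0}.

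The first new ingredient is the tabular analog of the UCB optimism~\eqref{eq.ucb-bandit}: with $\beta=C_1 H\sqrt{|\calS|\log(|\calS||\calA|T/p)}$ and $\Gamma_h^k(x,a)=\beta(n_h^k(x,a)+\lambda)^{-1/2}$, one wants $-4\Gamma_h^k(x,a)\le\iota_{\diamond,h}^k(x,a)\le 0$ for all $(k,h,x,a)$ and $\diamond\in\{r,g\}$, w.p. $1-p/2$. Unfolding $\iota_{\diamond,h}^k=\diamond_h+\mathbb{P}_hV_{\diamond,h+1}^k-Q_{\diamond,h}^k$ against line~7, the error splits into a reward/utility part $\diamond_h-\hat\diamond_h^k$ and a transition part $(\mathbb{P}_h-\hat{\mathbb{P}}_h^k)V_{\diamond,h+1}^k$. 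Since the reward/utility is deterministic, $\hat\diamond_h^k$ differs from $\diamond_h$ only by the regularization bias $\lambda/(n_h^k+\lambda)\le(n_h^k+\lambda)^{-1/2}\le\Gamma_h^k$. For the transition part I would bound $|(\mathbb{P}_h-\hat{\mathbb{P}}_h^k)V_{\diamond,h+1}^k(x,a)|\le\|\hat{\mathbb{P}}_h^k(\cdot\,|\,x,a)-\mathbb{P}_h(\cdot\,|\,x,a)\|_1\cdot\|V_{\diamond,h+1}^k\|_\infty$ and invoke the $L_1$ deviation inequality for empirical distributions over $|\calS|$ outcomes together with a union bound over $(h,k,x,a)$, giving $\|\hat{\mathbb{P}}_h^k-\mathbb{P}_h\|_1=O(\sqrt{|\calS|\log(|\calS||\calA|T/p)/n_h^k})$; multiplying by $\|V\|_\infty\le H$ bounds this part by $\Gamma_h^k$. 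The doubled bonus $2\Gamma_h^k$ in line~7 then forces $\iota\le 0$, while the two error terms give $\iota\ge-4\Gamma_h^k$, with the truncation handled by the standard monotonicity argument as for~\eqref{eq.ucb-bandit}.

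The second ingredient replaces the elliptical-potential step (Lemma~\ref{lem.bdsums}) of Lemma~\ref{lem.Lb} by a counting bound on the accumulated bonus. For each fixed $h$, grouping episodes by the visited pair yields $\sum_{k=1}^K(n_h^k(x_h^k,a_h^k)+\lambda)^{-1/2}\le\sum_{(x,a)}\sum_{j=0}^{N_h^K(x,a)-1}(j+1)^{-1/2}\le\sum_{(x,a)}2\sqrt{N_h^K(x,a)}\le 2\sqrt{|\calS||\calA|K}$ by Cauchy--Schwarz, where $N_h^K(x,a)$ is the total number of visits and $\sum_{(x,a)}N_h^K(x,a)=K$. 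Summing over $h$ and inserting $\beta$ gives $\sum_{k,h}\Gamma_h^k(x_h^k,a_h^k)\le 2\beta H\sqrt{|\calS||\calA|K}=O(|\calS|\sqrt{|\calA|H^3 T}\,\log(|\calS||\calA|T/p))$, the count-based analog of~\eqref{eq.sumb}. Combining this with the unchanged mirror descent term $O(H^{2.5}\sqrt{T\log|\calA|})=O(\sqrt{H^5 T\log(|\calS||\calA|T/p)})$ and the Azuma--Hoeffding bound of Lemma~\ref{lem.M}, the regret decomposition of Section~\ref{subsec.rf} and the violation decomposition~\eqref{eq.C0} of Section~\ref{subsec.cf} both close; since $\log|\calA|=O(\log(|\calS||\calA|T/p))$, both terms are dominated by $C|\calS|\sqrt{|\calA|H^5 T}\log(|\calS||\calA|T/p)$, and a final application of strong duality (Lemma~\ref{lem.sd-b}) and the violation lemma (Lemma~\ref{thm.violationgeneral}) delivers the two stated bounds.

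I expect the UCB optimism step to be the main obstacle. The delicate point is obtaining the $\sqrt{|\calS|}$ rather than $|\calS|$ dependence in $\beta$, which requires the $L_1$ concentration of the empirical transition vector---uniform over all value functions bounded by $H$---instead of a per-coordinate union bound or a covering argument over the data-dependent functions $V_{\diamond,h+1}^k$. It is precisely this $\sqrt{|\calS|}$ in the bonus, multiplied against the $\sqrt{|\calS||\calA|}$ from the counting bound, that yields the $|\calS|\sqrt{|\calA|}$ state-action dependence and hence the improvement over the generic linear-MDP instantiation of Corollary~\ref{cor.bandit}.
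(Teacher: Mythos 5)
Your proposal is correct, and its skeleton is exactly the paper's: reuse Lemmas~\ref{lem.PDMD}, \ref{lem.reger}, \ref{lem.C1}, and~\ref{lem.M} unchanged (they depend only on the updates~\eqref{eq.closed} and~\eqref{eq.dual} and the truncation $Q_{\diamond,h}^k\in[0,H-h+1]$), and swap in only (i) a tabular optimism bound and (ii) a bound on the accumulated bonuses, closing with the decomposition~\eqref{eq.C0} and Lemma~\ref{thm.violationgeneral}. Where you genuinely depart from the paper is in how the two new ingredients are proved. For the optimism bound~\eqref{eq.ucb-full-t}, the paper fixes a value function, applies the self-normalized concentration of Lemma~\ref{lem.CSNP}, and then makes the bound uniform over the data-dependent $V_{\diamond,h+1}^k$ via an $\epsilon$-covering of $\{V:\calS\to[0,H]\}$, whose log-cardinality $O(|\calS|\log(\cdot))$ is the source of the $\sqrt{|\calS|}$ in $\beta$; you instead invoke the Weissman-type $L_1$ deviation bound for empirical transition vectors, which is uniform over all $V$ with $\|V\|_\infty\le H$ for free via H\"older, and you arrive at the same $\beta = C_1H\sqrt{|\calS|\log(|\calS||\calA|T/p)}$. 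For the accumulated bonus, the paper embeds the counts into diagonal Gram matrices $\bar\Lambda_h^k$ so as to recycle the elliptical potential lemma (Lemma~\ref{lem.bdsums}), whereas your pigeonhole/Cauchy--Schwarz count $\sum_{k}(n_h^k(x_h^k,a_h^k)+1)^{-1/2}\le 2\sqrt{|\calS||\calA|K}$ gives the same $O(\sqrt{|\calS||\calA|K})$ per step directly. Both routes then yield identical final bounds, with the dominant term coming from the mirror descent contribution $H^{2.5}\sqrt{T\log|\calA|}\le|\calS|\sqrt{|\calA|H^5T}\log(|\calS||\calA|T/p)$, just as you say. Your route is the more elementary, UCRL2-style argument and avoids covering numbers; the paper's route buys a near-verbatim specialization of its linear-MDP machinery. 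Two small points you should make rigorous when writing this up: the $L_1$ bound is stated for a fixed number of i.i.d.\ samples, while $n_h^k(x,a)$ is random and history-dependent, so your union bound over $(h,k,x,a)$ should be phrased as a union over possible count values (or via an optional-skipping argument); and $\hat{\mathbb{P}}_h^k$ is normalized by $n_h^k(x,a)+\lambda$, hence is not exactly the empirical distribution, but the resulting bias $\lambda H/(n_h^k(x,a)+\lambda)\le H(n_h^k(x,a)+\lambda)^{-1/2}$ is absorbed into $\Gamma_h^k$ exactly as you already do for the reward term.
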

\begin{proof}
	The proof is similar to Theorem~\ref{thm.main-full}. Since we only change the policy evaluation, all previous policy improvement results still hold. By Lemma~\ref{lem.reger}, we have
	\[
	\begin{array}{rcl}
	\text{Regret}(K) 
	& \!\!=\!\! & C_3H^{2.5} \sqrt{ T \log|\calA|}
	\,+\,
	\displaystyle\sum_{k\,=\,1}^{K}\sum_{h\,=\,1}^H\rbr{ \mathbb{E}_{\pi^\star}\sbr{\iota_{r,h}^k(x_h,a_h)} 
		-\iota_{r,h}^k(x_h^k,a_h^k)}
	\,+\,
	M_{r,H,2}^K
	\end{array}
	\]
	where $\iota_{r,h}^k$ is the model prediction error given by~\eqref{eq.mper} and $\{M_{r,h,m}^k\}_{(k,h,m)\in[K]\times[H]\times[2]}$ is a martingale adapted to the filtration $\{\calF_{h,m}^k\}_{(k,h,m)\in[K]\times[H]\times[2]}$ in terms of time index $t$ defined in~\eqref{eq.t}. By Lemma~\ref{lem.M}, it holds with probability $1-p/3$ that $	|M_{r,H,2}^K| \leq 4\sqrt{H^2T \log({4}/{p})}$. The rest is to bound the double sum term. As shown in Section~\ref{subsec.ucb-full-t}, with probability $1-p/2$ it holds that for any $(k,h)\in[K]\times[H]$ and $(x,a)\in\calS\times\calA$,
	\begin{equation}\label{eq.ucb-full-t}
	-4 \Gamma_{h}^k(x,a) \;\leq\;\iota_{r,h}^k(x,a) \;\leq\;0.
	\end{equation}
	Together with the choice of $\Gamma_{h}^k$, we have 
	\[
	\begin{array}{rcl}
	\displaystyle\sum_{k\,=\,1}^{K}\sum_{h\,=\,1}^H\rbr{ \mathbb{E}_{\pi^\star}\sbr{\iota_{r,h}^k(x_h,a_h) \,\vert\, x_1} 
		-\iota_{r,h}^k(x_h^k,a_h^k)}
	&\!\!\leq\!\!& \displaystyle 4
	\sum_{k\,=\,1}^{K}\sum_{h\,=\,1}^H\Gamma_{h}^k(x_h^k,a_h^k)
	\\[0.2cm]
	&\!\!=\!\!&\displaystyle
	4\beta \sum_{k\,=\,1}^{K}\sum_{h\,=\,1}^H\rbr{n_h^k(x_h^k,a_h^k)+\lambda}^{-1/2}.
	\end{array}
	\]
	Define mapping $\bar\phi$: $\calS\times\calA\to\mathbb{R}^{|\calS||\calA|}$ as $\bar\phi(x,a) = \mathbf{e}_{(x,a)}$, we can utilize Lemma~\ref{lem.bdsums}. For any $(k,h)\in[K]\times[H]$, we have
	\[
	\bar\Lambda_{h}^k \;=\; \sum_{\tau\,=\,1}^{k-1} \bar{\phi}(x_h^\tau,a_h^\tau) \bar{\phi}(x_h^\tau,a_h^\tau)^\top + \lambda I \,\in\,\mathbb{R}^{|\calS||\calA|\times|\calS||\calA|}
	\]
	\[
	\Gamma_h^k(x,a) \;=\; \beta \rbr{n_h^k(x,a)+\lambda}^{-1/2}\;=\;\beta\sqrt{\bar{\phi}(x,a) (\bar\Lambda_h^k)^{-1}\bar{\phi}(x,a)^\top}
	\]
	where $\bar\Lambda_{h}^k$ is a diagonal matrix whose the $(x,a)$th diagonal entry is $n_h^k(x,a)+\lambda$. Therefore, we have
	\[
	\begin{array}{rcl}
	\displaystyle\sum_{k\,=\,1}^{K}\sum_{h\,=\,1}^H\rbr{ \mathbb{E}_{\pi^\star}\sbr{\iota_{r,h}^k(x_h,a_h)} 
		-\iota_{r,h}^k(x_h^k,a_h^k)}
	&\!\!\leq\!\!& \displaystyle
	4\beta \sum_{k\,=\,1}^{K}\sum_{h\,=\,1}^H\rbr{\bar{\phi}(x_h^k,a_h^k) (\bar\Lambda_h^k)^{-1}\bar{\phi}(x_h^k,a_h^k)^\top}^{1/2}
	\\[0.2cm]
	&\!\!\leq\!\!&\displaystyle
	4\beta \sum_{h\,=\,1}^H\rbr{K \sum_{k\,=\,1}^{K}\bar{\phi}(x_h^k,a_h^k) (\bar\Lambda_h^k)^{-1}\bar{\phi}(x_h^k,a_h^k)^\top}^{1/2}
	\\[0.2cm]
	&\!\!\leq\!\!&\displaystyle
	4\beta\sqrt{2K} \sum_{h\,=\,1}^H \log^{1/2}\rbr{\frac{\det\rbr{\bar\Lambda_{h}^{K+1}} }{\det\bar\Lambda_h^1}}
	\end{array}
	\]
	where we apply the Cauchy-Schwartz inequality for the second inequality and Lemma~\ref{lem.bdsums} for the third inequality.
	Notice that $(K+\lambda)I \succeq \bar\Lambda_{h}^K$ and $\bar\Lambda_{h}^1=\lambda I$. Hence, 
	\[
	\begin{array}{rcl}
	\text{Regret}(K) 
	& \!\!=\!\! & C_3H^{2.5} \sqrt{ T \log|\calA|}
	\,+\,
	4\beta \sqrt{2|\calS||\calA|HT} \sqrt{\log\rbr{\dfrac{K+\lambda}{\lambda}}}
	\,+\,
	4\sqrt{H^2T \log\rbr{\dfrac{6}{p}}}.
	\end{array}
	\]
	Notice that $\log|\calA| \leq O \big( |\calS|^2{|\calA|}\log^2(|\calS||\calA|T/p) \big)$. 
	By setting $\lambda=1$ and $\beta = C_1 H\sqrt{|\calS| \log(|\calS||\calA|T/p)}$, we conclude the desired regret bound.
	
	For the constraint violation analysis, Lemmas~\ref{lem.C1} still holds. Similar to~\eqref{eq.C0}, we have
	\[
	\begin{array}{rcl}
	&&\!\!\!\!\!\!\!\!\!\!  \displaystyle
	V_{r,1}^{\pi^\star}(x_1) -V_{r,1}^{\pi'}(x_1) \,+\,\chi \left[ b-V_{g,1}^{\pi'}(x_1)\right]_{+}
	\\[0.2cm]
	&\!\!\leq\!\!&
	\displaystyle \frac{C_4 H^{2.5} \sqrt{T \log|\calA|}}{K}
	\,+\,\frac{4}{K}\sum_{k\,=\,1}^{K}\sum_{h\,=\,1}^H \Gamma_{h}^k(x_h^k,a_h^k)\,+\,\frac{4\chi}{K}\sum_{k\,=\,1}^{K}\sum_{h\,=\,1}^H\Gamma_{h}^k(x_h^k,a_h^k)
	\,+\,
	\frac{1}{K}M_{r,H,2}^K
	\,+\,
	\frac{\chi }{K}\big| M_{g,H,2}^K\big|
	\end{array}
	\]
	where $V_{r,1}^{\pi'}(x_1)=\frac{1}{K}\sum_{k\,=\,1}^{K}V_{r,1}^{\pi^k}(x_1)$ and $V_{g,1}^{\pi'}(x_1)=\frac{1}{K} \sum_{k\,=\,1}^{K}V_{g,1}^{\pi^k}(x_1)$. Similar to Lemma~\ref{lem.M}, it holds with probability $1-p/3$ that $	|M_{g,H,2}^K| \leq 4\sqrt{H^2T \log({6}/{p})}$ for $\diamond=r$ or $g$. As shown in Section~\ref{subsec.ucb-full-t}, with probability $1-p/3$ it holds that $-4 \Gamma_{h}^k(x,a) \leq \iota_{\diamond,h}^k(x,a) \leq 0$ for any $(k,h)\in[K]\times[H]$ and $(x,a)\in\calS\times\calA$. Therefore, we have
	\[
	\begin{array}{rcl}
	&& V_{r,1}^{\pi^\star}(x_1) -V_{r,1}^{\pi'}(x_1) \,+\,\chi \left[ b-V_{g,1}^{\pi'}(x_1)\right]_{+} 
	\\[0.2cm]
	&& \;\leq\;
	\dfrac{C_4 H^{2.5} \sqrt{T\log (\calA|)}}{K}
	\,+\,
	\dfrac{4(1+\chi) \beta \sqrt{2|\calS||\calA|HT} }{K}\sqrt{\log\rbr{ \dfrac{K+\lambda}{\lambda}}}
	\,+\,
	\dfrac{4(1+\chi)}{K}\sqrt{H^2T \log \rbr{\dfrac{6}{p}}}
	\end{array}
	\]
	which leads to the desired constraint violation bound due to Lemma~\ref{thm.violationgeneral} and we set $\lambda$ and $\beta$ as previously.
\end{proof}

\section{Other Verifications}

In this section, we collect some verifications for readers’ convenience.

\subsection{Proof of Formulas~\eqref{eq.RI} and~\eqref{eq.gRI}}\label{subsec.RCI}
For any $(k,h)\in[K]\times[H]$, we recall the definitions of $V_{r,h}^{\pi^\star}$ in the Bellman equations~\eqref{eq.bellman} and $V_{r,h}^{k}$ from line~12 in Algorithm~\ref{alg:LSVI},
\[
V_{r,h}^{\pi^\star}(x) \;=\; \big\langle{Q_{h}^{\pi^\star}(x,\cdot)},{\pi_h^\star(\,\cdot\,\vert\, x)}\big\rangle\;\text{ and }\; V_{r,h}^{k}(x) \;=\; \big\langle{Q_h^k(x,\,\cdot\,)},{\pi_h^k(\,\cdot\,\vert\, x)}\big\rangle.
\]
We can expand the difference $V_{r,h}^{\pi^\star}(x)-V_{r,h}^{k}(x)$ into 
\begin{equation}\label{eq.V1}
\begin{array}{rcl}
V_{r,h}^{\pi^\star}(x) \,-\, V_{r,h}^{k}(x) 
&\!\!=\!\!& 
\big\langle{Q_{h}^{\pi^\star}(x,\,\cdot\,)},{\pi_h^\star(\,\cdot\,\vert \,x)}\big\rangle
\,-\, 
\big\langle{Q_h^k(x,\,\cdot\,)},{\pi_h^k(\,\cdot\,\vert\, x)}\big\rangle
\\[0.2cm]
&\!\!=\!\!& 
\big\langle{Q_{h}^{\pi^\star}(x,\,\cdot\,)-Q_h^k(x,\,\cdot\,)},{\pi_h^\star(\,\cdot\,\vert\, x)}\big\rangle 
\,+\,
\big\langle{Q_h^k(x,\,\cdot\,)},{\pi_h^\star(\,\cdot\,\vert\, x)-\pi_h^k(\,\cdot\,\vert\, x)}\big\rangle
\\[0.2cm]
&\!\!=\!\!& 
\big\langle{Q_{h}^{\pi^\star}(x,\,\cdot\,)-Q_h^k(x,\,\cdot\,)},{\pi_h^\star(\,\cdot\,\vert \,x)}\big\rangle 
\,+\,
\xi_h^k(x),
\end{array}
\end{equation}
where $\xi_h^k(x) \DefinedAs \langle{Q_h^k(x,\,\cdot\,)},{\pi_h^\star(\,\cdot\,\vert\, x)-\pi_h^k(\,\cdot\,\vert\, x)}\rangle$.

Recall the equality in the Bellman equations~\eqref{eq.bellman} and the model prediction error,
\[
Q_{r,h}^{\pi^\star} \;=\; r_h^k \,+\, \mathbb{P}_h V_{r,h+1}^{\pi^\star}\; \text{ and }\; \iota_{r,h}^k \;=\; r_h \,+\, \mathbb{P}_h V_{r,h+1}^k \,-\, Q_{r,h}^k.
\]
As a result of the above two, it is easy to see that
\[
Q_{r,h}^{\pi^\star} \,-\, Q_{r,h}^k \;=\; \mathbb{P}_h \big({V_{r,h+1}^{\pi^\star} - V_{r,h+1}^k}\big) \,+\, \iota_{r,h}^k.
\] 
Substituting the above difference into the right-hand side of~\eqref{eq.V1} yields,
\[
V_{r,h}^{\pi^\star}(x) \,-\, V_{r,h}^{k}(x) 
\; = \;
\big\langle{\mathbb{P}_h \big({V_{r,h+1}^{\pi^\star} - V_{r,h+1}^k}\big)\rbr{x,\,\cdot\,}},{\pi_h^\star(\,\cdot\,\vert x)}\big\rangle 
\,+\,
\big\langle{\iota_{r,h}^k(x,\,\cdot\,)},{\pi_h^\star(\,\cdot\,\vert\, x)}\big\rangle 
\,+\,
\xi_h^k(x).
\]
which displays a recursive formula over $h$. 
Thus, we expand $V_{r,1}^{\pi^\star}(x_1)-V_{r,1}^{k}(x_1)$ recursively with $x=x_1$ as
\begin{equation}
\label{eq.V2}
\begin{array}{rcl}
V_{r,1}^{\pi^\star}(x_1)\,-\,V_{r,1}^{k}(x_1) 
&\!\! = \!\!&
\big\langle{\mathbb{P}_1 \big({V_{r,2}^{\pi^\star} - V_{r,2}^k}\big)\rbr{x_1,\cdot}},{\pi_1^\star(\cdot\vert x_1)}\big\rangle 
\,+\,
\big\langle{\iota_{r,1}^k(x_1,\cdot)},{\pi_1^\star(\cdot\vert x_1)}\rangle 
\,+\,
\xi_1^k(x_1)
\\[0.2cm]
&\!\! = \!\!&
\inner{\mathbb{P}_1 {\big\langle{\mathbb{P}_2 \big({V_{r,3}^{\pi^\star} - V_{r,3}^k}\big)\rbr{x_2,\cdot}},{\pi_2^\star(\cdot\vert x_2)}\big\rangle }\rbr{x_1,\cdot}}{\pi_1^\star(\cdot\vert x_1)} 
\\[0.3cm]
&& \,+\,\inner{\mathbb{P}_1 \big\langle{\iota_{r,2}^k(x_2,\cdot)},{\pi_2^\star(\cdot\vert x_2)}\big\rangle\rbr{x_1,\cdot}}{\pi_1^\star(\cdot\vert x_1)}
\,+\,\big\langle{\iota_{r,1}^k(x_1,\cdot)},{\pi_1^\star(\cdot\vert x_1)}\big\rangle
\\[0.3cm]
&&
\,+\,\big\langle{\mathbb{P}_1\xi_2^k\rbr{x_1,\cdot}},{\pi_1^\star(\cdot\vert x_1)}\big\rangle
\,+\,
\xi_1^k(x_1).
\end{array}
\end{equation}

For notational simplicity, for any $(k,h)\in[K]\times[H]$, we define an operator $\calI_h$ for function $f:\calS\times\calA\to \mathbb{R}$,
\[
\rbr{\calI_h f}\rbr{x} \;=\; \big\langle{f(x,\,\cdot\,)},{\pi_h^\star\rbr{\,\cdot\,\vert x}}\big\rangle.
\]
With this notation, repeating the above recursion~\eqref{eq.V2}  over $h\in[H]$ yields
\[
\begin{array}{rcl}
&& \!\!\!\!\!\!\!\!\!\! V_{r,1}^{\pi^\star}(x_1)-V_{r,1}^{k}(x_1) 
\\[0.2cm]
&\!\! = \!\!&
\calI_1\mathbb{P}_1\calI_2\mathbb{P}_2 \big({V_{r,3}^{\pi^\star} - V_{r,3}^k}\big)
+\calI_1 \mathbb{P}_1\calI_2 \iota_{r,2}^k
+\calI_1 \iota_{r,1}^k
+\calI_1\mathbb{P}_1\xi_2^k+\xi_1^k
\\[0.2cm]
&\!\! = \!\!&
\calI_1\mathbb{P}_1\calI_2\mathbb{P}_2\calI_3\mathbb{P}_3 \big({V_{r,4}^{\pi^\star} - V_{r,4}^k}\big)
+\calI_1 \mathbb{P}_1\calI_2 \mathbb{P}_2\calI_3\iota_{r,3}^k
+\calI_1 \mathbb{P}_1\calI_2 \iota_{r,2}^k
+\calI_1 \iota_{r,1}^k
+\calI_1\mathbb{P}_1\calI_2\mathbb{P}_2\xi_3^k
+\calI_1\mathbb{P}_1\xi_2^k+\xi_1^k
\\
&\!\! \vdots \!\!& 
\\
&\!\! = \!\!&  \rbr{\displaystyle\prod_{h\,=\,1}^H\calI_h\mathbb{P}_h }\big({V_{r,H+1}^{\pi^\star} - V_{r,H+1}^k}\big)
+\displaystyle\sum_{h\,=\,1}^{H}\rbr{\prod_{i\,=\,1}^{h-1} \calI_i \mathbb{P}_i} \calI_h \iota_{r,h}^k 
+\sum_{h\,=\,1}^H\rbr{\prod_{h\,=\,1}^{h-1} \calI_i\mathbb{P}_i}\xi_h^k.
\end{array}
\]
Finally, notice that $V_{r,H+1}^{\pi^\star} =V_{r,H+1}^k=0$, we use the definitions of $\mathbb{P}_h$ and $\calI_h$ to conclude~\eqref{eq.RI}. Similarly, we can also use the above argument to verify~\eqref{eq.gRI}.

\subsection{Proof of Formulas~\eqref{eq.RII} and~\eqref{eq.CII}}\label{subsec.RCII}

We recall the definition of $V_{r,h}^{\pi^k}$ and define an operator $\calI_h^k$ for function $f:\calS\times\calA\to \mathbb{R}$,
\[
V_{r,h}^{\pi^k}(x) \;=\; \big\langle{Q_h^{\pi^k}(x,\,\cdot\,)},{\pi_h^k(\,\cdot\,\vert \,x)}\big\rangle
\;\text{ and }\;
\rbr{\calI_{h}^k f}\rbr{x} \;=\; \big\langle{f(x,\,\cdot\,)},{\pi_h^k(\,\cdot\,\vert\, x)}\big\rangle.
\]
We expand the model prediction error $\iota_{r,h}^k$ into,
\[
\begin{array}{rcl}
\iota_{r,h}^k(x_h^k,a_h^k) 
&\!\!=\!\!& 
r_h(x_h^k,a_h^k) 
\,+\,
(\mathbb{P}_h V_{r,h+1}^k)(x_h^k,a_h^k)-Q_{r,h}^k(x_h^k,a_h^k)
\\[0.2cm]
&\!\!=\!\!&
\rbr{r_h(x_h^k,a_h^k)+(\mathbb{P}_h V_{r,h+1}^k)(x_h^k,a_h^k)-Q_{r,h}^{\pi^k}(x_h^k,a_h^k)} 
\,+\,
\rbr{Q_{r,h}^{\pi^k}(x_h^k,a_h^k)-Q_{r,h}^k(x_h^k,a_h^k)}
\\[0.2cm]
&\!\!=\!\!&
\rbr{\mathbb{P}_h V_{r,h+1}^k-\mathbb{P}_h V_{r,h+1}^{\pi^k}}(x_h^k,a_h^k)
\,+\,
\rbr{Q_{r,h}^{\pi^k}(x_h^k,a_h^k)-Q_{r,h}^k(x_h^k,a_h^k)},
\end{array}
\]
where we use the Bellman equation $Q_{r,h}^{\pi^k}(x_h^k,a_h^k)=r_h(x_h^k,a_h^k)+(\mathbb{P}_h V_{r,h+1}^{\pi^k})(x_h^k,a_h^k)$ in the last equality. With the above formula, we expand the difference $V_{r,1}^{k}(x_1) -V_{r,1}^{\pi^k}(x_1)$ into
\[
\begin{array}{rcl}
V_{r,h}^{k}(x_h^k) -V_{r,h}^{\pi^k}(x_h^k)  
&\!\!=\!\!& 
\rbr{\calI_h^k(Q_{r,h}^k-Q_{r,h}^{\pi^k}) }(x_h^k) 
\,-\,
\iota_{r,h}^k(x_h^k,a_h^k) 
\\[0.2cm]
&& \,+\,
\rbr{\mathbb{P}_h V_{r,h+1}^k-\mathbb{P}_h V_{r,h+1}^{\pi^k}}(x_h^k,a_h^k)
\,+\,
\rbr{Q_{r,h}^{\pi^k}-Q_{r,h}^k}(x_h^k,a_h^k).
\end{array}
\]

Let 
\[
\begin{array}{rcl}
D_{r,h,1}^k 
&\!\!\DefinedAs\!\!& 
\rbr{\calI_h^k(Q_{r,h}^k-Q_{r,h}^{\pi^k}) }(x_h^k) 
\,-\,
\rbr{Q_{r,h}^k-Q_{r,h}^{\pi^k}}(x_h^k,a_h^k),
\\[0.2cm]
D_{r,h,2}^k 
&\!\!\DefinedAs\!\!& 
\rbr{\mathbb{P}_h V_{r,h+1}^k-\mathbb{P}_h V_{r,h+1}^{\pi^k}}(x_h^k,a_h^k)
\,-\,
\rbr{V_{r,h+1}^k- V_{r,h+1}^{\pi^k}}(x_{h+1}^k).
\end{array}
\]
Therefore, we have the following recursive formula over $h$,
\[
V_{r,h}^{k}(x_h^k) -V_{r,h}^{\pi^k}(x_h^k)  
\;=\; 
D_{r,h,1}^k
\,+\,
D_{r,h,2}^k
\,+\,
\rbr{V_{r,h+1}^k-V_{r,h+1}^{\pi^k}}(x_{h+1}^k)
\,-\,
\iota_{r,h}^k(x_h^k,a_h^k).
\]
Notice that $V_{r,H+1}^{\pi^k} =V_{r,H+1}^k=0$. Summing the above equality over $h\in[H]$ yields
\begin{equation}\label{eq.V3}
V_{r,1}^{k}(x_1) \,-\, V_{r,1}^{\pi^k}(x_1)  
\;=\;
\sum_{h\,=\,1}^H\rbr{D_{r,h,1}^k+D_{r,h,2}^k} \,-\,\sum_{h\,=\,1}^H\iota_{r,h}^k(x_h^k,a_h^k).
\end{equation}
Following the definitions of $\calF_{h,1}^k$ and $\calF_{h,2}^k$, we know $D_{r,h,1}^k\in\calF_{h,1}^k$ and $D_{r,h,2}^k\in\calF_{h,2}^k$. Thus, for any $(k,h)\in[K]\times[H]$,
\[
\mathbb{E}\sbr{D_{r,h,1}^k\,\vert\, \calF_{h-1,2}^k}\;=\;0 
\;\text{ and }\;
\mathbb{E}\sbr{D_{r,h,2}^k\,\vert\, \calF_{h,1}^k} \;=\; 0.
\]
Notice that $t(k,0,2) = t(k-1,H,2) = 2H(k-1)$. Clearly, $\calF_{0,2}^k = \calF_{H,2}^{k-1}$ for any $k\geq 2$. Let $\calF_{0,2}^1$ be empty. We define a martingale sequence,
\[
\begin{array}{rcl}
M_{r,h,m}^k &\!\!= \!\!& \displaystyle\sum_{\tau\,=\,1}^{k-1}\sum_{i\,=\,1}^H \rbr{D_{r,i,1}^\tau+D_{r,i,2}^\tau} 
\,+\,
\sum_{i\,=\,1}^{h-1} \rbr{D_{r,i,1}^k+D_{r,i,2}^k} 
\,+\,
\sum_{\ell\,=\,1}^m D_{r,h,\ell}^k
\\[0.2cm]
&\!\!= \!\!& \displaystyle\sum_{\rbr{\tau,i,\ell}\,\in\,[K]\times[H]\times[2],\, t(\tau,i,\ell)\,\leq\, t(k,h,m) } D_{r,i,\ell}^\tau,
\end{array}
\]
where $t(k,h,m)\DefinedAs 2(k-1)H+2(h-1)+m$ is the time index. Clearly, this martingale is adapted to the filtration $\{\calF_{h,m}^k\}_{(k,h,m)\in[K]\times[H]\times[2]}$, and particularly, 
\[
\sum_{k\,=\,1}^K\sum_{h\,=\,1}^H(D_{r,h,1}^k+D_{r,h,2}^k)  \;=\; 
M_{r,H,2}^K.
\]

Finally, we combine the above martingale with~\eqref{eq.V3} to obtain~\eqref{eq.RII}. Similarly, we can show~\eqref{eq.CII}.

\subsection{Proof of Formula~\eqref{eq.ucb-bandit}}\label{subsec.ucb-bandit}

We recall the definition of the feature map $\phi_{r,h}^k$,
\[
\phi_{r,h}^k(x,a) \;=\;\int_{\calS} \psi(x,a,x') V_{r,h+1}^k(x') dx'
\]
for any $(k,h)\in[K]\times[H]$ and $(x,a)\in\calS\times\calA$. By Assumption~\ref{as.linearMDP}, we have 
\[
\begin{array}{rcl}
(\mathbb{P}_h V_{r,h+1}^k) \rbr{x,a} 
&\!\!=\!\!& \displaystyle
\int_{\calS}\psi\rbr{x,a,x'}^\top \theta_h \cdot V_{r,h+1}^k (x') dx'
\\[0.5cm]
&\!\!=\!\!&\displaystyle
\phi_{r,h}^k(x,a)^\top \theta_h
\\[0.2cm]
&\!\!=\!\!&\displaystyle
\phi_{r,h}^k(x,a)^\top (\Lambda_{r,h}^k)^{-1}\Lambda_{r,h}^k\theta_h
\\[0.2cm]
&\!\!=\!\!&\displaystyle
\phi_{r,h}^k(x,a)^\top (\Lambda_{r,h}^k)^{-1}\rbr{\sum_{\tau\,=\,1}^{k-1} \phi_{r,h}^\tau(x_h^\tau,a_h^\tau)\phi_{r,h}^\tau(x_h^\tau,a_h^\tau)^\top\theta_h + \lambda \theta_h}
\\[0.5cm]
&\!\!=\!\!&\displaystyle
\phi_{r,h}^k(x,a)^\top (\Lambda_{r,h}^k)^{-1} \rbr{\sum_{\tau\,=\,1}^{k-1} \phi_{r,h}^\tau(x_h^\tau,a_h^\tau) \cdot(\mathbb{P}_h V_{r,h+1}^\tau) \rbr{x_h^\tau,a_h^\tau} + \lambda \theta_h}
\end{array}
\]
where the second equality is due to the definition of $\phi_{r,h}^k$, we exploit $\Lambda_{r,h}^k = \sum_{\tau\,=\,1}^{k-1} \phi_{r,h}^\tau(x_h^\tau,a_h^\tau)\phi_{r,h}^\tau(x_h^\tau,a_h^\tau)^\top + \lambda I$ from line~4 of Algorithm~\ref{alg:LSVI} in the fourth equality, and we recursively replace $\phi_{r,h}^\tau(x_h^\tau,a_h^\tau)^\top\theta_h $ by $(\mathbb{P}_h V_{r,h+1}^\tau) \rbr{x_h^\tau,a_h^\tau}$ for all $\tau\in[k-1]$ in the last equality.

We recall the update $w_{r,h}^k = (\Lambda_{r,h}^k)^{-1} \sum_{\tau\,=\,1}^{k-1}\phi_{r,h}^\tau(x_h^\tau,a_h^\tau) V_{r,h+1}^\tau(x_{h+1}^\tau)$ from line~5 of Algorithm~\ref{alg:LSVI}. 
Therefore, 
\[
\begin{array}{rcl}
&& \!\!\!\! \!\!\!\! \abr{\phi_{r,h}^k(x,a)^\top w_{r,h}^k- (\mathbb{P}_h V_{r,h+1}^k) \rbr{x,a}}
\\[0.2cm]
&\!\!=\!\!&\displaystyle\abr{
	\phi_{r,h}^k(x,a)^\top (\Lambda_{r,h}^k)^{-1} \sum_{\tau\,=\,1}^{k-1} \phi_{r,h}^\tau(x_h^\tau,a_h^\tau) \cdot\rbr{ V_{r,h+1}^\tau(x_{h+1}^\tau)-(\mathbb{P}_h V_{r,h+1}^\tau) \rbr{x_h^\tau,a_h^\tau}} }
\\[0.5cm]
&&\,+\, \abr{\lambda \cdot\phi_{r,h}^k(x,a)^\top (\Lambda_{r,h}^k)^{-1}  \theta_h}
\\[0.2cm]
&\!\!\leq\!\!&\displaystyle
\rbr{\phi_{r,h}^k(x,a)^\top (\Lambda_{r,h}^k)^{-1} \phi_{r,h}^k(x,a)}^{1/2} \norm{\sum_{\tau\,=\,1}^{k-1} \phi_{r,h}^\tau(x_h^\tau,a_h^\tau) \cdot\rbr{ V_{r,h+1}^\tau(x_{h+1}^\tau)-(\mathbb{P}_h V_{r,h+1}^\tau) \rbr{x_h^\tau,a_h^\tau}} }_{ (\Lambda_{r,h}^k)^{-1} }
\\[0.5cm]
&&\,+\, \lambda \rbr{\phi_{r,h}^k(x,a)^\top (\Lambda_{r,h}^k)^{-1}\phi_{r,h}^k(x,a)}^{1/2} \norm{\theta_h}_{ (\Lambda_{r,h}^k)^{-1} }
\end{array}
\]
for any $(k,h)\in[K]\times[H]$ and $(x,a)\in\calS\times\calA$, where we apply the Cauchy-Schwarz inequality twice in the inequality. By Lemma~\ref{lem.SNP}, set $\lambda=1$, with probability $1-p/2$ it holds that 
\[
\norm{\sum_{\tau\,=\,1}^{k-1} \phi_{r,h}^\tau(x_h^\tau,a_h^\tau) \cdot\rbr{ V_{r,h+1}^\tau(x_{h+1}^\tau)-(\mathbb{P}_h V_{r,h+1}^\tau) \rbr{x_h^\tau,a_h^\tau}} }_{ (\Lambda_{r,h}^k)^{-1} }
\;\leq\;
C\sqrt{dH^2 \log\rbr{\frac{dT}{p}}}.
\]
Also notice that $\Lambda_{r,h}^k \succeq\lambda I$ and $\norm{\theta_h}\leq \sqrt{d}$, thus $\norm{\theta_h}_{ (\Lambda_{r,h}^k)^{-1} }\leq\sqrt{\lambda d}$. Thus, by taking an appropriate absolute constant $C$, we obtain that  
\[
\abr{\phi_{r,h}^k(x,a)^\top w_{r,h}^k- (\mathbb{P}_h V_{r,h+1}^k) \rbr{x,a}}
\;\leq\;
C \rbr{\phi_{r,h}^k(x,a)^\top (\Lambda_{r,h}^k)^{-1}\phi_{r,h}^k(x,a)}^{1/2}\sqrt{dH^2 \log\rbr{\frac{dT}{p}}}
\]
for any $(k,h)\in[K]\times[H]$ and $(x,a)\in\calS\times\calA$ under the event of Lemma~\ref{lem.SNP}.

We now set $C>1$ and $\beta = C\sqrt{dH^2 \log\rbr{{dT}/{p}}}$. By the exploration bonus $\Gamma_{r,h}^k$ in line~7 of Algorithm~\ref{alg:LSVI}, with probability $1-p/2$ it holds that 
\begin{equation}\label{eq.phiP}
\abr{\phi_{r,h}^k(x,a)^\top w_{r,h}^k- (\mathbb{P}_h V_{r,h+1}^k) \rbr{x,a}} 
\;\leq\;
\Gamma_{r,h}^k(x,a)
\end{equation}
for any $(k,h)\in[K]\times[H]$ and $(x,a)\in\calS\times\calA$. 


We note that reward/utility functions are fixed over episodes, $r_h(x_h^\tau,a_h^\tau) \DefinedAs \varphi(x_h^\tau,a_h^\tau)^\top\theta_{r,h}$
For the difference $\varphi(x,a)^\top u_{r,h}^k-r_h(x,a)$, we have
\[
\begin{array}{rcl}
&& \!\!\!\! \!\!\!\! \!\!\!\! \abr{\varphi(x,a)^\top u_{r,h}^k- r_h(x,a)} 
\\[0.2cm]
&=& \displaystyle\abr{ \varphi(x,a)^\top u_{r,h}^k- \varphi(x,a)^\top\theta_{r,h}}
\\[0.2cm]
&=& \displaystyle\abr{ \varphi(x,a)^\top (\Lambda_h^k)^{-1}\rbr{\sum_{\tau\,=\,1}^{k-1}\varphi(x_h^\tau,a_h^\tau) r_h(x_h^\tau,a_h^\tau)- \Lambda_h^k\,\theta_{r,h}}}
\\[0.5cm]
&=&\displaystyle\abr{ \varphi(x,a)^\top (\Lambda_h^k)^{-1}\rbr{\sum_{\tau\,=\,1}^{k-1}\varphi(x_h^\tau,a_h^\tau) \rbr{r_h(x_h^\tau,a_h^\tau)- \varphi(x_h^\tau,a_h^\tau)^\top\theta_{r,h}} + \lambda \theta_{r,h}}}
\\[0.5cm]
&=& \lambda\displaystyle\abr{ \varphi(x,a)^\top (\Lambda_h^k)^{-1} \theta_{r,h}}
\\[0.2cm]
&\leq& \lambda\displaystyle\rbr{ \varphi(x,a)^\top (\Lambda_h^k)^{-1} \varphi(x,a)}^{1/2} \norm{\theta_{r,h}}_{ (\Lambda_h^k)^{-1}}
\end{array}
\]
where we apply the Cauchy-Schwartz inequality in the inequality. Notice that $\Lambda_{h}^k \succeq\lambda I$ and $\Vert{\theta_{r,h}}\Vert\leq \sqrt{d}$, thus $\Vert{\theta_{r,h}}\Vert_{ (\Lambda_{h}^k)^{-1} }\leq\sqrt{\lambda d}$. Hence, if we set $\lambda=1$ and $\beta = C\sqrt{dH^2 \log\rbr{{dT}/{p}}}$, then any $(k,h)\in[K]\times[H]$ and $(x,a)\in\calS\times\calA$, 
\begin{equation}\label{eq.ucb2}
\abr{\varphi(x,a)^\top u_{r,h}^k \,-\, r_h(x,a)} \;\leq\; \Gamma_{h}^k(x,a).
\end{equation}

We recall the model prediction error $\iota_{r,h}^k \DefinedAs r_h + \mathbb{P}_h V_{r,h+1}^k - Q_{r,h}^k$ and the estimated state-action value function $Q_{r,h}^k$ in line~11 of Algorithm~\ref{alg:LSVI},
\[
Q_{r,h}^k(x,a) \;=\; \min\big(\,\varphi(x,a)^\top u_{r,h}^k+\phi_{r,h}^k(x,a)^\top w_{r,h}^k+ (\Gamma_{h}^k+\Gamma_{r,h}^k)(x,a),\, H-h+1\,\big)^+
\]
for any $(k,h)\in[K]\times[H]$ and $(x,a)\in\calS\times\calA$. By~\eqref{eq.phiP} and~\eqref{eq.ucb2}, we first have
\[
\phi_{r,h}^k(x,a)^\top w_{r,h}^k \,+\, \Gamma_{r,h}^k(x,a) \;\geq\; 0\; \text{ and } \;\varphi(x,a)^\top u_{r,h}^k \,+\, \Gamma_{h}^k(x,a) \;\geq\; 0.
\]
Then, we can show that
\begin{equation}\label{eq.iota}
\begin{array}{rcl}
&& \!\!\!\! \!\!\!\! \!\!-\, \iota_{r,h}^k(x,a) 
\\[0.2cm]
&=&  Q_{r,h}^k(x,a) \,-\, (r_h + \mathbb{P}_h V_{r,h+1}^k)(x,a)
\\[0.2cm]
&\leq & \varphi(x,a)^\top u_{r,h}^k \,+\, \phi_{r,h}^k(x,a)^\top w_{r,h}^k \,+\, (\Gamma_{h}^k+\Gamma_{r,h}^k)(x,a) \,-\, (r_h^k + \mathbb{P}_h V_{r,h+1}^k)(x,a)
\\[0.2cm]
&\leq & (\varphi(x,a)^\top u_{r,h}^k-r_h(x,a))+\Gamma_{h}^k(x,a)+2\Gamma_{r,h}^k(x,a)
\end{array}
\end{equation}
for any $(k,h)\in[K]\times[H]$ and $(x,a)\in\calS\times\calA$. 

Therefore,~\eqref{eq.iota} reduces to
\[
-\, \iota_{r,h}^k(x,a) 
\;\leq\; 
2\Gamma_{h}^k(x,a)+2\Gamma_{r,h}^k(x,a) \;=\; 2(\Gamma_{h}^k+\Gamma_{r,h}^k)(x,a).
\]

On the other hand, notice that $(r_h^k + \mathbb{P}_h V_{r,h+1}^k)(x,a) \leq H-h+1$, thus
\[
\begin{array}{rcl}
&& \!\!\!\! \!\!\!\! \!\!\!\! \iota_{r,h}^k(x,a) 
\\[0.2cm]
&=&  (r_h + \mathbb{P}_h V_{r,h+1}^k)(x,a) \,-\, Q_{r,h}^k(x,a) 
\\[0.2cm]
&\leq & (r_h + \mathbb{P}_h V_{r,h+1}^k)(x,a) \,-\, \min\big(\,\varphi(x,a)^\top u_{r,h}^k +\phi_{r,h}^k(x,a)^\top w_{r,h}^k+ (\Gamma_{h}^k+\Gamma_{r,h}^k)(x,a),\, H-h+1\,\big)^+
\\[0.2cm]
&\leq & \max\big(\, r_h(x,a) - \varphi(x,a)^\top u_{r,h}^k - \Gamma_{h}^k(x,a)+(\mathbb{P}_h V_{r,h+1}^k)(x,a) -\phi_{r,h}^k(x,a)^\top w_{r,h}^k- \Gamma_{r,h}^k(x,a),\,0\,\big)^+
\\[0.2cm]
&\leq & 0
\end{array}
\]
for any $(k,h)\in[K]\times[H]$ and $(x,a)\in\calS\times\calA$.

Therefore, we have proved that with probability $1-p/2$ it holds that 
\[
-2 (\Gamma_{h}^k+\Gamma_{r,h}^k)(x,a) \;\leq\;\iota_{r,h}^k(x,a) \;\leq\;0
\]
for any $(k,h)\in[K]\times[H]$ and $(x,a)\in\calS\times\calA$. 

Similarly, we can show another inequality $-2 (\Gamma_{h}^k+\Gamma_{g,h}^k)(x,a)\leq\iota_{g,h}^k(x,a)\leq 0$. 

\subsection{Proof of Formula~\eqref{eq.ucb-full-t}}\label{subsec.ucb-full-t}

Let $\calV = \{V: \calS\to[0,H]\}$ be a set of bounded function on $\calS$. Fo any $V\in\calV$, we consider the difference between $\sum_{x'\,\in\,\calS}\hat{\mathbb{P}}_h^k (x' \,|\,\cdot,\cdot) V(x')$ and $\sum_{x'\,\in\,\calS}{\mathbb{P}}_h (x' \,|\,\cdot,\cdot) V(x')$ as follows,
\begin{equation}\label{eq.PP}
\begin{array}{rcl}
&& \!\!\!\! \!\!\!\!\displaystyle\rbr{n_h^k(x,a)+\lambda}^{1/2} \abr{ \sum_{x'\,\in\,\calS}\rbr{\hat{\mathbb{P}}_h^k (x' \,|\,x,a) V(x')-{\mathbb{P}}_h (x' \,|\,x,a) V(x') }}
\\[0.5cm]
& \!\!=\!\! & \displaystyle\rbr{n_h^k(x,a)+\lambda}^{-1/2} \abr{ {\sum_{x'\,\in\,\calS} n_h^k(x,a,x') V(x')-(n_h^k(x,a)+\lambda) (\mathbb{P}_hV)(x,a) }}
\\[0.5cm]
& \!\!\leq\!\! & \displaystyle\rbr{n_h^k(x,a)+\lambda}^{-1/2} \abr{ {\sum_{x'\,\in\,\calS} n_h^k(x,a,x') V(x')-n_h^k(x,a) (\mathbb{P}_hV)(x,a) }}
\\[0.5cm]
&&\,+\,\displaystyle\rbr{n_h^k(x,a)+\lambda}^{-1/2} \abr{ \lambda(\mathbb{P}_hV)(x,a) }
\\[0.5cm]
& \!\!=\!\! & \displaystyle\rbr{n_h^k(x,a)+\lambda}^{-1/2} \abr{ {\sum_{\tau\,=\,1}^{k-1} \one\{(x,a) = (x_h^\tau,a_h^\tau)\}  \rbr{V(x_{h+1}^\tau)- (\mathbb{P}_hV)(x,a)} }}
\\[0.5cm]
&&\,+\,\displaystyle\rbr{n_h^k(x,a)+\lambda}^{-1/2} \abr{ \lambda(\mathbb{P}_hV)(x,a) }
\end{array}
\end{equation}
for any $(k,h)\in[K]\times[H]$ and $(x,a)\in\calS\times\calA$, where we apply the triangle inequality for the inequality.

Let $\eta_h^\tau \DefinedAs V(x_{h+1}^\tau)- (\mathbb{P}_hV)(x_h^\tau,a_h^\tau)$. Conditioning on the filtration $\calF_{h,1}^k$, $\eta_h^\tau $ is a zero-mean and $H/2$-subGaussian random variable. By Lemma~\ref{lem.CSNP}, we use $Y =\lambda I$ and $X_\tau = \one\{(x,a) = (x_h^\tau,a_h^\tau)\} $ and thus with probability at least $1-\delta$ it holds that
\[
\begin{array}{rcl}
&& \!\!\!\! \!\!\!\! \!\! \displaystyle\rbr{n_h^k(x,a)+\lambda}^{-1/2} \abr{ {\sum_{\tau\,=\,1}^{k-1} \one\{(x,a) = (x_h^\tau,a_h^\tau)\}  \rbr{V(x_{h+1}^\tau)- (\mathbb{P}_hV)(x,a)} }}
\\[0.5cm]
&\!\!\leq\!\!& \displaystyle \sqrt{ \frac{H^2}{2}  \log \rbr{ \frac{ \rbr{n_h^k(x,a)+\lambda}^{1/2} \lambda^{-1/2}}{\delta/H}}}
\\[0.5cm]
&\!\!\leq\!\!& \displaystyle \sqrt{ \frac{H^2}{2} \log\rbr{\frac{T}{\delta}}}
\end{array}
\]
for any $(k,h)\in[K]\times[H]$. Also, since $0\leq V\leq H$, we have 
\[
\rbr{n_h^k(x,a)+\lambda}^{-1/2} \abr{ \lambda(\mathbb{P}_hV)(x,a) }
\;\leq\;
\sqrt{\lambda} H.
\]
By returning to~\eqref{eq.PP} and setting $\lambda=1$, with probability at least $1-\delta$ it holds that
\begin{equation}\label{eq.PPV}
\displaystyle\rbr{n_h^k(x,a)+\lambda} \abr{ \sum_{x'\,\in\,\calS}\rbr{\hat{\mathbb{P}}_h^k (x' \,|\,x,a) V(x')-{\mathbb{P}}_h (x' \,|\,x,a) V(x') }}^2
\;
\leq
\;
H^2 \rbr{\log\rbr{\frac{T}{\delta}}+2}
\end{equation}
for any $k\geq 1$.

Let $d(V,V') = \max_{x\,\in\,\calS} \abr{V(x)-V'(x)}$ be a distance on $\calV$. For any $\epsilon$, an $\epsilon$-covering $\calV_\epsilon$ of $\calV$ with respect to distance $d(\cdot,\cdot)$ satisfies 
\[
|\calV_\epsilon| \;\leq\; \rbr{1+\frac{2\sqrt{|\calS|}H}{\epsilon} }^{|\calS|}.
\]
Thus, for any $V\in\calV$, there exists $V'\in\calV_\epsilon$ such that $\max_{x\,\in\,\calS}|V(x)-V'(x)|\leq \epsilon$. By the triangle inequality, we have
\[
\begin{array}{rcl}
&& \!\!\!\! \!\!\!\!\displaystyle\rbr{n_h^k(x,a)+\lambda}^{1/2} \abr{ \sum_{x'\,\in\,\calS}\rbr{\hat{\mathbb{P}}_h^k (x' \,|\,x,a) V(x')-{\mathbb{P}}_h (x' \,|\,x,a) V(x') }}
\\[0.5cm]
& \!\!=\!\! & \displaystyle\rbr{n_h^k(x,a)+\lambda}^{1/2} \abr{ \sum_{x'\,\in\,\calS}\rbr{\hat{\mathbb{P}}_h^k (x' \,|\,x,a) V'(x')-{\mathbb{P}}_h (x' \,|\,x,a) V'(x') }}
\\[0.5cm]
&&\,+\,\displaystyle\rbr{n_h^k(x,a)+\lambda}^{1/2} \abr{ \sum_{x'\,\in\,\calS}\rbr{\hat{\mathbb{P}}_h^k (x' \,|\,x,a) (V(x') - V'(x'))-{\mathbb{P}}_h (x' \,|\,x,a) (V(x')-V'(x')) }}
\\[0.5cm]
& \!\!\leq\!\! & \displaystyle\rbr{n_h^k(x,a)+\lambda}^{1/2} \abr{ \sum_{x'\,\in\,\calS}\rbr{\hat{\mathbb{P}}_h^k (x' \,|\,x,a) V'(x')-{\mathbb{P}}_h (x' \,|\,x,a) V'(x') }}
\\[0.5cm]
&&\,+\,2\displaystyle\rbr{n_h^k(x,a)+\lambda}^{-1/2} \epsilon.
\end{array}
\]
Furthermore, we choose $\delta = \rbr{{p}/{3}}/\rbr{|\calV_\epsilon||\calS||\calA|}$ and take an union bound over $V\in\calV_\epsilon$ and $(x,a)\in\calS\times\calA$. By~\eqref{eq.PPV}, with probability at least $1-p/2$ it holds that
\[
\begin{array}{rcl}
&& \!\!\!\! \!\!\!\!\displaystyle \sup_{V\,\in\,\calV}\cbr{\rbr{n_h^k(x,a)+\lambda}^{1/2} \abr{ \sum_{x'\,\in\,\calS}\rbr{\hat{\mathbb{P}}_h^k (x' \,|\,x,a) V(x')-{\mathbb{P}}_h (x' \,|\,x,a) V(x') }}}
\\[0.5cm]
& \!\!\leq\!\! & \displaystyle\sqrt{H^2\rbr{\log\rbr{\frac{T}{\delta}}+2}}
\,+\,2\displaystyle\rbr{n_h^k(x,a)+\lambda}^{-1/2} \frac{H}{K}
\\[0.5cm]
& \!\!\leq\!\! & \displaystyle\sqrt{2H^2 \rbr{ \log|\calV_\epsilon|+\log\rbr{\frac{2|\calS||\calA|T}{p}}+2}}
\,+\,2\displaystyle\rbr{n_h^k(x,a)+\lambda}^{-1/2} \frac{H}{K}
\\[0.5cm]
& \!\!\leq\!\! & \displaystyle C_1 H\sqrt{|\calS| \log \rbr{\frac{|\calS||\calA|T}{p}}} \;\DefinedAs\;\beta
\end{array}
\]
for all $(k,h)$ and $(x,a)$, where $C_1$ is an absolute constant.
We recall our choice of $\Gamma_h^k$ and $\beta$. Hence, with probability at least $1-p/2$ it holds that 
\[
\abr{ \sum_{x'\,\in\,\calS}\rbr{\hat{\mathbb{P}}_h^k (x' \,|\,x,a) V(x')-{\mathbb{P}}_h (x' \,|\,x,a) V(x') }}
\;\leq\;
\beta
\rbr{n_h^k(x,a)+\lambda}^{-1/2}
\;\DefinedAs\;
\Gamma_h^k(x,a)
\]
for any $(k,h)\in[K]\times[H]$ and $(x,a)\in |\calS|\times|\calA|$, where $\beta \DefinedAs C_1 H\sqrt{|\calS| \log(|\calS||\calA|T/p)}$.

We recall the definition $r_h(x,a) = \mathbf{e}_{(x,a)}^\top \theta_{r,h}$.
By our estimation $\hat{r}_h^k(x,a)$ in Algorithm~\ref{alg:tbandit}, we have
\[
\hat{r}_h^k(x,a) \;=\; \frac{1}{n_h^k(x,a) +\lambda} \sum_{\tau\,=\,1}^{k-1} \one\{ (x,a) = (x_h^\tau,a_h^\tau) \} [\theta_{r,h}]_{(x_h^\tau,a_h^\tau)}
\]
and thus 
\[
\begin{array}{rcl}
&& \!\!\!\! \!\!\!\! \!\! \abr{\hat{r}_h^k(x,a) - {r}_h(x,a)} 
\\[0.2cm]
& \!\! = \!\! &\abr{\hat{r}_h^k(x,a) -  [\theta_{r,h} ]_{(x,a)}} 
\\[0.2cm]
& \!\! = \!\! &\displaystyle \rbr{n_h^k(x,a) +\lambda}^{-1}\abr{  \sum_{\tau\,=\,1}^{k-1} \one\{ (x,a) = (x_h^\tau,a_h^\tau) \} \rbr{[\theta_{r,h}]_{(x_h^\tau,a_h^\tau)} -  [\theta_{r,h} ]_{(x,a)}} -\lambda  [\theta_{r,h} ]_{(x,a)}}
\\[0.2cm]
& \!\! = \!\! & \displaystyle\rbr{n_h^k(x,a) +\lambda}^{-1}\abr{ \lambda  [\theta_{r,h} ]_{(x,a)}}
\\[0.2cm]
& \!\! \leq \!\! & \displaystyle\rbr{n_h^k(x,a) +\lambda}^{-1} \lambda
\\[0.2cm]
& \!\! \leq \!\! & \displaystyle\rbr{n_h^k(x,a) +\lambda}^{-1/2} \lambda
\\[0.2cm]
& \!\! \leq \!\! & \Gamma_{h}^k(x,a)
\end{array}
\]
where we utilize $\lambda=1$ and $\beta\geq1$ in the inequalities.

We now are ready to check the model prediction error $\iota_{r,h}^k$ defined by~\eqref{eq.mper},
\[
\begin{array}{rcl}
&& \!\!\!\! \!\!\!\! \!\! -\, \iota_{r,h}^k(x,a) 
\\[0.2cm]
&=&  Q_{r,h}^k(x,a) \,-\, (r_h + \mathbb{P}_h V_{r,h+1}^k)(x,a)
\\[0.2cm]
&\leq &  \hat r_h^k(x,a) \,+\, \sum_{x'\,\in\,\calS}\hat{\mathbb{P}}_h^k (x' \,|\,x,a) V_{r,h+1}^k(x') \,+\, 2\Gamma_{h}^k(x,a) \,-\, (r_h + \mathbb{P}_h V_{r,h+1}^k)(x,a)
\\[0.2cm]
&\leq & 4\Gamma_{h}^k(x,a)
\end{array}
\]
for any $(x,a)\in\calS\times\calA$. On the other hand, notice that $(r_h + \mathbb{P}_h V_{r,h+1}^k)(x,a) \leq H-h+1$, thus
\[
\begin{array}{rcl}
&& \!\!\!\! \!\!\!\! \!\!\!\! \iota_{r,h}^k(x,a) 
\\[0.2cm]
&=&  (r_h + \mathbb{P}_h V_{r,h+1}^k)(x,a) \,-\, Q_{r,h}^k(x,a) 
\\[0.2cm]
&\leq & (r_h + \mathbb{P}_h V_{r,h+1}^k)(x,a) \,-\, \min\big(\,\hat r_h^k(x,a) +\sum_{x'\,\in\,\calS}\hat{\mathbb{P}}_h^k (x' \,|\,x,a) V_{r,h+1}^k(x')+2\Gamma_{h}^k(x,a),\, H-h+1\,\big)^+
\\[0.2cm]
&\leq & \max\big(\, (r_h-\hat r_h)(x,a) - \Gamma_h^k(x,a)+ (\mathbb{P}_h V_{r,h+1}^k)(x,a) -\sum_{x'\,\in\,\calS}\hat{\mathbb{P}}_h^k (x' \,|\,x,a) V_{r,h+1}^k(x')- \Gamma_{h}^k(x,a),\, 0\,\big)^+
\\[0.2cm]
&\leq & 0
\end{array}
\]
for any $(k,h)\in[K]\times[H]$ and $(x,a)\in\calS\times\calA$. Hence, we complete the proof of~\eqref{eq.ucb-full-t}.

\section{Supporting Lemmas from Optimization}\label{app.sec.opt}

We collect some standard results from the literature for readers' convenience. We rephrase them for our constrained problem~\eqref{eq.hindsight},
\[
\begin{array}{c}
\!\!\!\!
\maximize\limits_{\pi \, \in \, \Delta(\calA\,\vert\, \calS, H)}
\;
V_{r,1}^{\pi}(x_1)
\;\;
\subject 
\;\;
V_{g,1}^{\pi}(x_1) \;\geq\; b
\end{array}
\]
in which we maximize over all policies and $b\in (0,H]$. Let the optimal solution be $\pi^\star$ such that
\[
V_{r,1}^{\pi^\star}(x_1)
\;=\;
\maximize_{\pi \, \in \, \Delta(\calA\,\vert\, \calS, H)}\;\{ \,V_{r,1}^{\pi}(x_1) \,\vert\,V_{g,1}^{\pi}(x_1) \,\geq\, b\, \}.
\]
Let the Lagrangian be
$\mathcal{L}(\pi,Y) \DefinedAs V_{r,1}^{\pi}(x_1)+ Y(V_{g,1}^{\pi}(x_1)-b)$, where $Y\geq 0$ is the Lagrange multiplier or dual variable. The associated dual function is defined as 
\[
\mathcal{D}(Y) \;\DefinedAs\; \maximize_{\pi \, \in \, \Delta(\calA\,\vert\, \calS, H)} \; \mathcal{L}(\pi,Y) \DefinedAs V_{r,1}^{\pi}(x_1)\,+\, Y(V_{g,1}^{\pi}(x_1)-b)
\]
and the optimal dual is $Y^\star\DefinedAs\argmin_{Y\,\geq\,0}\mathcal{D}(Y)$,
\[
\mathcal{D}(Y^\star) \;\DefinedAs\; \minimize_{\lambda\,\geq\, 0} \; \mathcal{D}(Y)
\]

We recall that the problem~\eqref{eq.hindsight} enjoys strong duality under the standard Slater condition. The proof is a special case of~\cite[Proposition~1]{paternain2019safe} in finite-horizon.

\begin{assumption}[Slater Condition]
	There exists $\gamma>0$ and $\bar{\pi}$ such that $V_{g,1}^{\bar{\pi}}(x_1) -b \geq \gamma$.
\end{assumption}


\begin{lemma}[Strong Duality]\cite[Proposition~1]{paternain2019safe}
	If the Slater condition holds, then the strong duality holds, 
	\[
	V_{r,1}^{\pi^\star}(x_1) \;= \; \mathcal{D}(Y^\star) .
	\]
\end{lemma}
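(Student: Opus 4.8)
The plan is to circumvent the non-concavity of $\pi \mapsto V_{r,1}^\pi(x_1)$ by lifting problem~\eqref{eq.hindsight} to the space of occupancy measures, where both the objective and the constraint become linear, and then to invoke classical Lagrangian duality for the resulting convex program. First I would associate to each policy $\pi \in \Delta(\calA\,\vert\,\calS,H)$ its state–action occupancy measure $q^\pi = \{q_h^\pi\}_{h=1}^H$, where $q_h^\pi(x,a)$ is the probability (density) that the trajectory generated by $\pi$ from $x_1$ occupies $(x,a)$ at step $h$. By linearity of expectation, $V_{\diamond,1}^\pi(x_1) = \sum_{h=1}^H \inner{q_h^\pi}{\diamond_h}$ is a \emph{linear} functional of $q^\pi$ for $\diamond \in \{r,g\}$. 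The crucial structural fact, which I would establish next, is that the set $\calM$ of all realizable occupancy measures is convex: it is exactly the set of nonnegative measures satisfying the Bellman flow (consistency) constraints determined by $x_1$ and $\{\mathbb{P}_h\}$, and conversely every $q \in \calM$ is induced by the Markov policy recovered via $\pi_h(a\vert x)\propto q_h(x,a)$. Hence any convex combination $\lambda q^{\pi_1} + (1-\lambda)q^{\pi_2}$ again lies in $\calM$ and is itself realizable.

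With this reformulation, problem~\eqref{eq.hindsight} is equivalent to $\maximize_{q\in\calM} \inner{q}{r}$ subject to $\inner{q}{g} \geq b$, a linear objective over the convex set $\calM$ under a single linear inequality. I would then define the perturbation function $P(\tau) \DefinedAs \sup\{\inner{q}{r} : q\in\calM,\ \inner{q}{g} - b \geq \tau\}$, so that $P(0) = V_{r,1}^{\pi^\star}(x_1)$. Convexity of $\calM$ together with linearity of the two functionals makes $P$ concave and non-increasing. The Slater condition supplies $\bar\pi$ with $\inner{q^{\bar\pi}}{g} - b \geq \gamma > 0$, so $0$ lies in the interior of the effective domain of $P$; hence $P$ has a finite superdifferential at $\tau=0$, i.e.\ there is $Y^\star \geq 0$ (the negative of a supergradient, nonnegative since $P$ is non-increasing) with $P(\tau) \leq P(0) - Y^\star\tau$ for all $\tau$. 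Using the identity $\calD(Y) = \sup_\tau\,[P(\tau) + Y\tau]$ (valid for $Y\geq 0$, obtained by grouping policies according to their constraint slack), the supporting inequality gives $\calD(Y^\star) = P(0) = V_{r,1}^{\pi^\star}(x_1)$. Since weak duality $\calD(Y) \geq V_{r,1}^{\pi^\star}(x_1)$ for every $Y\geq 0$ is immediate from the feasibility $V_{g,1}^{\pi^\star}(x_1)\geq b$, this also identifies $Y^\star$ as $\argmin_{Y\geq 0}\calD(Y)$, completing the claim.

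The main obstacle is the rigorous justification of the convexity of $\calM$ and of the equivalence between the policy-based dual $\calD$ and the occupancy-measure dual, which is delicate because $\calS$ may be infinite: one must verify that the flow constraints genuinely characterize $\calM$ in the measurable-space setting, that the supremum defining $\calD(Y)$ is either attained or handled through an approximating sequence, and that the superdifferential argument applies to a concave function on $\mathbb{R}$ whose domain has nonempty interior precisely because of Slater. All of these are standard in constrained-MDP theory and, for the infinite-horizon analogue, are carried out in \cite[Proposition~1]{paternain2019safe}; I would adapt that argument to the finite-horizon episodic setting here, the only substantive change being the finite sum over $h\in[H]$ in place of a discounted series.
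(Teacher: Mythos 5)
Your proposal is correct and is essentially the argument the paper relies on: the paper gives no self-contained proof of this lemma, deferring entirely to \cite[Proposition~1]{paternain2019safe}, whose proof proceeds exactly as you describe --- lifting to occupancy measures where the problem becomes a linear program over a convex set, establishing concavity of the perturbation function, and extracting the multiplier $Y^\star$ from a supergradient at $\tau=0$ whose existence is guaranteed by Slater's condition. Indeed, the paper's own appendix reuses the same perturbation function $v(\tau)$ and its concavity (again by citation to that proposition) in the proof of its constraint-violation lemma, so your finite-horizon adaptation follows the intended route.
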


It is implied by the strong duality that the optimal solution to the dual problem: $\minimize_{Y\,\geq\, 0} \; \mathcal{D}(Y)$ is obtained at $Y^\star$. Denote the set of all optimal dual variables as $\Lambda^\star$.

Under the Slater condition, an useful property of the dual variable is that the sublevel sets are bounded~\cite[Section~8.5]{beck2017first}. 

\begin{lemma}[Boundedness of Sublevel Sets of the Dual Function]
	Let the Slater condition hold.
	Fix $C\in\mathbb{R}$. For any $Y \in\{ Y\geq 0\,\vert\, \mathcal{D}(Y) \leq C \}$, it holds that 
	\[
	Y \;\leq\; \frac{1 }{\gamma} \rbr{C -V_{r,1}^{  \bar{\pi}} (x_1)}.
	\]
\end{lemma}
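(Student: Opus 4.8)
The plan is to exploit the fact that the dual function $\mathcal{D}(Y)$ is defined as a maximization over all policies, so it is lower-bounded by the value of the Lagrangian at any particular policy. The natural choice is the Slater point $\bar\pi$, since the Slater condition furnishes a strictly positive feasibility gap that will control the dependence on $Y$.

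Concretely, I would first observe that for every $Y \geq 0$,
\[
\mathcal{D}(Y) \;=\; \maximize_{\pi \, \in \, \Delta(\calA\,\vert\, \calS, H)} \mathcal{L}(\pi,Y)
\;\geq\;
\mathcal{L}(\bar\pi, Y)
\;=\;
V_{r,1}^{\bar\pi}(x_1) \,+\, Y\big(V_{g,1}^{\bar\pi}(x_1) - b\big),
\]
simply because the maximum over policies dominates the value at the single policy $\bar\pi$. Next I would invoke the Slater condition, which gives $V_{g,1}^{\bar\pi}(x_1) - b \geq \gamma > 0$; combined with $Y \geq 0$ this yields the clean lower bound
\[
\mathcal{D}(Y) \;\geq\; V_{r,1}^{\bar\pi}(x_1) \,+\, Y\gamma.
\]

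Finally, for any $Y$ in the sublevel set $\{Y \geq 0 : \mathcal{D}(Y) \leq C\}$, I would chain the two inequalities to obtain $V_{r,1}^{\bar\pi}(x_1) + Y\gamma \leq \mathcal{D}(Y) \leq C$, and then solve for $Y$, using $\gamma > 0$ to divide without flipping the inequality:
\[
Y \;\leq\; \frac{1}{\gamma}\big(C - V_{r,1}^{\bar\pi}(x_1)\big).
\]

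There is no substantive obstacle here: the entire argument is a three-line calculation, and the only point requiring a moment of care is the direction of the inequalities — specifically, that evaluating a \emph{maximum} at a fixed point produces a \emph{lower} bound on $\mathcal{D}(Y)$, and that the nonnegativity of $Y$ together with the strict Slater gap $\gamma$ is exactly what turns this lower bound into a coefficient on $Y$ that can be divided out. The crucial conceptual ingredient is that the Slater gap $\gamma$ appears as the slope of the affine lower bound in $Y$, which is what makes the sublevel sets bounded; this is the same mechanism that underlies the boundedness of $Y^\star$ in Lemma~\ref{lem.sd-b} (taking $C = V_{r,1}^{\pi^\star}(x_1)$ recovers that bound as a special case).
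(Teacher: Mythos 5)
Your proposal is correct and follows exactly the paper's argument: both lower-bound $\mathcal{D}(Y)$ by the Lagrangian evaluated at the Slater point $\bar\pi$, use $V_{g,1}^{\bar\pi}(x_1)-b\geq\gamma$ together with $Y\geq 0$ to get the affine lower bound $V_{r,1}^{\bar\pi}(x_1)+Y\gamma\leq\mathcal{D}(Y)\leq C$, and divide by $\gamma>0$. No differences worth noting.
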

\begin{proof}
	By $Y \in\{ Y\geq 0\,\vert\, \mathcal{D}(Y)  \leq C \}$,
	\[
	C \;\geq\; \mathcal{D}(Y)  \;\geq\; V_{r,1}^{\bar{\pi}}(x_1) + Y\, (V_{g,1}^{\bar{\pi}}(x_1)-b) \;\geq\; V_{r}^{\bar{\pi}}(\rho) + Y\,\gamma
	\]
	where we utilize the Slater point $\bar{\pi}$ in the last inequality. We complete the proof by noting $\gamma>0$.
\end{proof}
\begin{cor}[Boundedness of $Y^\star$]
	\label{cor.boundeddual}
	If we take $C = V_{r,1}^{\pi^\star}(x_1) = \mathcal{D}(Y^\star)$, then $\Lambda^\star=\{ Y\geq 0\,\vert\, \mathcal{D}(Y)\leq C \}$. Thus, for any $Y\in\Lambda^\star$,
	\[
	Y \;\leq\; \frac{1 }{\gamma} \rbr{V_{r,1}^{\pi^\star}(x_1)-V_{r,1}^{  \bar{\pi}} (x_1)}.
	\]
\end{cor}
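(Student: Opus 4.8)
The plan is to derive this corollary directly from the two results immediately preceding it: the Strong Duality lemma and the Boundedness of Sublevel Sets lemma. Since both are assumed available, the argument is essentially a matter of identifying the relevant sublevel set with $\Lambda^\star$ and then invoking the sublevel-set bound.

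First I would establish the set identity $\Lambda^\star = \{\,Y\geq 0 \mid \mathcal{D}(Y)\leq C\,\}$ for the specified value $C = V_{r,1}^{\pi^\star}(x_1)$. By Strong Duality, $C = \mathcal{D}(Y^\star)$, and by definition $Y^\star = \argmin_{Y\geq 0}\mathcal{D}(Y)$, so $\mathcal{D}(Y^\star) = \min_{Y\geq 0}\mathcal{D}(Y)$. Consequently every $Y\geq 0$ satisfies $\mathcal{D}(Y)\geq \mathcal{D}(Y^\star) = C$. Therefore the condition $\mathcal{D}(Y)\leq C$ forces $\mathcal{D}(Y) = C = \min_{Y\geq 0}\mathcal{D}(Y)$, i.e.\ $Y$ is itself a minimizer of the dual function and hence lies in $\Lambda^\star$; the reverse inclusion is immediate since every element of $\Lambda^\star$ attains the minimal value $C$. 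This yields the claimed equality of the two sets.

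Second, with this identity in hand I would apply the Boundedness of Sublevel Sets lemma verbatim with the chosen constant $C = V_{r,1}^{\pi^\star}(x_1)$. That lemma states that any $Y$ in the sublevel set $\{\,Y\geq 0 \mid \mathcal{D}(Y)\leq C\,\}$ obeys $Y\leq \tfrac{1}{\gamma}\rbr{C - V_{r,1}^{\bar{\pi}}(x_1)}$. Substituting $C = V_{r,1}^{\pi^\star}(x_1)$ and using the set identity from the first step gives, for every $Y\in\Lambda^\star$,
\[
Y \;\leq\; \frac{1}{\gamma}\rbr{V_{r,1}^{\pi^\star}(x_1) - V_{r,1}^{\bar{\pi}}(x_1)},
\]
which is exactly the assertion.

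There is no genuine obstacle here; the only point requiring a moment of care is the first step, where one must observe that the sublevel set \emph{at the minimal value} coincides with the full set of minimizers rather than being a strict subset — this uses the optimality of $Y^\star$ together with Strong Duality to pin down $C$ as the minimum of $\mathcal{D}$. Once that identification is made, the bound is an immediate instantiation of the preceding lemma, and the Slater condition $\gamma>0$ (already assumed) guarantees the right-hand side is finite and well defined.
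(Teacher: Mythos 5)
Your proposal is correct and follows essentially the same route as the paper: the paper presents this corollary as an immediate instantiation of the sublevel-set boundedness lemma with $C = V_{r,1}^{\pi^\star}(x_1) = \mathcal{D}(Y^\star)$ (the identification coming from strong duality), which is precisely your argument. Your first step, verifying that the sublevel set at the minimal dual value coincides with the set of dual minimizers $\Lambda^\star$, just makes explicit what the paper leaves implicit.
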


Another useful theorem from the optimization~\cite[Section~3.5]{beck2017first} is given as follows. It describes that the constraint violation $b-V_{g,1}^\pi(x_1)$ can be bounded similarly even if we have some weak bound. We next state and prove it for our problem, which is used in our constraint violation analysis in Section~\ref{ap.main1}.

\begin{lemma}[Constraint Violation]
	\label{thm.violationgeneral}
	Let the Slater condition hold and $Y^\star \in \Lambda^\star$. Let $C^\star \geq 2 Y^\star$. Assume that ${\tilde\pi\in\Delta(\calA\,\vert\, \calS, H)}$ satisfies 
	\[
	V_{r,1}^{\pi^\star}(x_1) \,-\, V_{r,1}^{\tilde\pi}(x_1) \,+\, C^\star \, \sbr{b - V_{g,1}^{\tilde\pi}(x_1)}_+ \;\leq \; \delta.
	\]
	
	Then, 
	\[
	\sbr{b - V_{g,1}^{\tilde\pi}(x_1)}_+ \;\leq \; \frac{2\delta}{C^\star}
	\]
	where $[x]_+=\max(x,0)$. 
\end{lemma}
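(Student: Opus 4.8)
The plan is to use weak duality evaluated at the optimal dual variable $Y^\star$ to turn the hypothesis into a bound purely on the violation. Write $v \DefinedAs \sbr{b - V_{g,1}^{\tilde\pi}(x_1)}_+$ for brevity. The first step is to obtain a lower bound on the reward gap $V_{r,1}^{\pi^\star}(x_1) - V_{r,1}^{\tilde\pi}(x_1)$ in terms of the violation. Invoking strong duality together with the definition $\mathcal{D}(Y^\star) = \max_{\pi} \mathcal{L}(\pi, Y^\star)$, I would simply plug the candidate policy $\tilde\pi$ into the Lagrangian:
\[
V_{r,1}^{\pi^\star}(x_1) \;=\; \mathcal{D}(Y^\star) \;\geq\; \mathcal{L}(\tilde\pi, Y^\star) \;=\; V_{r,1}^{\tilde\pi}(x_1) + Y^\star\big(V_{g,1}^{\tilde\pi}(x_1) - b\big).
\]
Rearranging yields $V_{r,1}^{\pi^\star}(x_1) - V_{r,1}^{\tilde\pi}(x_1) \geq -Y^\star\big(b - V_{g,1}^{\tilde\pi}(x_1)\big)$.

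The second step is a short sign check, which is the only point that needs care. Since $Y^\star \geq 0$, I claim $-Y^\star\big(b - V_{g,1}^{\tilde\pi}(x_1)\big) \geq -Y^\star v$. When the constraint is met, i.e.\ $b - V_{g,1}^{\tilde\pi}(x_1) \leq 0$, the left side equals $Y^\star\big(V_{g,1}^{\tilde\pi}(x_1) - b\big) \geq 0$ while $v = 0$, so the inequality holds; when it is violated, $v = b - V_{g,1}^{\tilde\pi}(x_1)$ and both sides coincide. Combining with the first step gives
\[
V_{r,1}^{\pi^\star}(x_1) - V_{r,1}^{\tilde\pi}(x_1) \;\geq\; -Y^\star v.
\]

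Finally I would substitute this lower bound into the hypothesis $V_{r,1}^{\pi^\star}(x_1) - V_{r,1}^{\tilde\pi}(x_1) + C^\star v \leq \delta$, obtaining $-Y^\star v + C^\star v \leq \delta$, i.e.\ $(C^\star - Y^\star)v \leq \delta$. Using the assumption $C^\star \geq 2Y^\star$, which forces $C^\star - Y^\star \geq C^\star/2$, this gives $\tfrac{C^\star}{2} v \leq \delta$ and hence $v \leq 2\delta / C^\star$, the desired conclusion. I do not expect a genuine obstacle here: the argument is a standard first-order-method violation bound, and the only subtlety is the sign bookkeeping in the second step, where one must verify that replacing $b - V_{g,1}^{\tilde\pi}(x_1)$ by its positive part only weakens the lower bound because $Y^\star \geq 0$.
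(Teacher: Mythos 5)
Your proof is correct, and it reaches the key inequality by a more direct route than the paper does. Both arguments hinge on the same intermediate bound, $V_{r,1}^{\pi^\star}(x_1) - V_{r,1}^{\tilde\pi}(x_1) \geq -Y^\star\,[b - V_{g,1}^{\tilde\pi}(x_1)]_+$, and both finish with the identical algebra $(C^\star - Y^\star)\,v \leq \delta$ together with $C^\star \geq 2Y^\star$. The difference is how that bound is obtained. You get it in one line from strong duality plus the trivial evaluation $\mathcal{D}(Y^\star) \geq \mathcal{L}(\tilde\pi, Y^\star)$, i.e.\ plugging the candidate policy into the Lagrangian at the optimal multiplier; your sign bookkeeping (replacing $b - V_{g,1}^{\tilde\pi}(x_1)$ by its positive part only weakens the lower bound since $Y^\star \geq 0$) is the only delicate point and you handle it correctly. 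The paper instead introduces the perturbation function $v(\tau) = \max_{\pi}\{V_{r,1}^{\pi}(x_1) \,\vert\, V_{g,1}^{\pi}(x_1) \geq b + \tau\}$, invokes its concavity (which rests on the hidden convexity of CMDPs via occupancy measures, citing Paternain et al.), proves $-Y^\star \in \partial v(0)$, and then exploits that $\tilde\pi$ is feasible for the problem perturbed by $\tilde\tau = -[b - V_{g,1}^{\tilde\pi}(x_1)]_+$. What your approach buys is economy: it needs only the strong duality lemma already stated in the paper and no perturbation-function machinery at all. What the paper's approach buys is the sensitivity interpretation of $Y^\star$ as a subgradient of the perturbed value, which is the standard template in constrained optimization and extends naturally to settings with multiple constraints or where one wants statements about the whole family of perturbed problems; but for proving exactly this lemma, your argument is a strict simplification.
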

\begin{proof}
	Let 
	\[
	v(\tau)
	\;=\;
	\maximize_{\pi\,\in\,\Delta(\calA\,\vert\, \calS, H)}\;\{ \,V_{r,1}^{\pi}(x_1) \,\vert\,V_{g,1}^{\pi}(x_1) \,\geq\, b + \tau\, \}.
	\]
	By definition, $v(0) = V_{r,1}^{\pi^\star}(x_1)$. 
	It has been shown as a special case of~\cite[Proposition~1]{paternain2019safe} that $v(\tau)$ is concave. First, we show that $-Y^\star\in\partial v(0)$. By the Lagrangian and the strong duality,
	\[
	\mathcal{L}(\pi,Y^\star) \;\leq\; \maximize_{\pi\,\in\,\Delta(\calA\,\vert\, \calS, H)}\; \mathcal{L}(\pi,Y^\star)  \;=\; \mathcal{D}(Y^\star) \;=\;V_{r,1}^{\pi^\star}(x_1) \;=\;v(0),\; \text{ for all } {\pi\,\in\,\Delta(\calA\,\vert\, \calS, H)}.
	\]
	For any $\pi\in\{ {\pi\in\Delta(\calA\,\vert\, \calS, H)} \,\vert\,V_{g,1}^{\pi}(x_1) \geq b + \tau \}$,
	\[
	\begin{array}{rcl}
	v(0) - \tau Y^\star &\geq& \mathcal{L}(\pi,Y^\star) - \tau Y^\star
	\\[0.2cm]
	&=& V_{r,1}^{\pi}(x_1) +Y^\star (V_{g,1}^\pi(x_1)-b) - \tau Y^\star
	\\[0.2cm]
	&=& V_{r,1}^{\pi}(x_1) +Y^\star (V_{g,1}^\pi(x_1)-b-\tau) 
	\\[0.2cm]
	&\geq& V_{r,1}^{\pi}(x_1).
	\end{array}
	\]
	If we maximize the right-hand side of above inequality over $\pi\in\{ {\pi\in\Delta(\calA\,\vert\, \calS, H)} \,\vert\,V_{g,1}^{\pi}(x_1) \geq b + \tau \}$, then
	\[
	v(0) -\tau Y^\star \;\geq\; v(\tau)
	\]
	which show that $-Y^\star\in\partial v(0)$. 
	On the other hand, if we take $\tau = \tilde{\tau}\DefinedAs -(b - V_{g,1}^{\tilde{\pi}}(x_1))_+$, then
	\[
	V_{r,1}^{\tilde{\pi}}(x_1)\;\leq\;V_{r,1}^{\pi^\star}(x_1)\;=\;v(0) \;\leq\; v(\tilde{\tau}).
	\]
	Combing the above two yields 
	\[
	V_{r,1}^{\tilde{\pi}}(x_1)-V_{r,1}^\star(x_1)  \;\leq\;-\tilde{\tau}{Y^\star}.
	\]
	Thus,
	\[
	\begin{array}{rcl}
	\rbr{C^\star - Y^\star} \abr{\tilde{\tau}} &=&  - Y^\star\abr{\tilde{\tau}} + C^\star\abr{\tilde{\tau}}
	\\[0.2cm]
	&=&  \tilde{\tau} Y^\star + C^\star\abr{\tilde{\tau}}
	\\[0.2cm]
	&\leq&   V_{r,1}^{\pi^\star}(x_1) -V_{r,1}^{\tilde{\pi}}(x_1)+ C^\star \abr{\tilde{\tau}}.
	\end{array}
	\]
	
	By our assumption and $\tilde{\tau}= \sbr{b - V_g^{\tilde{\pi}}(\rho)}_+$, 
	\[
	\sbr{b - V_{g,1}^{\tilde{\pi}}(x_1)}_+ \;\leq\; \frac{\delta}{C^\star-Y^\star}
	\;\leq\;\frac{2\delta}{C^\star}.
	\]
\end{proof}

\section{Other Supporting Lemmas}\label{app.sec.support}

First, we state a lemma that is used throughout this paper. 
\begin{lemma}[Performance Difference Lemma]\label{lem.PDL}
	For any two policies $\pi,\pi'\in\Delta(\calA\,\vert\,\calS,H)$, it holds that 
	\[
	V_{\diamond,1}^{\pi'}(x_1^k) \,-\,V_{\diamond,1}^{\pi}(x_1^k) 
	\;=\;
	\mathbb{E}_{\pi'} \sbr{\sum_{h\,=\,1}^H \big\langle{Q_{\diamond,h}^{\pi} (x_h,\cdot)},{\pi_h'(\cdot\,\vert\,x_h) - \pi_h(\cdot\,\vert\,x_h)}\big\rangle\,\big\vert\,x_1  = x_1^k}
	\]
	where $\diamond = r$ or $g$. 
\end{lemma}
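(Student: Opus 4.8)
The plan is to prove the identity by a telescoping argument along a trajectory generated by the policy $\pi'$, combined with the Bellman equation~\eqref{eq.bellman} for the policy $\pi$. Fix $\diamond\in\{r,g\}$ throughout and write the expectation $\mathbb{E}_{\pi'}$ over the random trajectory $\{(x_h,a_h)\}_{h=1}^H$ in which $a_h\sim\pi_h'(\,\cdot\,\vert\, x_h)$ and $x_{h+1}\sim\mathbb{P}_h(\,\cdot\,\vert\, x_h,a_h)$, started at $x_1 = x_1^k$.

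First I would start from the defining expression $V_{\diamond,1}^{\pi'}(x_1^k) = \mathbb{E}_{\pi'}\sbr{\sum_{h=1}^H \diamond_h(x_h,a_h)\,\vert\, x_1 = x_1^k}$ and insert a telescoping sum of the value function of the \emph{other} policy $\pi$ evaluated along this $\pi'$-trajectory. Because $V_{\diamond,H+1}^{\pi}\equiv 0$ by the terminal convention, the sum $\sum_{h=1}^H\big(V_{\diamond,h+1}^{\pi}(x_{h+1}) - V_{\diamond,h}^{\pi}(x_h)\big)$ collapses to $-V_{\diamond,1}^{\pi}(x_1^k)$, which (being deterministic in the fixed initial state) equals its own expectation. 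This yields the clean intermediate identity
\[
V_{\diamond,1}^{\pi'}(x_1^k) - V_{\diamond,1}^{\pi}(x_1^k)
\;=\;
\mathbb{E}_{\pi'}\sbr{\sum_{h=1}^H \big(\diamond_h(x_h,a_h) + V_{\diamond,h+1}^{\pi}(x_{h+1}) - V_{\diamond,h}^{\pi}(x_h)\big)}.
\]

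Next I would simplify each per-step summand by the tower property, conditioning on $x_h$ and integrating out $a_h\sim\pi_h'(\,\cdot\,\vert\,x_h)$ and then $x_{h+1}\sim\mathbb{P}_h$. The key is that the inner conditional expectation of $\diamond_h(x_h,a_h)+V_{\diamond,h+1}^{\pi}(x_{h+1})$ equals $\inner{Q_{\diamond,h}^{\pi}(x_h,\,\cdot\,)}{\pi_h'(\,\cdot\,\vert\, x_h)}$, precisely because $Q_{\diamond,h}^{\pi} = \diamond_h + \mathbb{P}_h V_{\diamond,h+1}^{\pi}$ by the Bellman equation~\eqref{eq.bellman}; note that the transition step uses $V_{\diamond,h+1}^{\pi}$, the value of $\pi$, even though the action is drawn from $\pi'$. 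Subtracting $V_{\diamond,h}^{\pi}(x_h) = \inner{Q_{\diamond,h}^{\pi}(x_h,\,\cdot\,)}{\pi_h(\,\cdot\,\vert\, x_h)}$ turns the summand into $\inner{Q_{\diamond,h}^{\pi}(x_h,\,\cdot\,)}{(\pi_h'-\pi_h)(\,\cdot\,\vert\, x_h)}$, and summing over $h$ produces the claimed formula.

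This argument is essentially routine, so there is no deep obstacle; the only points demanding care are bookkeeping ones. I must keep straight that the trajectory distribution is governed by $\pi'$ while the $Q$-function and value function being telescoped belong to $\pi$, and I must invoke the terminal condition $V_{\diamond,H+1}^{\pi}=0$ at exactly the right place. The mildest technical subtlety is the measure-theoretic justification of the conditioning and tower property when $\calS$ is an infinite measurable space, but this is handled by the filtration structure already in place and requires no additional hypotheses.
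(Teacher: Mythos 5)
Your proof is correct: the telescoping of $V_{\diamond,h}^{\pi}$ along a $\pi'$-trajectory combined with the Bellman equation~\eqref{eq.bellman} is exactly the standard argument for the performance difference lemma. The paper itself gives no self-contained proof and simply defers to Lemma~3.2 of \cite{cai2019provably}, whose proof proceeds by the same telescoping/tower-property computation you describe, so your proposal matches the intended argument and in fact supplies the details the paper omits.
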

\begin{proof}
	See the proof of Lemma~3.2~in~\cite{cai2019provably}.
\end{proof}

Next, we state an useful concentration inequality for the standard self-normalized processes.
\begin{lemma}[Concentration of Self-normalized Processes]\label{lem.CSNP}
	Let $\{\calF_t\}_{t\,=\,0}^\infty$ be a filtration and $\{\eta_t\}_{t\,=\,1}^\infty$ be a $\mathbb{R}$-valued stochastic process such that $\eta_t$ is $\calF_t$-measurable for any $t\geq 0$. Assume that for any $t\geq 0$, conditioning on $\calF_t$, $\eta_t$ is a zero-mean and $\sigma$-subGaussian random variable with the variance proxy $\sigma^2>0$, i.e., $\mathbb{E}\sbr{e^{\lambda \eta_t} \,|\,\calF_t}\leq e^{\lambda^2\sigma^2/2}$ for any $\lambda\in\mathbb{R}$. Let $\{X_t\}_{t\,=\,1}^\infty$ be an $\mathbb{R}^d$-valued stochastic process such that $X_t$ is $\calF_t$-measurable for any $t\geq 0$. Let $Y\in\mathbb{R}^{d\times d}$ be a deterministic and positive-definite matrix. For any $t\geq 0$, we define 
	\[
	\bar Y_t \;\DefinedAs\; Y \,+\, \sum_{\tau\,=\,1}^{t} X_\tau X_\tau^\top \;\text{ and }\; S_t \;=\; \sum_{\tau\,=\,1}^t \eta_\tau X_\tau.
	\]
	Then, for any fixed $\delta\in(0,1)$, it holds with probability at least $1-\delta$ that 
	\[
	\norm{S_t}_{(\bar Y_t)^{-1}}^2 \;\leq\; 2\sigma^2 \log \rbr{ \dfrac{\det\rbr{ \bar{Y}_t}^{1/2} \det\rbr{Y}^{-1/2} }{\delta} }
	\]
	for any $t\geq 0$.
\end{lemma}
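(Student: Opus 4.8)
The plan is to prove this bound via the standard \emph{method of mixtures} (pseudo-maximization) technique for self-normalized martingales. First I would fix a deterministic $\lambda\in\mathbb{R}^d$ and introduce the exponential process
\[
M_t^\lambda \;\DefinedAs\; \exp\Big(\lambda^\top S_t \,-\, \tfrac{\sigma^2}{2}\,\lambda^\top\big(\textstyle\sum_{\tau\,=\,1}^t X_\tau X_\tau^\top\big)\lambda\Big).
\]
Using the conditional $\sigma$-subGaussianity of $\eta_t$ together with the $\calF_t$-measurability of $X_t$, the one-step factor $\exp\big(\lambda^\top X_t\,\eta_t - \tfrac{\sigma^2}{2}(\lambda^\top X_t)^2\big)$ has conditional expectation at most one. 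Hence $(M_t^\lambda)_{t\ge0}$ is a nonnegative supermartingale with $M_0^\lambda=1$, so that $\mathbb{E}[M_t^\lambda]\le 1$ for every $t$.

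Next I would \emph{mix} over $\lambda$: draw $\lambda\sim\mathcal{N}(0,\sigma^2 Y^{-1})$ independently of the process, with law $\mu$, and define $\bar M_t \DefinedAs \int M_t^\lambda\, d\mu(\lambda)$. By Tonelli/Fubini (swapping the conditional expectation with the integral over $\lambda$), the mixture $\bar M_t$ is again a nonnegative supermartingale with $\mathbb{E}[\bar M_t]\le 1$. The integral is Gaussian and can be evaluated in closed form by completing the square, yielding
\[
\bar M_t \;=\; \Big(\tfrac{\det Y}{\det \bar Y_t}\Big)^{1/2}\exp\Big(\tfrac{1}{2\sigma^2}\,\norm{S_t}_{(\bar Y_t)^{-1}}^2\Big),
\]
where $\bar Y_t = Y+\sum_{\tau\,=\,1}^t X_\tau X_\tau^\top$ is exactly the matrix appearing in the statement (here $\bar Y_t\succ0$ since $Y\succ0$, so the weighted norm and $\det\bar Y_t$ are well defined).

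Finally I would apply a maximal inequality for nonnegative supermartingales (Ville's inequality) to $\bar M_t$: since $\mathbb{E}[\bar M_t]\le1$, we get $\mathbb{P}\big(\sup_{t\ge0}\bar M_t \ge 1/\delta\big)\le\delta$. On the complementary event, taking logarithms in the closed form above and rearranging gives
\[
\norm{S_t}_{(\bar Y_t)^{-1}}^2 \;\le\; 2\sigma^2\log\Big(\tfrac{\det(\bar Y_t)^{1/2}\det(Y)^{-1/2}}{\delta}\Big)
\]
simultaneously for all $t\ge0$, which is the claim. The main obstacle is establishing the supermartingale property cleanly under the filtration conventions stated here -- in particular ensuring the one-step conditional bound applies with $X_t$ treated as known -- and justifying the Fubini interchange so that $\bar M_t$ remains a supermartingale after mixing; by contrast, the Gaussian integral and the final rearrangement are routine linear-algebra computations.
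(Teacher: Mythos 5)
Your route is the same as the paper's: the paper does not prove this lemma itself but cites Theorem~1 of Abbasi-Yadkori et al.\ (2011), and the proof of that theorem is exactly the method-of-mixtures argument you outline (one-step conditional sub-Gaussian bound $\Rightarrow$ supermartingale, Fubini to preserve the supermartingale property under mixing, closed-form Gaussian integral, Ville's maximal inequality for the bound uniformly over $t$). Your use of the maximal inequality is also the right way to get the conclusion simultaneously for all $t\geq 0$ on a single event, and your reading of the (literally degenerate) filtration conventions — treating $X_t$ as known and $\eta_t$ as sub-Gaussian conditionally on that information — is the intended one from the cited reference.

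There is, however, one concrete error to fix: the mixing covariance. With your supermartingale $M_t^\lambda=\exp\big(\lambda^\top S_t-\tfrac{\sigma^2}{2}\lambda^\top V_t\lambda\big)$, where $V_t=\sum_{\tau\,=\,1}^{t}X_\tau X_\tau^\top$, mixing against $\lambda\sim\mathcal{N}(0,\Sigma)$ gives
\[
\bar M_t \;=\; \det\!\big(I+\sigma^2\Sigma V_t\big)^{-1/2}\exp\Big(\tfrac12\, S_t^\top\big(\sigma^2 V_t+\Sigma^{-1}\big)^{-1}S_t\Big),
\]
so to make the matrix in the quadratic form proportional to $\bar Y_t=Y+V_t$ you need $\Sigma^{-1}=\sigma^2 Y$, i.e.\ $\lambda\sim\mathcal{N}\big(0,\sigma^{-2}Y^{-1}\big)$, not $\mathcal{N}(0,\sigma^2 Y^{-1})$ as you wrote. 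With your stated choice the exponent involves $\big(V_t+\sigma^{-4}Y\big)^{-1}$ and the determinant factor is $\det\!\big(I+\sigma^4 Y^{-1}V_t\big)^{-1/2}$, which agrees with your claimed closed form only when $\sigma=1$. With the corrected covariance one gets $\sigma^2 V_t+\Sigma^{-1}=\sigma^2\bar Y_t$ and $\det(\Sigma\,\sigma^2\bar Y_t)^{-1/2}=\big(\det Y/\det\bar Y_t\big)^{1/2}$, which is exactly the expression you state, and the rest of your argument (Ville's inequality plus rearrangement) goes through verbatim.
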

\begin{proof}
	See the proof of Theorem~1~in~\cite{abbasi2011improved}.
\end{proof}	

The above concentration inequality can be customized to our setting in the following form without using covering number arguments as in~\cite{jin2019provably}. 
\begin{lemma}\label{lem.SNP}
	Let $\lambda=1$ in Algorithm~\ref{alg:LSVI}. Fix $\delta\in(0,1)$. Then, for any $(k,h)\in[K]\times[H]$ it holds for $\diamond = r$ or $g$ that 
	\[
	\norm{ \sum_{\tau\,=\,1}^{k-1} \phi_{\diamond,h}^\tau(x_h^\tau,a_h^\tau)^\top \rbr{ V_{\diamond,h+1}^k (x_{h+1}^\tau) - (\mathbb{P}_hV_{\diamond,h+1}^k )(x_h^\tau,a_h^\tau)} }_{(\Lambda_{\diamond,h}^k)^{-1}}
	\;
	\leq
	\;
	C \sqrt{dH^2\log\rbr{\frac{dT}{\delta}}}
	\] 
	with probability at least $1-{\delta}/{2}$ where $C>0$ is an absolute constant.
\end{lemma}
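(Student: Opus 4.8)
The plan is to prove the bound by a \emph{uniform} self-normalized concentration argument, because the value function $V_{\diamond,h+1}^k$ appearing in the innovations is data-dependent: it is assembled from the trajectories of all episodes $1,\dots,k-1$, and in particular its values at the next-states $x_{h+1}^\tau$ are correlated with those very next-states through the step-$(h{+}1)$ regression computed inside Algorithm~\ref{alg:LSVI}. Consequently the sequence $\eta_\tau\DefinedAs V_{\diamond,h+1}^k(x_{h+1}^\tau)-(\mathbb{P}_hV_{\diamond,h+1}^k)(x_h^\tau,a_h^\tau)$ is \textbf{not} a martingale-difference sequence, so Lemma~\ref{lem.CSNP} cannot be applied to it directly. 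First I would therefore freeze the value function, prove the bound for each fixed candidate, and then make it uniform over a net of the class to which $V_{\diamond,h+1}^k$ belongs.

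\textbf{Step 1 (fixed $V$).} For an arbitrary \emph{deterministic} $V\colon\calS\to[0,H]$, set $\eta_\tau^V\DefinedAs V(x_{h+1}^\tau)-(\mathbb{P}_hV)(x_h^\tau,a_h^\tau)$ and $X_\tau\DefinedAs \phi_{\diamond,h}^\tau(x_h^\tau,a_h^\tau)$. Since $\phi_{\diamond,h}^\tau$ is assembled only from episodes $1,\dots,\tau-1$ and $(x_h^\tau,a_h^\tau)$ is revealed by step $h$ of episode $\tau$, the vector $X_\tau$ is $\calF_{h,1}^\tau$-measurable; and because $x_{h+1}^\tau\sim\mathbb{P}_h(\,\cdot\,\vert\,x_h^\tau,a_h^\tau)$ with $V$ fixed, $\eta_\tau^V$ is conditionally mean-zero given $\calF_{h,1}^\tau$ and bounded, hence $\tfrac{H}{2}$-subGaussian. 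I would then apply Lemma~\ref{lem.CSNP} with $Y=\lambda I$, noting that $\bar Y_{k-1}=\lambda I+\sum_{\tau=1}^{k-1}X_\tau X_\tau^\top=\Lambda_{\diamond,h}^k$, to get $\norm{\sum_{\tau=1}^{k-1}X_\tau\eta_\tau^V}_{(\Lambda_{\diamond,h}^k)^{-1}}^2\le \tfrac{H^2}{2}\log\!\big(\det(\Lambda_{\diamond,h}^k)^{1/2}\det(\lambda I)^{-1/2}/\delta'\big)$. Using $\norm{\phi_{\diamond,h}^\tau}\le\sqrt dH$ from Assumption~\ref{as.linearMDP} gives $\Lambda_{\diamond,h}^k\preceq(\lambda+TdH^2)I$, hence $\log\big(\det(\Lambda_{\diamond,h}^k)/\det(\lambda I)\big)\le d\log(1+TdH^2/\lambda)$, so the fixed-$V$ bound is $O\!\big(H\sqrt{d\log(TdH^2/\lambda)+\log(1/\delta')}\big)$.

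\textbf{Step 2 (covering and union bound).} Next I would identify the class $\calV$ containing $V_{\diamond,h+1}^k$: by lines~11--12 of Algorithm~\ref{alg:LSVI}, $V_{\diamond,h+1}^k(x)=\langle Q(x,\cdot),\pi_{h+1}^k(\cdot\vert x)\rangle_\calA$ with $Q$ of the truncated form $\min(\varphi^\top u+(\phi^k)^\top w+(\Gamma+\Gamma),\,H-h)^+$, parametrized by $u\in\mathbb{R}^{d_2}$, $w\in\mathbb{R}^{d_1}$ and the bonus matrices, all ranging over bounded sets (the norms of $u,w$ are controlled through $\norm{\theta}\le\sqrt dH$ and $\Lambda\succeq\lambda I$). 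I would construct an $\epsilon$-net $\calV_\epsilon$ of $\calV$ in sup-norm with $\log\abr{\calV_\epsilon}=O\!\big(d\log(dT/(\lambda\epsilon))\big)$, apply Step~1 to every net element with $\delta'=\delta/(2\abr{\calV_\epsilon})$, and union bound. For the realized $V_{\diamond,h+1}^k$ I pick $V'\in\calV_\epsilon$ with $\norm{V_{\diamond,h+1}^k-V'}_\infty\le\epsilon$; then $\abr{\eta_\tau^{V_{\diamond,h+1}^k}-\eta_\tau^{V'}}\le2\epsilon$, so the residual $\norm{\sum_\tau X_\tau(\eta_\tau^{V_{\diamond,h+1}^k}-\eta_\tau^{V'})}_{(\Lambda_{\diamond,h}^k)^{-1}}$ is $O(\epsilon T\sqrt d\,H)$, which is $O(\sqrt d\,H)$ for $\epsilon\sim 1/T$ and thus absorbed into the constant. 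Combining the net estimate—whose dominant $\log\abr{\calV_\epsilon}=O(d)$ and $\log$-det $=O(d)$ contributions are both linear in $d$—with this discretization error yields the stated $C\sqrt{dH^2\log(dT/\delta)}$; the same argument applies verbatim to $\diamond=g$.

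\textbf{Main obstacle.} I expect the hard part to be the covering step: one must exhibit a finite-dimensional, boundedly-parametrized description of $\calV$ so that $\log\abr{\calV_\epsilon}=O\!\big(d\,\mathrm{polylog}(dT/\delta)\big)$, and—crucially—control the discretization error in the \emph{data-dependent} $(\Lambda_{\diamond,h}^k)^{-1}$-weighted norm rather than in a fixed norm. The boundedness of the regression vectors together with $\Lambda_{\diamond,h}^k\succeq\lambda I$ keep both the parameter ranges and the log-determinant at order $d$, which is precisely what preserves the $\sqrt d$ (rather than $d$) scaling after the square root. I would finally remark that if the innovation is instead read with the running value $V_{\diamond,h+1}^\tau$ — the value-targeted-regression quantity actually invoked in Section~\ref{subsec.ucb-bandit} — the sequence becomes a genuine martingale difference and Lemma~\ref{lem.CSNP} applies directly, bypassing the net and recovering the same rate.
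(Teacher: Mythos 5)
Your closing remark is, in fact, the paper's entire proof: Lemma~\ref{lem.SNP} is established by direct appeal to the self-normalized concentration of Lemma~\ref{lem.CSNP} (via Lemma~D.1 of \cite{cai2019provably}), exploiting exactly the martingale structure you describe. The superscript $k$ in the lemma statement is a typo for $\tau$ — compare line~5 of Algorithm~\ref{alg:LSVI}, which regresses $V_{\diamond,h+1}^\tau(x_{h+1}^\tau)$ on $\phi_{\diamond,h}^\tau$, and the invocation in Section~\ref{subsec.ucb-bandit}, which applies the lemma to $\sum_{\tau=1}^{k-1}\phi_{\diamond,h}^\tau(x_h^\tau,a_h^\tau)\big(V_{\diamond,h+1}^\tau(x_{h+1}^\tau)-(\mathbb{P}_hV_{\diamond,h+1}^\tau)(x_h^\tau,a_h^\tau)\big)$. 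Since $V_{\diamond,h+1}^\tau$ is $\calF_{1,1}^\tau$-measurable and $x_{h+1}^\tau\sim\mathbb{P}_h(\cdot\,\vert\,x_h^\tau,a_h^\tau)$, the innovations form a genuine martingale difference sequence adapted to $\{\calF_{h,1}^\tau\}$, and Lemma~\ref{lem.CSNP} with $Y=\lambda I$, $\sigma=H/2$, and $\log\det(\Lambda_{\diamond,h}^k)/\det(\lambda I)\le d\log\big((dH^2K+\lambda)/\lambda\big)$ gives the bound with no covering step — the text immediately preceding the lemma announces this (``without using covering number arguments as in \cite{jin2019provably}''). So your Step~1 filtration/measurability reasoning, the identification $\bar Y_{k-1}=\Lambda_{\diamond,h}^k$, and the log-determinant estimate are all correct and are exactly the paper's calculation; what the paper does \emph{not} do is your Step~2.

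Your main route (covering for the literal $V_{\diamond,h+1}^k$ reading) contains a genuine quantitative gap: the claim $\log\abr{\calV_\epsilon}=O\big(d\log(dT/(\lambda\epsilon))\big)$ is too small for the class actually realized by Algorithm~\ref{alg:LSVI}. The truncated $Q$-functions contain the quadratic bonus terms $\beta\big(\varphi^\top(\Lambda_h^k)^{-1}\varphi\big)^{1/2}$ and $\beta\big((\phi_{\diamond,h+1}^k)^\top(\Lambda_{\diamond,h+1}^k)^{-1}\phi_{\diamond,h+1}^k\big)^{1/2}$, and covering the set of $d\times d$ matrices $(\Lambda)^{-1}$ costs $O\big(d^2\log(\cdot)\big)$ (this is the standard count, e.g., Lemma~D.6 of \cite{jin2019provably}); moreover in the linear \emph{kernel} setting the feature map $\phi_{\diamond,h+1}^k(x,a)=\int_\calS\psi(x,a,x')V_{\diamond,h+2}^k(x')dx'$ is itself determined by the deeper value function, so the net must be built recursively over $h$ rather than by ranging $(u,w)$ and two matrices over bounded sets as you state. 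With the correct $\log\abr{\calV_\epsilon}=\Omega(d^2)$, your Step~2 yields $O\big(dH\sqrt{\log(dT/\delta)}\big)$ — an extra $\sqrt d$ over the claimed $C\sqrt{dH^2\log(dT/\delta)}$ — which is precisely why \cite{jin2019provably} must take $\beta\simeq dH$ while this paper achieves $\beta\simeq\sqrt d\,H$ by exploiting the adaptedness of $V^\tau$. In short: your fallback remark is the correct (and intended) proof; your primary argument, as written, does not deliver the stated constant.
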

\begin{proof}
	See the proof of Lemma~D.1~in~\cite{cai2019provably}.
\end{proof}


\begin{lemma}[Elliptical Potential Lemma]
	\label{lem.bdsums}
	Let $\{\phi_t\}_{t=1}^\infty$ be a sequence of functions in $\mathbb{R}^d$ and $\Lambda_0\in\mathbb{R}^{d\times d}$ be a positive definite matrix. Let $\Lambda_t = \Lambda_0+\sum_{i\,=\,1}^{t-1}\phi_i \phi_i^\top$. Assume $\norm{\phi_t}_2\leq 1$ and $\lambda_{\normalfont\text{min}} \rbr{\Lambda_0}\geq 1$. Then for any $t\geq 1$ it holds that
	\[
	\log\rbr{\frac{\det\rbr{\Lambda_{t+1}}}{\det\rbr{\Lambda_{1}}}} 
	\;\leq\;
	\sum_{i\,=\,1}^t \phi_i^\top \Lambda_i^{-1}\phi_i
	\;\leq\;
	2\log\rbr{\frac{\det\rbr{\Lambda_{t+1}}}{\det\rbr{\Lambda_{1}}}}.
	\]
\end{lemma}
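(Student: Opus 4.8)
The plan is to exploit the rank-one update structure of the sequence $\{\Lambda_t\}$ together with the matrix determinant lemma, reducing the matrix statement to a telescoping sum of scalar logarithms, and then to close the two-sided gap with a pair of elementary scalar inequalities applied termwise.

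First I would record the recursion $\Lambda_{i+1} = \Lambda_i + \phi_i\phi_i^\top$, which is immediate from the definition $\Lambda_t = \Lambda_0 + \sum_{i\,=\,1}^{t-1}\phi_i\phi_i^\top$. Applying the matrix determinant lemma $\det(A + uu^\top) = \det(A)\,(1 + u^\top A^{-1} u)$ with $A = \Lambda_i$ and $u = \phi_i$ gives $\det(\Lambda_{i+1}) = \det(\Lambda_i)\,(1 + a_i)$, where I abbreviate $a_i \DefinedAs \phi_i^\top \Lambda_i^{-1}\phi_i \geq 0$. Taking logarithms and summing from $i=1$ to $t$ telescopes to $\sum_{i\,=\,1}^t \log(1 + a_i) = \log\det(\Lambda_{t+1}) - \log\det(\Lambda_1)$, using $\Lambda_1 = \Lambda_0$. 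This is precisely the quantity appearing on the two outer sides of the claimed chain.

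The next step is to confine each $a_i$ to the interval $[0,1]$. Since $\lambda_{\min}(\Lambda_0)\geq 1$ and every added term $\phi_i\phi_i^\top$ is positive semidefinite, we have $\Lambda_i \succeq I$ for all $i$, hence $\Lambda_i^{-1}\preceq I$; combined with $\norm{\phi_i}_2\leq 1$ this yields $0 \leq a_i = \phi_i^\top\Lambda_i^{-1}\phi_i \leq \norm{\phi_i}_2^2 \leq 1$. With this range secured, the two bounds follow by applying scalar estimates to each summand. For the lower bound I would use $\log(1+x)\leq x$ (valid for $x > -1$), obtaining $\log\bigl(\det(\Lambda_{t+1})/\det(\Lambda_1)\bigr) = \sum_i \log(1+a_i) \leq \sum_i a_i$. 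For the upper bound I would use $x \leq 2\log(1+x)$ on $[0,1]$, which one checks by noting that $f(x) = 2\log(1+x)-x$ has $f(0)=0$ and $f'(x) = (1-x)/(1+x)\geq 0$ on $[0,1]$, so $f\geq 0$ there; summing gives $\sum_i a_i \leq 2\sum_i\log(1+a_i) = 2\log\bigl(\det(\Lambda_{t+1})/\det(\Lambda_1)\bigr)$.

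There is no genuine obstacle here: once the matrix determinant lemma and the two scalar inequalities are in place, the argument is a direct computation. The only point requiring care is the calibration of the constant $2$ in the upper bound, which hinges on the uniform bound $a_i\leq 1$; this is exactly where the regularization hypothesis $\lambda_{\min}(\Lambda_0)\geq 1$ together with $\norm{\phi_i}_2\leq 1$ enters, and without it the multiplicative constant would have to be weakened.
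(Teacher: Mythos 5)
Your proof is correct and is essentially the same argument as the one the paper invokes by citation (the paper's ``proof'' of this lemma is just a pointer to Lemma~D.2 of the cited works, whose standard proof is exactly your route): the matrix determinant lemma giving $\det(\Lambda_{i+1})=\det(\Lambda_i)(1+\phi_i^\top\Lambda_i^{-1}\phi_i)$, telescoping the logarithms, and the scalar bounds $\log(1+x)\leq x$ and $x\leq 2\log(1+x)$ on $[0,1]$, the latter justified by $\lambda_{\min}(\Lambda_0)\geq 1$ and $\|\phi_i\|_2\leq 1$. Your writeup is a faithful, self-contained version of that standard proof, with the role of the hypotheses correctly identified.
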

\begin{proof}
	See the proof of Lemma~D.2~in~\cite{jin2019provably} or~\cite{cai2019provably}.
\end{proof}

\begin{lemma}[Pushback Property of KL-divergence]
	\label{lem.pushback}
	Let $f:\Delta\to\mathbb{R}$ be a concave function where $\Delta$ is a probability simplex in $\mathbb{R}^d$. Let $\Delta^o$ be the interior of $\Delta$. Let $x^\star = \argmax_{x\,\in\,\Delta} f(x)-\alpha^{-1} D(x,y)$ for a fixed $y\in\Delta^o$ and $\alpha>0$. Then, for any $z\in\Delta$,
	\[
	f(x^\star) 
	\,-\,
	\frac{1}{\alpha} D(x^\star,y) 
	\;\geq\;
	f(z) 
	\,-\,
	\frac{1}{\alpha} D(z,y) 
	\,+\,
	\frac{1}{\alpha} D(z,x^\star).
	\]
\end{lemma}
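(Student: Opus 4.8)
The plan is to recognize the objective defining $x^\star$ as a Bregman-regularized concave maximization and to combine two standard facts: the three-point (generalized Pythagorean) identity for Bregman divergences, and the first-order optimality condition satisfied by $x^\star$. Write $\omega(x)\DefinedAs\sum_{i}x_i\log x_i$ for the negative entropy on $\Delta$, so that the KL-divergence is exactly the Bregman divergence generated by $\omega$, namely $D(x,y)=\omega(x)-\omega(y)-\langle\nabla\omega(y),x-y\rangle$. The maximand defining $x^\star$ is then $g(x)\DefinedAs f(x)-\alpha^{-1}D(x,y)$, which is concave on $\Delta$ since $f$ is concave and $D(\cdot,y)$ is convex.

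First I would record the three-point identity. A direct expansion of $D(z,y)$, $D(z,x^\star)$, and $D(x^\star,y)$ through the definition above makes all the $\omega$-value terms cancel and leaves
\[
D(z,y)-D(z,x^\star)-D(x^\star,y)
\;=\;
\big\langle \nabla\omega(x^\star)-\nabla\omega(y),\, z-x^\star\big\rangle ,
\]
an algebraic identity valid for every $z\in\Delta$ and every pair $x^\star,y$ at which $\nabla\omega$ is defined.

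Next I would invoke optimality. Because $\omega$ is essentially smooth, its gradient $\nabla\omega(x)=(1+\log x_i)_i$ blowing up as any coordinate tends to $0$, the maximizer $x^\star$ of $g$ lies in the interior $\Delta^o$: at any boundary point the directional derivative of $g$ into the simplex is $+\infty$, so no boundary point can be optimal. Hence $\nabla\omega(x^\star)$ is finite, and the first-order condition for concave maximization over the convex set $\Delta$ is the variational inequality $\langle\nabla g(x^\star),z-x^\star\rangle\le 0$ for all $z\in\Delta$. Substituting $\nabla g(x^\star)=\nabla f(x^\star)-\alpha^{-1}(\nabla\omega(x^\star)-\nabla\omega(y))$ gives
\[
\langle\nabla f(x^\star),\,z-x^\star\rangle
\;\le\;
\alpha^{-1}\big\langle \nabla\omega(x^\star)-\nabla\omega(y),\,z-x^\star\big\rangle .
\]
Chaining the concavity bound $f(z)-f(x^\star)\le\langle\nabla f(x^\star),z-x^\star\rangle$ with this inequality and then with the three-point identity yields $f(z)-f(x^\star)\le\alpha^{-1}\big(D(z,y)-D(z,x^\star)-D(x^\star,y)\big)$, which rearranges into the claimed inequality.

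The main obstacle is the optimality step: one must justify that $x^\star\in\Delta^o$ so that $\nabla\omega(x^\star)$ is finite and the stationarity condition is well-posed. I would handle the equality constraint $\sum_i x_i=1$ implicitly through the variational-inequality formulation (the feasible directions $z-x^\star$ are tangent to the simplex), thereby avoiding an explicit Lagrange multiplier. The remaining pieces — the three-point identity and the concavity inequality — are routine.
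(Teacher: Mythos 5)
Your proposal is correct in substance, and it is worth noting that it is self-contained where the paper is not: the paper's entire ``proof'' of this lemma is a pointer to Lemma~14 of~\cite{wei2019online}. What you have written is the standard argument that such references contain --- recognize $D(\cdot,y)$ as the Bregman divergence of negative entropy $\omega$, use the first-order optimality of $x^\star$ over the convex set $\Delta$ in variational-inequality form, and convert the resulting linear term via the three-point (generalized Pythagorean) identity $D(z,y)-D(z,x^\star)-D(x^\star,y)=\langle\nabla\omega(x^\star)-\nabla\omega(y),z-x^\star\rangle$. Your algebra for the identity and the final chaining both check out exactly, and your interiority argument (the entropic gradient blows up at the boundary of the simplex, while the inward directional derivative of a finite concave $f$ is bounded below by $f(z)-f(x)$, so no boundary point can be optimal) is the right way to make the stationarity condition well posed; handling the affine constraint through feasible directions rather than a multiplier is also fine.

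One imprecision should be patched before this stands as a proof: the lemma assumes only that $f$ is concave, yet you invoke $\nabla f(x^\star)$ and the gradient form of the concavity inequality. A concave function on $\Delta$ need not be differentiable. The fix is immediate given your own interiority step: since $x^\star$ lies in the (relative) interior of $\Delta$, the superdifferential $\partial f(x^\star)$ is nonempty, and since $D(\cdot,y)$ is differentiable there, the optimality condition for the sum gives some $u\in\partial f(x^\star)$ with $\bigl\langle u-\alpha^{-1}\bigl(\nabla\omega(x^\star)-\nabla\omega(y)\bigr),\,z-x^\star\bigr\rangle\le 0$ for all $z\in\Delta$. Replacing $\nabla f(x^\star)$ by $u$ throughout, the supergradient inequality $f(z)-f(x^\star)\le\langle u,\,z-x^\star\rangle$ closes the argument verbatim. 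With that substitution your proof is complete, and it covers the paper's use case (where $f$ is linear) with room to spare.
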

\begin{proof}
	See the proof of Lemma~14~in~\cite{wei2019online}.
\end{proof}

\begin{lemma}[Bounded KL-divergence Difference]
	\label{lem.mix}
	Let $\pi_1,\pi_2$ be two probability distributions in $\Delta(\calA)$. Let $\tilde{\pi}_2 = (1-\theta)\pi_2+\one{\theta}/{|\calA|}$ where $\theta\in(0,1]$. Then, 
	\[
	D\rbr{\pi_1\,\vert\, \tilde{\pi}_2} \,-\, D\rbr{\pi_1\,\vert\, \pi_2} 
	\;\leq\;
	\theta \log |\calA|.
	\]  
	Moreover, we have an uniform bound, $D\rbr{\pi_1\,\vert\, \tilde{\pi}_2}\leq \log({|\calA|}/{\theta})$.
\end{lemma}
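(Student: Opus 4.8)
The plan is to prove both inequalities by reducing everything to a pointwise lower bound on the mixed distribution $\tilde{\pi}_2(a) = (1-\theta)\pi_2(a) + \theta/|\calA|$ and then integrating against $\pi_1$. First I would write the difference of the two divergences as a single sum, so that the $\pi_1(a)\log\pi_1(a)$ terms cancel and only the ratio of second arguments survives:
\[
D\rbr{\pi_1\,\vert\, \tilde{\pi}_2} - D\rbr{\pi_1\,\vert\, \pi_2} \;=\; \sum_{a\,\in\,\calA} \pi_1(a)\,\log\frac{\pi_2(a)}{\tilde{\pi}_2(a)}.
\]
It then remains to control $\pi_2(a)/\tilde{\pi}_2(a)$ uniformly in $a$.

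The key step, and the main obstacle, is choosing the right pointwise lower bound on $\tilde{\pi}_2(a)$. The naive estimate $\tilde{\pi}_2(a)\ge(1-\theta)\pi_2(a)$ only yields $-\log(1-\theta)$, which diverges as $\theta\to1$ and is far too weak for the claimed $\theta\log|\calA|$ bound; symmetrically, $\tilde{\pi}_2(a)\ge\theta/|\calA|$ alone is too lossy for the first inequality. Instead I would apply the weighted arithmetic--geometric mean inequality to the convex combination, which interpolates between these two trivial bounds:
\[
\tilde{\pi}_2(a) \;=\; (1-\theta)\pi_2(a) + \theta\,\tfrac{1}{|\calA|} \;\geq\; \pi_2(a)^{1-\theta}\rbr{\tfrac{1}{|\calA|}}^{\theta}.
\]
Consequently $\pi_2(a)/\tilde{\pi}_2(a)\le\rbr{|\calA|\,\pi_2(a)}^{\theta}$, and taking logarithms gives the pointwise estimate $\log\big(\pi_2(a)/\tilde{\pi}_2(a)\big)\le \theta\log|\calA| + \theta\log\pi_2(a)$.

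Summing this against $\pi_1(a)$ and discarding the term $\theta\sum_a \pi_1(a)\log\pi_2(a)$, which is nonpositive because each $\pi_2(a)\le1$ forces $\log\pi_2(a)\le0$, immediately produces the first bound $\theta\log|\calA|$. For the uniform bound I would return to the cruder estimate $\tilde{\pi}_2(a)\ge\theta/|\calA|$ (dropping the nonnegative term $(1-\theta)\pi_2(a)$), so that
\[
D\rbr{\pi_1\,\vert\, \tilde{\pi}_2} \;=\; \sum_{a\,\in\,\calA}\pi_1(a)\log\frac{\pi_1(a)}{\tilde{\pi}_2(a)} \;\leq\; \log\frac{|\calA|}{\theta} + \sum_{a\,\in\,\calA}\pi_1(a)\log\pi_1(a),
\]
and the remaining sum is again nonpositive since $\log\pi_1(a)\le0$ for every $a$. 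The only genuine subtlety is the AM--GM step; everything else is bookkeeping exploiting the nonpositivity of $\log$ on $[0,1]$, and the statement is used in the excerpt only for the strictly positive policies produced by the softmax update, so degenerate cases where some $\pi_2(a)=0$ need not be treated separately.
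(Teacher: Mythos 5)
Your proposal is correct, and it is worth noting that it is more self-contained than what the paper itself offers: the paper does not prove Lemma~\ref{lem.mix} at all, but simply cites Lemma~31 of the reference \cite{wei2019online}. Your two-step argument fills that gap completely. The cancellation $D\rbr{\pi_1\,\vert\,\tilde{\pi}_2}-D\rbr{\pi_1\,\vert\,\pi_2}=\sum_{a}\pi_1(a)\log\rbr{\pi_2(a)/\tilde{\pi}_2(a)}$ is exact, the weighted AM--GM bound $(1-\theta)\pi_2(a)+\theta/|\calA|\geq \pi_2(a)^{1-\theta}\rbr{1/|\calA|}^{\theta}$ (equivalently, concavity of $\log$ applied to the convex combination) gives precisely the pointwise estimate $\log\rbr{\pi_2(a)/\tilde{\pi}_2(a)}\leq\theta\log|\calA|+\theta\log\pi_2(a)$, and discarding the nonpositive terms $\sum_a\pi_1(a)\log\pi_2(a)$ and $\sum_a\pi_1(a)\log\pi_1(a)$ yields the two claimed bounds; your handling of the uniform bound via the cruder floor $\tilde{\pi}_2(a)\geq\theta/|\calA|$ is also the natural one. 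You are also right that no separate treatment of $\pi_2(a)=0$ is needed: in that case $D\rbr{\pi_1\,\vert\,\pi_2}=+\infty$ while $D\rbr{\pi_1\,\vert\,\tilde{\pi}_2}$ remains finite, so the first inequality is vacuous, and in the paper's usage the policies are strictly positive softmax mixtures anyway. In short, where the paper delegates to an external lemma, your argument is a complete elementary proof of the same statement, resting on a single application of weighted AM--GM (or, equivalently, Jensen's inequality for $\log$).
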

\begin{proof}
	See the proof of Lemma~31~in~\cite{wei2019online}.
\end{proof}
\end{document}